\def\eqref#1{equation~\ref{#1}}
\def\1{\bm{1}}
\DeclareMathAlphabet{\mathsfit}{\encodingdefault}{\sfdefault}{m}{sl}
\SetMathAlphabet{\mathsfit}{bold}{\encodingdefault}{\sfdefault}{bx}{n}
\newtheorem{defn}{Definition}
\newtheorem{prop}{Proposition}
\newtheorem{thm}{Theorem}
\newtheorem{lem}{Lemma}
\newtheorem{rmk}{Remark}
\newcommand{\RR}{\mathbb{R}}
\newcommand{\vecc}{\boldsymbol}
\definecolor{dkgray}{rgb}{99,99,99}
\definecolor{darkred}{RGB}{228,26,28}
\definecolor{darkblue}{RGB}{44,127,184}
\definecolor{magentaCB}{RGB}{221,28,119}
\definecolor{morange}{RGB}{255, 187, 0}
\definecolor{mblue}{RGB}{ 0, 161, 241}
\newcommand{\cmark}{\textcolor{green}{\ding{51}}}%
\newcommand{\xmark}{\textcolor{red}{\ding{55}}}%
\newcommand{\printfnsymbol}[1]{%
  \textsuperscript{\@fnsymbol{#1}}%
}
\title{Noisy Feature Mixup}
\author{
  Soon Hoe Lim\thanks{equal contributions}  \\
  Nordita, KTH Royal Institute of Technology \\
  and Stockholm University\\
  \texttt{soon.hoe.lim@su.se} \\
  \And 
   \hspace{2cm} N. Benjamin Erichson\printfnsymbol{1}  \hspace{2cm}
  \\
  \hspace{0.9cm} University of Pittsburgh\hspace{1cm}
  \\
 \hspace{2cm} \texttt{erichson@pitt.edu} \hspace{2cm}
  \\ 
  \And
  Francisco Utrera  \\
  University of Pittsburgh and ICSI
  \\
  \texttt{utrerf@berkeley.edu} 
  \And
  Winnie Xu  \\
  University of Toronto 
  \\
  \texttt{winniexu@cs.toronto.edu}
  \And
  Michael W. Mahoney   \\
  ICSI and  UC Berkeley \\
  \texttt{mmahoney@stat.berkeley.edu}
}
\begin{document}

\maketitle

\begin{abstract}
We introduce Noisy Feature Mixup (NFM), an inexpensive yet effective method for data augmentation that combines the best of interpolation based training and noise injection schemes.
Rather than training with convex combinations of pairs of examples and their labels, we use noise-perturbed convex combinations of pairs of  data points in both input and feature space.
This method includes mixup and manifold mixup as special cases, but it has additional advantages, including better smoothing of decision boundaries and enabling improved model robustness. 
We provide theory to understand this as well as the implicit regularization effects of NFM.
Our theory is supported by empirical results, demonstrating the advantage of NFM, as compared to mixup and manifold mixup. We show that residual networks and vision transformers trained with NFM have favorable trade-offs between predictive accuracy on clean data and robustness with respect to various types of data perturbation across a range of computer vision benchmark datasets.\\
\end{abstract}

\section{Introduction} \label{sec_intro}
Mitigating over-fitting and improving generalization on test data are central goals in machine learning. 
One approach to accomplish this is regularization, which can be either data-agnostic or data-dependent (e.g., explicitly requiring the use of domain knowledge).
Noise injection is a typical example of data-agnostic regularization \cite{bishop1995training}, where noise can be injected into the input data \cite{an1996effects}, or the activation functions \cite{gulcehre2016noisy}, or the hidden layers of neural networks~\cite{camuto2020explicit, lim2021noisy}. 

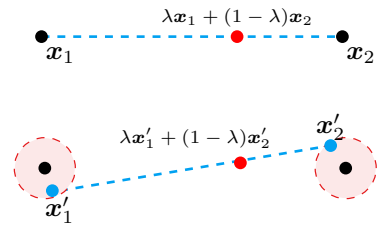
\begin{wrapfigure}{r}{0.42\textwidth}
\vspace{-0.4cm}
	\begin{center}
		\begin{subfigure}[t]{0.35\textwidth}
			\begin{tikzpicture}[auto,node distance = 2cm,>=latex']
				\node at (-2.4,0) [circle, white, fill, inner sep=0.1pt]{};
				
				\draw [dashed, mblue, line width=1.0pt]  (-2,0) -- (2,0);
				\node at (-2,0) [circle, black, fill, inner sep=1.7pt]{};
				\node at (2,0) [circle, black, fill, inner sep=1.7pt]{};				
				\node at (.6,0) [circle, red, fill, inner sep=1.7pt]{};				
				
				\node at (-1.75,-0.25) {$\boldsymbol{x}_1$};
				\node at (2.25,-0.25) {$\boldsymbol{x}_2$};	
				\node at (0.6,+0.25) {\scriptsize $\lambda\boldsymbol{x}_1+(1-\lambda)\boldsymbol{x}_2$};					
			\end{tikzpicture}
		\end{subfigure}   \vspace{+0.4cm}
		
		\begin{subfigure}[t]{0.35\textwidth}
			\begin{tikzpicture}[auto,node distance = 2cm,>=latex']
				\node at (-2.4,0) [circle, white, fill, inner sep=0.1pt]{};
								
				\draw [dashed, mblue, line width=1.0pt]  (-1.8,-0.3) -- (1.8,0.3);
				\node at (-2,0) [circle, black, fill, inner sep=1.7pt]{};
				\node at (2,0) [circle, black, fill, inner sep=1.7pt]{};				
				\node at (.6,0.07) [circle, red, fill, inner sep=1.7pt]{};

				\draw[dashed, darkred, thick,] (-2,0) circle (0.4cm);  
				\node at (-2,0) [circle, darkred!10, fill, inner sep=8pt]{};
				
				\draw[dashed, darkred, thick,] (2,0) circle (0.4cm);  
				\node at (2,0) [circle, darkred!10, fill, inner sep=8pt]{};
				\node at (-2,0) [circle, black, fill, inner sep=1.7pt]{};
				\node at (2,0) [circle, black, fill, inner sep=1.7pt]{};				
				
				\node at (-1.9,-0.3) [circle, mblue, fill, inner sep=1.7pt]{};
				\node at (1.8,0.3) [circle, mblue, fill, inner sep=1.7pt]{};
				
				\node at (-1.8,-0.55) {$\boldsymbol{x}_1'$};
				\node at (1.8,0.55) {$\boldsymbol{x}_2'$};	
				\node at (0.0,+0.4) {\scriptsize 	$\lambda\boldsymbol{x}_1'+(1-\lambda)\boldsymbol{x}_2'$};				
				
			\end{tikzpicture}
		\end{subfigure}   
	\end{center}\vspace{-0.3cm}
	\caption{An illustration of how two data points, ${\bf x}_1$ and ${\bf x}_2$, are transformed in mixup (top) and NFM with $\mathcal{S} := \{0\}$ (bottom).}	
	\label{fig1}
\end{wrapfigure}

Data augmentation constitutes a different class of regularization methods \cite{baird1992document, chapelle2001vicinal, decoste2002training}, which can also be either data-agnostic or data-dependent. Data augmentation involves training a model with not just the original data, but also with additional data that is properly transformed, and it has led to state-of-the-art results in image recognition \cite{cirecsan2010deep, krizhevsky2012imagenet}. The recently-proposed data-agnostic method, mixup \cite{zhang2017mixup}, trains a model on linear interpolations of a random pair of examples and their corresponding labels, thereby encouraging the model to behave linearly in-between training examples. Both noise injection and mixup have been shown to impose smoothness  and increase model robustness to data perturbations \cite{zhang2020does,carratino2020mixup, lim2021noisy}, 
which is critical for many safety and sensitive applications \cite{goodfellow2018making, madry2017towards}.

In this paper, we propose and study a simple, inexpensive  yet effective data augmentation method, which we call {\it Noisy Feature Mixup} (NFM). 
This method combines mixup and noise injection, thereby inheriting the benefits of both methods, and can be seen as a generalization of input mixup \cite{zhang2017mixup} and manifold mixup \cite{verma2019manifold}. 
When compared to noise injection and mixup, NFM imposes regularization on the largest natural region surrounding the dataset (see Fig.~\ref{fig1}), which may help improve  robustness and generalization when predicting on out of distribution data.
Conveniently, NFM can be implemented on top of manifold mixup, introducing minimal computation overhead.

\paragraph{Contributions.} Our main contributions in this paper are summarized as follows.
\begin{itemize}[leftmargin=2em]\vspace{-0.2cm}
	  \setlength\itemsep{0.05em}
    \item We study NFM via the lens of implicit regularization, showing that NFM  amplifies the regularizing effects of manifold mixup and noise injection, implicitly reducing the feature-output Jacobians and Hessians according to the mixing level and noise levels (see Theorem \ref{prop_implicitreg}).
    \item We provide mathematical analysis to show that NFM can further improve model robustness  when compared to manifold mixup and noise injection. In particular, we show that, under appropriate assumptions, NFM training approximately minimizes an upper bound on the sum of an adversarial loss and  feature-dependent regularizers (see Theorem \ref{thm_advbound}).
    \item We provide empirical results in support of our theoretical findings, showing that NFM improves robustness with respect to various forms of data perturbation across a wide range of state-of-the-art architectures on computer vision  benchmark tasks. Research codes are shared via \url{https://github.com/erichson/noisy_mixup}.
\end{itemize}

In the Supplementary Materials ({\bf SM}), we provide proofs for our theorems along with additional theoretical and empirical results to gain more insights into NFM. 
In particular, we show that NFM can implicitly increase classification margin (see Proposition \ref{sm_prop_marginbound} in {\bf SM} \ref{sm_secD}) and the noise injection procedure in NFM can robustify manifold mixup in a probabilistic sense (see Theorem \ref{thm_robustclassifier} in {\bf SM} \ref{sm_secE}). We also provide and discuss generalization bounds for NFM (see Theorem \ref{sm_genbound1} and \ref{sm_genbound2} in {\bf SM} \ref{sm_secF}).

\noindent {\bf Notation.}
$I$ denotes identity matrix, $[K] := \{1,\dots, K\}$, the superscript $^T$ denotes transposition, $\circ$ denotes composition, $\odot$ denotes Hadamard product, $\mathbb{1}$ denotes the vector with all components equal one. For a vector $v$, $v^k$ denotes its $k$th component and $\|v\|_p$ denotes its $l_p$ norm for $p > 0$. $conv(\mathcal{X})$ denote the convex hull of $\mathcal{X}$. $M_\lambda(a,b) := \lambda a + (1-\lambda) b$, for  random variables $a, b, \lambda$. $\delta_z$ denotes the Dirac delta function, defined as $\delta_z(x) = 1$ if $x = z$ and $\delta_z(x) = 0$ otherwise. $\mathbb{1}_A $ denotes indicator function of the set $A$. For $\alpha, \beta > 0$,  $\tilde{\mathcal{D}}_{\lambda}:= \frac{\alpha}{\alpha + \beta} Beta(\alpha + 1, \beta) + \frac{\beta}{\alpha + \beta} Beta(\beta + 1, \alpha)$ denotes a uniform mixture of two Beta distributions. For two vectors $a,b$, $\cos(a,b) := \langle a, b \rangle/\|a\|_2\|b\|_2$ denotes their cosine similarity. $\mathcal{N}(a,b)$ is a Gaussian distribution with mean $a$ and covariance $b$.

\section{Related Work} \label{sec_relatedwork}
\noindent {\bf Regularization.}
Regularization refers to any technique that reduces overfitting in machine learning; see \cite{MO11-implementing,Mah12} and references therein, in particular for a discussion of \emph{implicit} regularization, a topic that has received attention recently in the context of stochastic gradient optimization applied to neural network models. 
Traditional regularization techniques such as ridge regression, weight decay and dropout do not make use of the training data to reduce the model capacity. 
A powerful class of techniques is data augmentation, which constructs additional examples from the training set, e.g., by applying geometric transformations to the original data \cite{shorten2019survey}. 
A recently proposed  technique is mixup \cite{zhang2017mixup}, where the examples are created by taking convex combinations of pairs of inputs and their labels.  
\cite{verma2019manifold} extends mixup to  hidden representations in deep neural networks. Subsequent works by \cite{ greenewald2021k,yinbatchmixup, engstrom2019a, kim2020puzzle, yun2019cutmix, hendrycks2019augmix}  introduce different variants and extensions of mixup. Regularization is also intimately connected to robustness \cite{hoffman2019robust, sokolic2017robust, novak2018sensitivity, elsayed2018large, moosavi2019robustness}. Adding to the list is NFM, a powerful regularization method that we propose to improve  model robustness.

\noindent {\bf  Robustness.}
Model robustness is an increasingly important issue in modern machine learning. Robustness with respect to adversarial examples \cite{kurakin2016adversarial} can be achieved by  adversarial training \cite{goodfellow2014explaining,madry2017towards, utrera2020adversarially}. Several works present theoretical justifications to observed robustness and how data augmentation can improve it \cite{hein2017formal,NEURIPS2020_44e76e99,couellan2021probabilistic, pinot2019theoretical, pinot2021robustness,   zhang2020does, zhang2021and, carratino2020mixup, kimura2020mixup, dao2019kernel, wu2020generalization, gong2020maxup, chen2020group}. 
Relatedly, \cite{fawzi2016robustness, franceschi2018robustness, lim2021noisy} investigate how noise injection can be used to improve robustness. 
Parallel to this  line of work, we provide theory to understand how NFM can improve robustness. 
Also related  is the study of the trade-offs between robustness and accuracy \cite{min2020curious,zhang2019theoretically,tsipras2018robustness,schmidt2018adversarially, su2018robustness, raghunathan2020understanding, yang2020closer}. There are also attempts to study generalization in terms of robustness \cite{xu2009robustness, dziugaite2020search, jiang2018predicting}.

\section{Noisy Feature Mixup} \label{sec_NFM}

Noisy Feature Mixup is a generalization of input mixup \cite{zhang2017mixup} and manifold mixup \cite{verma2019manifold}.
{\it The main novelty of NFM against manifold mixup lies in the injection of noise when taking convex combinations of pairs of input and hidden layer features}. 
Fig.~\ref{fig1} illustrates, at a high level, how this modification alters the region in which the resulting augmented data resides.  
Fig.~\ref{fig:toy_example} shows that NFM is most effective at smoothing the decision boundary of the trained classifiers; compared to noise injection and mixup alone, it imposes the strongest smoothness on this dataset.

\begin{figure}[!b]
	\centering
	\footnotesize
	\stackunder[5pt]{\includegraphics[width=0.24\textwidth]{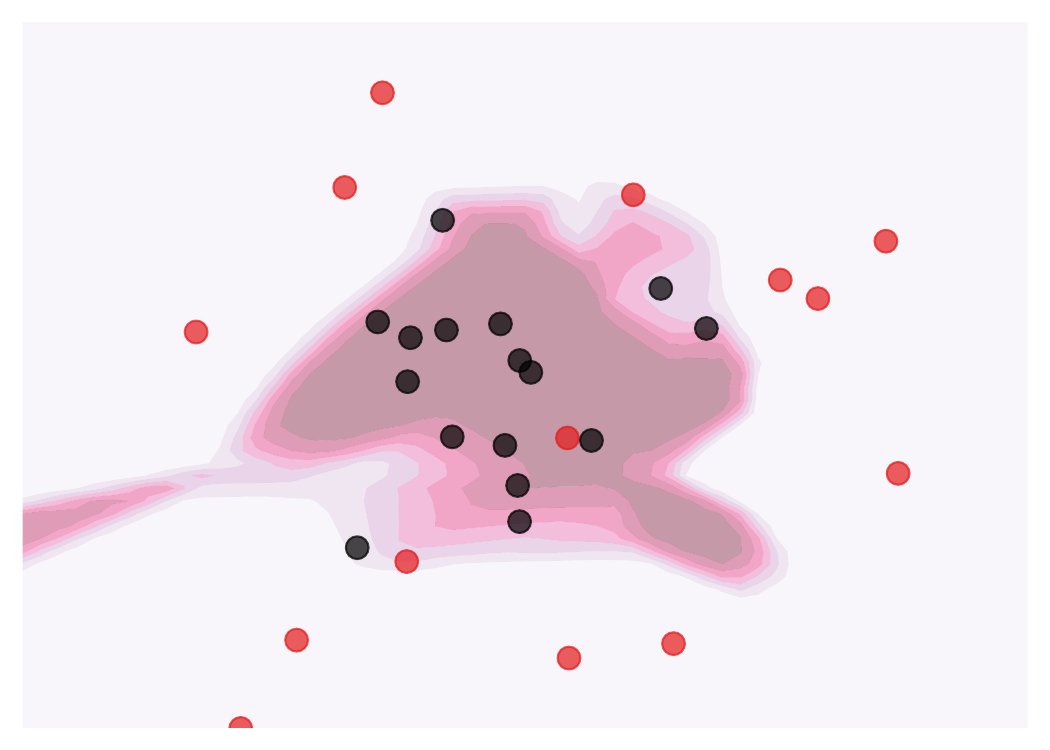}}{Baseline (85.5\%).}
	%
	\stackunder[5pt]{\includegraphics[width=0.24\textwidth]{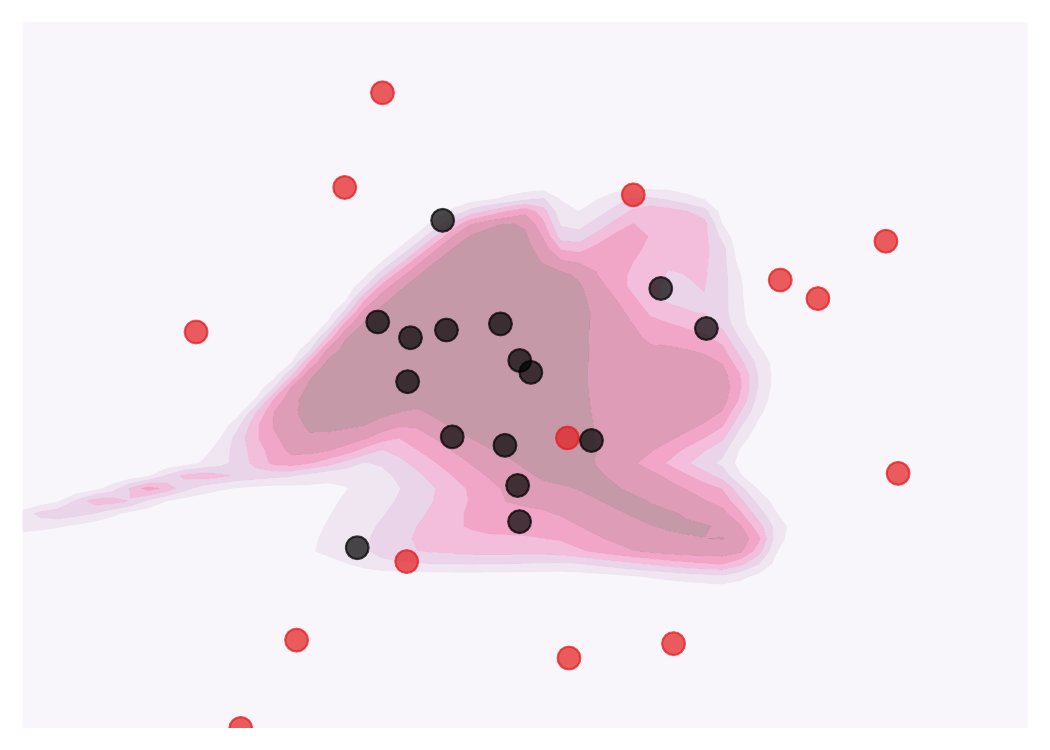}}{Dropout (87.0\%).}
	%
	\stackunder[5pt]{\includegraphics[width=0.24\textwidth]{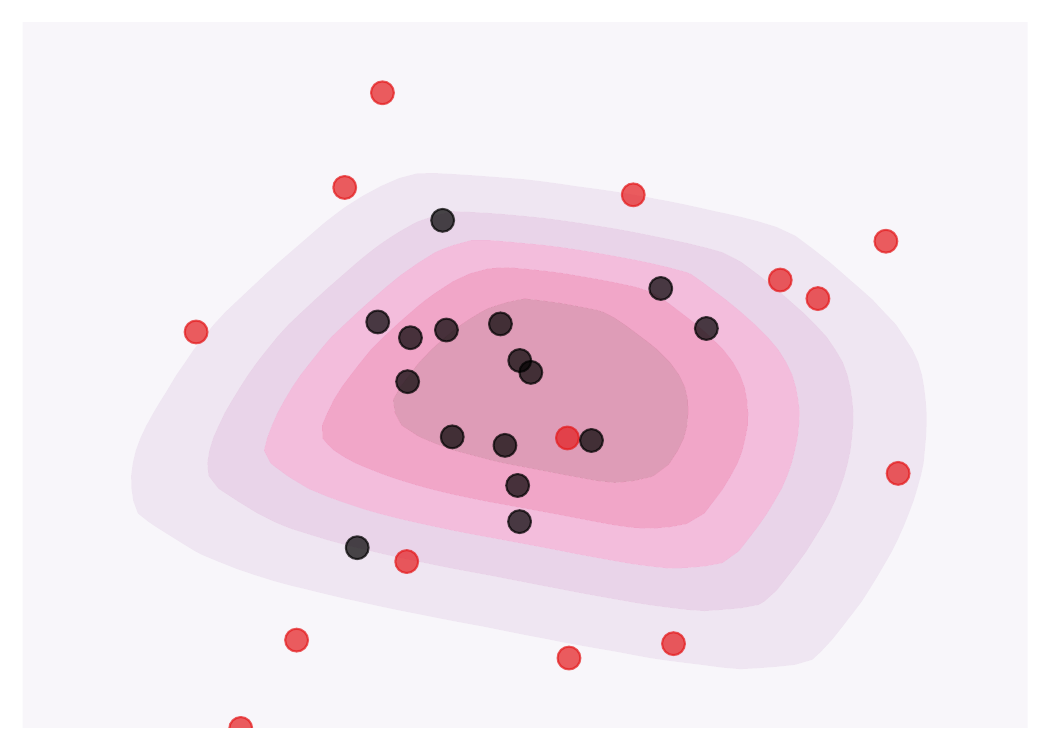}}{Weight decay (88.0\%).} 
	%
	\stackunder[5pt]{\includegraphics[width=0.24\textwidth]{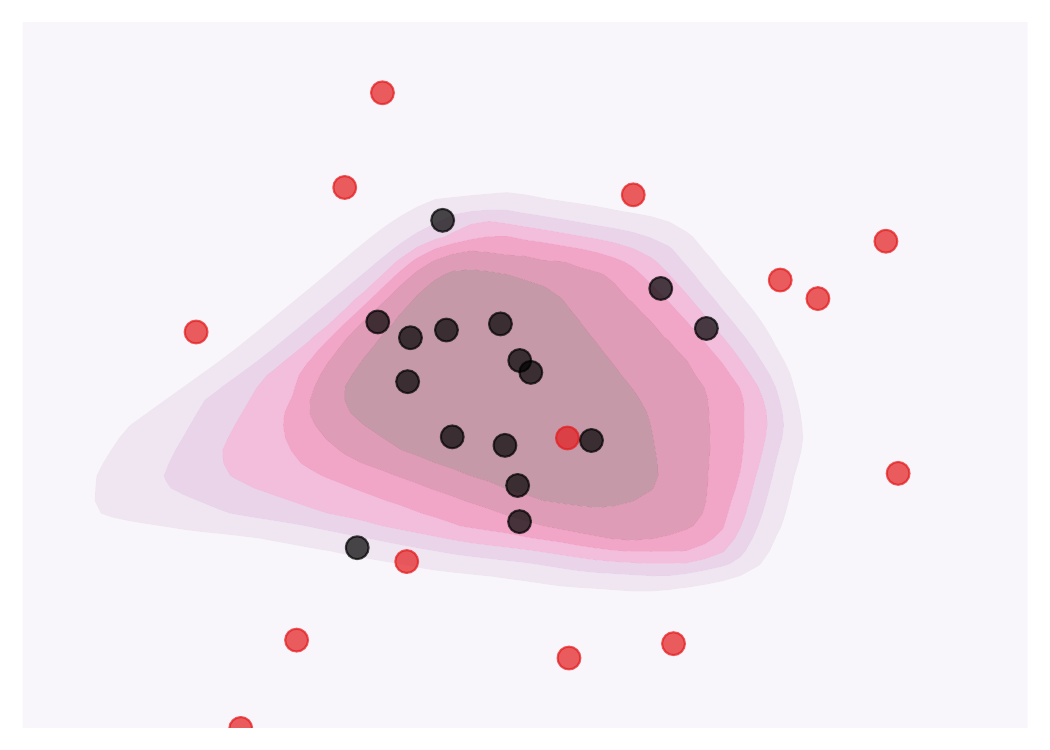}}{Noise injections (87.0\%).}
	
	\stackunder[5pt]{\includegraphics[width=0.24\textwidth]{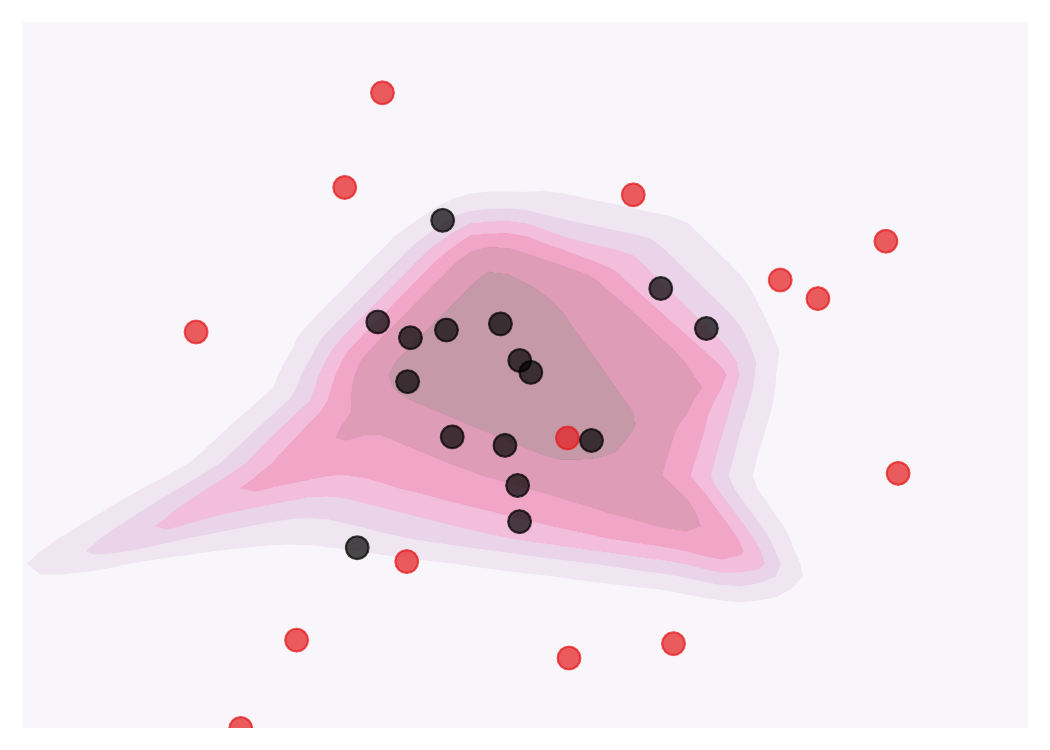}}{Mixup (84.5\%).} 
	\stackunder[5pt]{\includegraphics[width=0.24\textwidth]{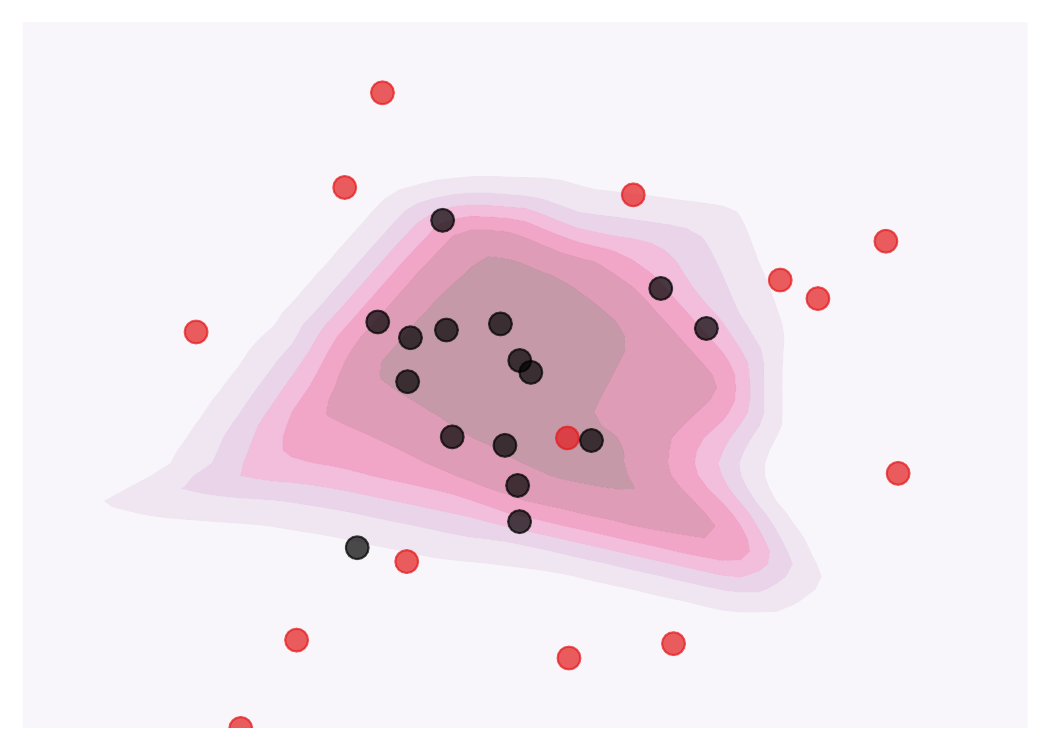}}{Manifold mixup (88.5\%).}
	\stackunder[5pt]{\includegraphics[width=0.24\textwidth]{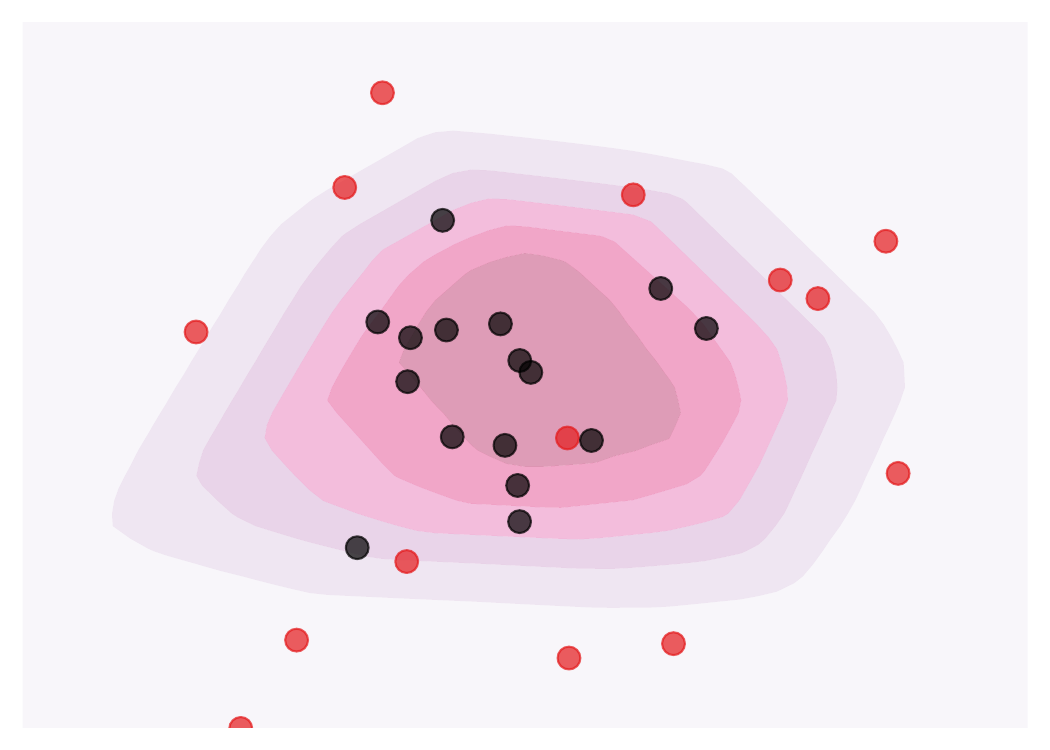}}{Noisy mixup (89.0\%).} 
	\stackunder[5pt]{\includegraphics[width=0.24\textwidth]{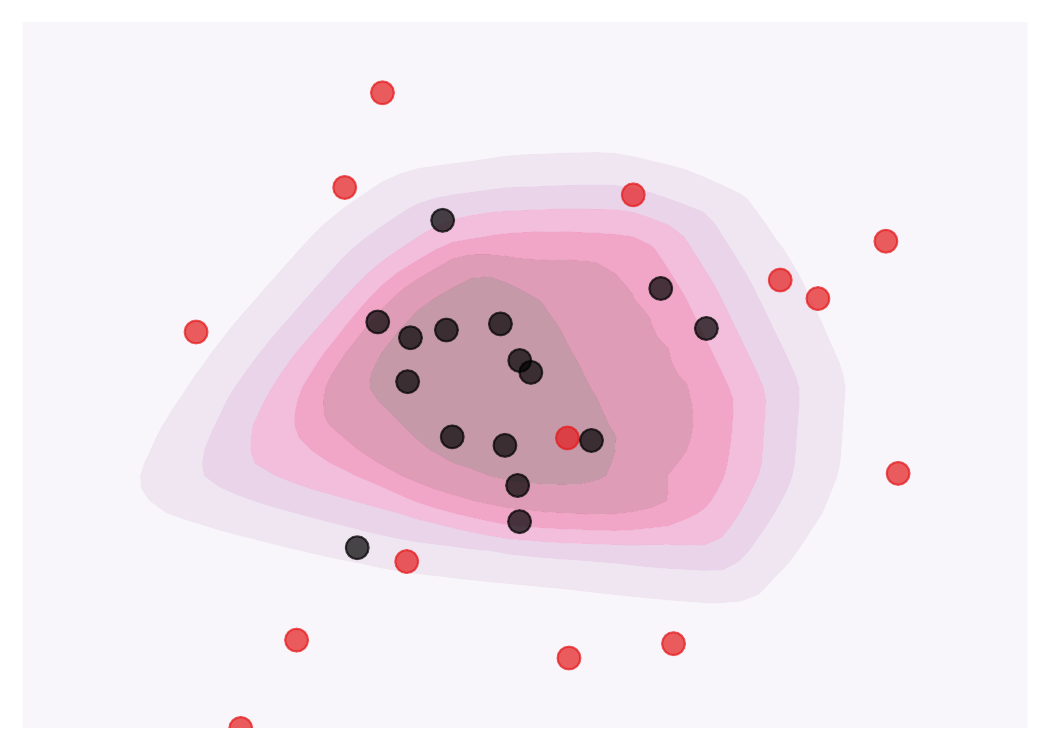}}{NFM (90.0\%).} 
	
	\caption{The decision boundaries and test accuracy (in parenthesis) for different training schemes on a   toy dataset in binary classification (see Subsection \ref{sm_secA} for details). }
	\label{fig:toy_example}
\end{figure}

Formally, we consider multi-class classification with $K$ labels. Denote the input space by $\mathcal{X} \subset \RR^d$ and the output space by $\mathcal{Y} = \RR^K$.  The  classifier, $g$, is constructed from a learnable map $f: \mathcal{X} \to \RR^K$, mapping an input $x$ to its label, $g(x) = \arg\max_k f^k(x) \in [K]$. We are given a training set, $\mathcal{Z}_n := \{(x_i, y_i)\}_{i=1}^n$, consisting of $n$ pairs of input and one-hot label, with 
each training pair $z_i := (x_i, y_i) \in \mathcal{X} \times \mathcal{Y}$ drawn i.i.d. from a ground-truth distribution $\mathcal{D}$.   We consider training a deep neural network $f := f_k \circ g_k$, where $g_k: \mathcal{X} \to g_k(\mathcal{X})$ maps an input to a hidden representation at layer $k$, and $f_k: g_k(\mathcal{X}) \to g_L(\mathcal{X}) := \mathcal{Y}$ maps the hidden representation to a one-hot label at layer $L$. Here, $g_k(\mathcal{X}) \subset \RR^{d_k}$ for $k \in [L]$, $d_L := K$,  $g_0(x) = x$ and $f_0(x) = f(x)$. 

Training $f$ using NFM consists of the following steps:
\begin{enumerate}[leftmargin=2em]
\item Select a random layer $k$ from a set, $\mathcal{S} \subset \{0\} \cup [L]$, of eligible layers in the neural network. 

\item Process two random data minibatches $(x,y)$ and $(x',y')$ as usual, until reaching layer $k$. This gives us two immediate minibatches $(g_k(x), y)$ and $(g_k(x'), y')$. 

\item Perform mixup on these intermediate minibatches, producing the mixed minibatch:
\begin{equation}
    (\tilde{g}_k, \tilde{y}) := (M_\lambda(g_k(x), g_k(x')), M_\lambda(y,y')),
\end{equation}
where  the mixing level $\lambda \sim Beta(\alpha, \beta)$, with the hyper-parameters $\alpha, \beta > 0$. 

\item Produce noisy mixed minibatch by injecting additive and multiplicative noise:
\begin{align}
    (\tilde{\tilde{g}}_k, \tilde{y}) &:= ((\mathbb{1}+\sigma_{mult} \xi_k^{mult}) \odot M_\lambda(g_k(x), g_k(x')) + \sigma_{add} \xi_k^{add},  M_\lambda(y,y') ),
\end{align}
where the $\xi_k^{add}$ and $\xi_k^{mult}$ are $\RR^{d_k}$-valued independent random variables modeling the additive and multiplicative noise respectively, and $\sigma_{add}, \sigma_{mult} \geq 0$ are pre-specified noise levels. 

\item Continue the forward pass from layer $k$ until the output using the noisy mixed minibatch $(\tilde{\tilde{g}}_k, \tilde{y})$. 

\item Compute the loss and gradients that update all the parameters of the network. 
\end{enumerate}

At the level of implementation, following \cite{verma2019manifold}, we backpropagate gradients through the entire computational graph, including those layers before the mixup layer $k$. 

In the case where $\sigma_{add} = \sigma_{mult} = 0$, NFM reduces to manifold mixup   \cite{verma2019manifold}. If in addition $\mathcal{S} = \{0\}$, it reduces to the original mixup method \cite{zhang2017mixup}. The main difference between NFM and  manifold mixup lies in the noise injection of the fourth step above. Note that NFM is equivalent to injecting noise into $g_k(x), g_k(x')$ first, then performing mixup on the resulting pair, i.e., the order that the third and fourth steps occur does not change the resulting noisy mixed minibatch.  For simplicity, we have used the same mixing level,  noise distribution, and noise levels for all layers in $\mathcal{S}$ in our formulation.   

Within the above setting, we consider the expected NFM loss:
\begin{equation*}
    L^{NFM}(f) = \mathbb{E}_{(x,y), (x', y') \sim \mathcal{D} } 
     \mathbb{E}_{k 
    \sim \mathcal{S}} \mathbb{E}_{\lambda 
    \sim Beta(\alpha, \beta)}  \mathbb{E}_{\vecc{\xi}_k \sim \mathcal{Q}} l(f_k(M_{\lambda, \vecc{
\xi}_k} (g_k(x), g_k(x'))), M_{\lambda}(y,y')),
\end{equation*}
where $l:\RR^K \times \RR^K \to [0, \infty)$ is a loss function (note that here we have suppressed the dependence of both $l$ and $f$ on the learnable parameter $\theta$ in the notation), $\vecc{\xi}_k := (\xi_k^{add}, \xi_k^{mult})$ are drawn from some probability distribution $\mathcal{Q}$ with finite first two moments, and
\begin{align*}
    M_{\lambda, \vecc{
\xi}_k} (g_k(x), g_k(x')) &:= (\mathbb{1}+\sigma_{mult} \xi_k^{mult}) \odot M_\lambda(g_k(x), g_k(x')) + \sigma_{add} \xi_k^{add}.
\end{align*} 
NFM seeks to minimize a stochastic approximation of $L^{NFM}(f)$ by sampling a finite number of $k, \lambda, \vecc{\xi}_k$ values and using minibatch gradient descent to minimize this loss approximation. 

\section{Theory} \label{sec_theory}

In this section, we provide mathematical analysis to understand NFM. We begin with formulating NFM in the framework of vicinal risk minimization and interpreting NFM as a stochastic learning strategy in Subsection \ref{subsec_VRM}. Next, we study NFM via the lens of implicit regularization in Subsection \ref{subsec_implreg}. Our key contribution is Theorem \ref{prop_implicitreg}, which shows that minimizing the NFM loss function is approximately equivalent to minimizing a sum of the original loss and feature-dependent regularizers,  amplifying the regularizing effects of manifold mixup and noise injection according to the mixing and noise levels. In Subsection \ref{subsec_robust}, we focus on demonstrating how NFM can enhance model robustness via the lens of distributionally robust optimization. The key result of Theorem \ref{thm_advbound} shows that NFM loss is approximately the upper bound on a regularized version of an adversarial loss, and thus training with NFM not only improves  robustness but can also mitigate robust over-fitting, a dominant phenomenon where the robust test accuracy starts to decrease during training \cite{rice2020overfitting}.

\subsection{NFM: Beyond Empirical Risk Minimization} \label{subsec_VRM}

The standard approach in statistical learning theory \cite{bousquet2003introduction} is to select a hypothesis function $f: \mathcal{X} \to \mathcal{Y}$ from a pre-defined hypothesis class $\mathcal{F}$ to minimize the expected risk with respect to $\mathcal{D}$ and to solve the risk minimization problem:
$\inf_{f \in \mathcal{F}} \mathcal{R}(f) := \mathbb{E}_{(x,y) \sim \mathcal{D}}[l(f(x),y)]$, 
for a suitable choice of loss function $l$.  In practice,  we do not have access to the ground-truth distribution. Instead, we find an approximate solution by solving the empirical risk minimization (ERM) problem, in which case $\mathcal{D}$ is approximated by the empirical distribution $\mathbb{P}_n = \frac{1}{n} \sum_{i=1}^n \delta_{z_i}$. In other words, in ERM we solve the problem: 
$\inf_{f \in \mathcal{F}} \mathcal{R}_{n}(f) := \frac{1}{n} \sum_{i=1}^n l(f(x_i),y_i)$.

However, when the training set is small or  the model capacity is large (as is the case for deep neural networks),   ERM may suffer from overfitting. Vicinal risk minimization (VRM) is a data augmentation principle introduced in \cite{vapnik2013nature} that goes beyond ERM, aiming to better estimate expected risk and reduce overfitting. In VRM, a model is trained not simply on the training set, but on samples drawn from a vicinal distribution, that smears the training data to their vicinity. With appropriate choices for this distribution, the VRM approach has resulted in several effective regularization schemes \cite{chapelle2001vicinal}. 

Input mixup \cite{zhang2017mixup} can be viewed as an example of VRM, and it turns out that NFM can be constructed within a VRM framework at the feature level (see Section \ref{sm_secB} in {\bf SM}). On a high level, NFM can be interpreted as a random procedure that introduces feature-dependent noise into the layers of the deep neural network. Since the noise injections are applied only during training and not inference, NFM is an instance of a stochastic learning strategy. Note that the injection strategy of NFM differs from those of \cite{an1996effects,camuto2020explicit, lim2021noisy}. Here, the structure of the injected noise differs from iteration to iteration (based on the layer chosen) and depends on the training data in a different way. We expect NFM to amplify the benefits of training using either noise injection or mixup alone, as will be shown next.

\subsection{Implicit Regularization of NFM} \label{subsec_implreg}

We  consider  loss functions of the form $l(f(x),y) := h(f(x)) - y f(x)$, which includes standard choices such as the logistic loss and the cross-entropy loss, and recall that $f := f_k \circ g_k$. Denote $L_n^{std} := \frac{1}{n} \sum_{i=1}^n l(f(x_i),y_i)$ and let  $\mathcal{D}_x$ be the empirical distribution of  training samples $\{x_i\}_{i \in [n]}$.  We shall show that NFM exhibits a natural form of implicit regularization, i.e., regularization imposed
implicitly by the stochastic learning strategy or approximation algorithm, without explicitly modifying the loss.

Let $\epsilon > 0$ be a small parameter. In the sequel, we rescale $1-\lambda \mapsto \epsilon (1-\lambda)$, $\sigma_{add} \mapsto \epsilon \sigma_{add}$, $\sigma_{mult} \mapsto \epsilon \sigma_{mult}$, and denote  $\nabla_k f$ and $\nabla_k^2 f$ as the first and second  directional derivative of $f_k$ with respect to $g_k$ respectively, for $k \in \mathcal{S}$. By working in the small parameter regime, we can relate the NFM empirical loss $L_n^{NFM}$ to the original loss $L_n^{std}$ and identify the regularizing effects of NFM.

\begin{thm} \label{prop_implicitreg}
Let $\epsilon > 0$ be a small parameter, and assume that $h$ and $f$ are twice differentiable.  Then, $L^{NFM}_n = \mathbb{E}_{k 
    \sim \mathcal{S}} L^{NFM(k)}_n$, where
\begin{equation}
    L^{NFM(k)}_n = L_n^{std} + \epsilon R_1^{(k)} + \epsilon^2 \tilde{R}_2^{(k)} +  \epsilon^2 \tilde{R}_3^{(k)} + \epsilon^2  \varphi(\epsilon),
\end{equation}
with $\tilde{R}_2^{(k)} = R_2^{(k)} + \sigma_{add}^2 R_2^{add(k)} + \sigma_{mult}^2 R_2^{mult(k)}$ and $\tilde{R}_3^{(k)} = R_3^{(k)} + \sigma_{add}^2 R_3^{add(k)} + \sigma_{mult}^2 R_3^{mult(k)}$,
where 
\begin{align}
    R_2^{add(k)} &= \frac{1}{2n} \sum_{i=1}^n h''(f(x_i)) \nabla_k f(g_k(x_i))^T \mathbb{E}_{\vecc{\xi}_k}[\xi_k^{add}(\xi_k^{add})^T]  \nabla_k f(g_k(x_i)), \\
    R_2^{mult(k)} &= \frac{1}{2n} \sum_{i=1}^n h''(f(x_i)) \nabla_k f(g_k(x_i))^T (\mathbb{E}_{\vecc{\xi}_k}[\xi_k^{mult} (\xi_k^{mult})^T] \odot  g_k(x_i) g_k(x_i)^T ) \nabla_k f(g_k(x_i)), \\
    R_3^{add(k)} &= \frac{1}{2n} \sum_{i=1}^n (h'(f(x_i))-y_i) \mathbb{E}_{\vecc{\xi}_k}[(\xi_k^{add})^T \nabla_k^2 f(g_k(x_i)) \xi_k^{add} ], \\
    R_3^{mult(k)} &= \frac{1}{2n} \sum_{i=1}^n (h'(f(x_i))-y_i)\mathbb{E}_{\vecc{\xi}_k}[(\xi_k^{mult} 
    \odot g_k(x_i))^T \nabla_k^2 f(g_k(x_i)) (\xi_k^{mult} 
\odot g_k(x_i)) ].
\end{align}
Here, $R_1^{{k}}$, $R_2^{{k}}$ and $R_3^{{k}}$ are the regularizers associated with the loss of manifold mixup (see Theorem \ref{sm_prop_implicitreg} in {\bf SM} for their explicit expression), and $\varphi$ is some function such that $\lim_{\epsilon \to 0} \varphi(\epsilon) = 0$.
\end{thm}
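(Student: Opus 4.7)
The strategy is a second-order Taylor expansion of the per-sample NFM loss around the unperturbed feature $g_k(x_i)$, collecting contributions order by order in the small parameter $\epsilon$. The plan is to prove the identity for a fixed layer $k \in \mathcal{S}$ and then take expectation over $k \sim \mathcal{S}$ by linearity.

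First I would decompose the perturbation. After the rescalings, the noisy mixed feature built from $(x_i, y_i)$ and a second example $(x_j, y_j)\sim\mathcal{D}$ equals $g_k(x_i)+\Delta_k$, where
\begin{equation*}
\Delta_k = \epsilon\,\eta_k + \epsilon^2\,\rho_k,\qquad \eta_k = (1-\lambda)(g_k(x_j)-g_k(x_i)) + \sigma_{add}\,\xi_k^{add} + \sigma_{mult}\,\xi_k^{mult}\odot g_k(x_i),
\end{equation*}
and $\rho_k=(1-\lambda)\sigma_{mult}\,\xi_k^{mult}\odot(g_k(x_j)-g_k(x_i))$ is the one intrinsic $O(\epsilon^2)$ cross term produced by multiplicative noise acting on the mixup displacement; the interpolated label reads $\tilde y = y_i+\epsilon(1-\lambda)(y_j-y_i)$. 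I would then Taylor expand $f_k$ to second order in $\Delta_k$ around $g_k(x_i)$, substitute into $l(f,\tilde y)=h(f)-\tilde y\cdot f$, and Taylor expand $h$ to second order around $f(x_i)$.

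Next I would take expectations with respect to the three independent sources of randomness, $(x_j,y_j)$, $\lambda\sim Beta(\alpha,\beta)$, and $\vecc{\xi}_k\sim\mathcal{Q}$ (taken centered), and then average over $i$. The $O(1)$ term collapses to $L_n^{std}$. At $O(\epsilon)$ only $\mathbb{E}[1-\lambda]$ survives because centering annihilates the first-order noise contributions, reproducing the manifold-mixup regularizer $R_1^{(k)}$ exactly as in the analysis that underlies the statement. At $O(\epsilon^2)$, independence and centering make all mixup--noise and additive--multiplicative cross moments vanish, so the quadratic-in-$\eta_k$ contribution factors into three disjoint blocks. Within each block, the expansion of $h$ produces two distinct pieces: the $h''(f(x_i))$ term acts on $(\nabla_k f\cdot\Delta_k)^2$ and yields the $R_2$-type contributions, while the residual $(h'(f(x_i))-y_i)$ multiplies the Hessian term $\Delta_k^T\nabla_k^2 f\,\Delta_k$ and yields the $R_3$-type contributions. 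Reading these off per channel recovers $R_2^{add(k)},R_3^{add(k)}$ from the moment matrix $\mathbb{E}[\xi_k^{add}(\xi_k^{add})^T]$ and $R_2^{mult(k)},R_3^{mult(k)}$ from the moment matrix obtained by substituting $\xi_k^{mult}\odot g_k(x_i)$, which is what produces the Hadamard factor $g_k(x_i)g_k(x_i)^T$. The $\rho_k$ contribution has expectation zero by $\mathbb{E}[\xi_k^{mult}]=0$ and is absorbed into the remainder.

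Finally I would control the remainder. Under twice differentiability of $h$ and $f_k$ together with finite third moments of $\vecc\xi_k$ and boundedness of $g_k$ on $conv(\mathcal{X})$, the cubic Taylor remainder in $\Delta_k$ is $O(\epsilon^3)$ with Beta-moments of $1-\lambda$ and moments of $\vecc\xi_k$ absorbed into uniform constants; dividing by $\epsilon^2$ and invoking dominated convergence gives the claimed $\varphi(\epsilon)\to 0$. The main obstacle is the $O(\epsilon^2)$ bookkeeping, and in particular verifying that no spurious cross terms between the mixup displacement and the two noise channels survive, as well as ensuring that the non-small factor $1-\lambda$ is always paired with an explicit $\epsilon$ or with the bounded difference $g_k(x_j)-g_k(x_i)$ so that it enters only through Beta moments. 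Once this clean separation is in place, each block reduces to a computation already familiar from manifold mixup or from pure noise injection.
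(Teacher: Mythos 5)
Your overall strategy (second-order Taylor expansion in $\epsilon$ of the per-sample loss around $g_k(x_i)$, separate treatment of the mixup and the two noise channels, remainder control) is the same as the paper's, and your treatment of the noise blocks $R_2^{add(k)}, R_2^{mult(k)}, R_3^{add(k)}, R_3^{mult(k)}$ is correct, provided one assumes (as you do, and as the paper does implicitly) centered noise. However, there is a genuine gap in how you handle the label mixing, and it is exactly the point where the paper's proof does extra work. The paper first invokes the symmetrization argument of Lemma 3.1 in \cite{zhang2020does} to rewrite $L_n^{NFM(k)}$ as an expectation in which the anchor sample keeps its \emph{original} label $y_i$, the partner is drawn from the empirical distribution $\mathcal{D}_x$, and the mixing variable is drawn from the reweighted mixture $\tilde{\mathcal{D}}_\lambda = \frac{\alpha}{\alpha+\beta}Beta(\alpha+1,\beta) + \frac{\beta}{\alpha+\beta}Beta(\beta+1,\alpha)$; only then does it Taylor-expand $\psi_i(\epsilon) = h(f_k(g_k(x_i)+\epsilon e_i^{NFM(k)})) - y_i f_k(g_k(x_i)+\epsilon e_i^{NFM(k)})$ in the scalar $\epsilon$. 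You instead expand directly with the mixed label $\tilde y = y_i + \epsilon(1-\lambda)(y_j-y_i)$ and $\lambda \sim Beta(\alpha,\beta)$, but your bookkeeping then ignores the label-perturbation contributions: at order $\epsilon$ the term $-\epsilon(1-\lambda)(y_j-y_i)f(x_i)$ does not vanish (its expectation over $j$ is nonzero in general), and at order $\epsilon^2$ there is a cross term $-\epsilon^2(1-\lambda)(y_j-y_i)\,\nabla_k f(g_k(x_i))^T\big[(1-\lambda)(g_k(x_j)-g_k(x_i))\big]$ that survives centering of the noise. So your claim that ``only $\mathbb{E}[1-\lambda]$ survives'' at first order and that the second order splits into three clean blocks is not correct as stated, and the coefficients you would obtain are moments with respect to $Beta(\alpha,\beta)$ rather than the $\tilde{\mathcal{D}}_\lambda$-moments appearing in $R_1^{(k)}, R_2^{(k)}, R_3^{(k)}$.

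The gap is reparable: by uniqueness of the coefficients in an expansion of the form $a_0+a_1\epsilon+a_2\epsilon^2+\epsilon^2\varphi(\epsilon)$, your (complete) coefficients must agree with the stated ones, but establishing that agreement requires precisely the identity you skipped — combining the label-difference terms with the $Beta(\alpha,\beta)$-moments and symmetrizing over the pair $(i,j)$, which is what converts them into the $\tilde{\mathcal{D}}_\lambda$-moments of the theorem. Either carry out that computation explicitly, or do what the paper does and perform the Zhang--Deng--Kawaguchi--Ghorbani--Zou rewriting \emph{before} expanding, after which the label is simply $y_i$ and your order-by-order analysis goes through verbatim. On the positive side, your explicit remainder control (third moments, boundedness of $g_k$ on $conv(\mathcal{X})$, dominated convergence to get $\varphi(\epsilon)\to 0$ after taking expectations) is more careful than the paper's, which applies the Peano-form Taylor theorem pathwise and passes the remainder through the expectation without comment.
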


Theorem \ref{prop_implicitreg} implies that, when compared to manifold mixup, NFM introduces  additional smoothness, regularizing the directional derivatives, $\nabla_k f(g_k(x_i))$ and $\nabla_k^2 f(g_k(x_i))$, with respect to $g_k(x_i)$,  according to the noise levels $\sigma_{add}$ and $\sigma_{mult}$, and amplifying the regularizing effects of manifold mixup and noise injection. In particular, making $\nabla^2 f(x_i)$ small can lead to smooth decision boundaries (at the input level), while reducing the confidence of model predictions. On the other hand, making the 
$\nabla_k f(g_k(x_i))$ small can lead to improvement in model robustness, which we discuss next.

\subsection{Robustness of NFM} \label{subsec_robust}

We show that NFM improves model robustness.
We do this by considering the following three lenses: (1) implicit regularization and classification margin; (2) distributionally robust optimization; and (3) a probabilistic notion of robustness. We focus on (2) in the main paper. See Section \ref{sm_secD}-\ref{sm_secE} in {\bf SM} and the last paragraph in this subsection for details on (1) and (3).

We now demonstrate how NFM helps adversarial robustness.  By extending the analysis of \cite{zhang2017mixup,lamb2019interpolated}, we can relate the NFM loss function to the one used for adversarial training, which can be viewed as an instance of distributionally robust optimization (DRO) \cite{kwon2020principled, kuhn2019wasserstein, rahimian2019distributionally} (see also Proposition 3.1 in \cite{staib2017distributionally}). DRO provides a framework for local worst-case risk minimization, minimizing supremum of the risk in an ambiguity set, such as in the vicinity of the empirical data distribution. 

Following \cite{lamb2019interpolated}, we consider the binary cross-entropy loss, setting $h(z) = \log(1+e^z)$, with the labels $y$ taking value in $\{0,1\}$ and the  classifier model $f: \RR^d \to \RR$. In the following, we assume that the model parameter  $\theta \in \Theta := \{\theta : y_i f(x_i) + (y_i - 1) f(x_i) \geq 0 \text{ for all } i \in [n] \}$. Note that this set contains the set of all parameters with correct classifications of training samples (before applying NFM), since $ \{\theta : 1_{\{f(x_i)  \geq 0\}} = y_i \text{ for all } i \in [n] \} \subset \Theta$. Therefore, the condition of $\theta \in \Theta$ is satisfied when the model classifies all labels correctly for the training data before applying NFM. Since, in practice, the training error often becomes zero in finite time, we study the effect of NFM on model robustness in the regime of $\theta \in \Theta$. 

Working in the data-dependent parameter space $\Theta$, we  have the following result.

\begin{thm} \label{thm_advbound}
	Let $\theta \in \Theta := \{\theta : y_i f(x_i) + (y_i - 1) f(x_i) \geq 0 \text{ for all } i \in [n] \}$  such that $\nabla_k f(g_k(x_i))$ and $\nabla_k^2 f(g_k(x_i))$ exist for all $i \in [n]$, $k \in \mathcal{S}$. Assume that $f_k(g_k(x_i)) = \nabla_k f(g_k(x_i))^T g_k(x_i)$, $\nabla_k^2 f(g_k(x_i)) = 0$ for all $i \in [n]$, $k \in \mathcal{S}$. In addition, suppose that $\|\nabla f(x_i) \|_2 > 0 $ for all $i \in [n]$,  $\mathbb{E}_{r \sim \mathcal{D}_x}[g_k(r)] = 0$ and $\|g_k(x_i)\|_2 \geq c_x^{(k)} \sqrt{d_k}$ for all $i \in [n]$, $k \in \mathcal{S}$. Then, 
	\begin{align}
		L_n^{NFM} \geq \frac{1}{n} \sum_{i=1}^n \max_{\|\delta_i \|_2 \leq \epsilon_i^{mix}} l(f(x_i + \delta_i), y_i) + L_n^{reg} +   \epsilon^2 \phi(\epsilon),
	\end{align}
	where 
	\begin{align}
		\epsilon_i^{mix} &:= \epsilon \mathbb{E}_{\lambda \sim \tilde{\mathcal{D}}_\lambda}[1-\lambda]  \cdot  \mathbb{E}_{k \sim \mathcal{S}} \left[r_i^{(k)} c_x^{(k)} \frac{\|\nabla_k f(g_k(x_i))\|_2}{\|\nabla f(x_i)\|_2 } \sqrt{d_k}\right]
	\end{align}
	and 
	\begin{align}
	    L_n^{reg} := \frac{1}{2n} \sum_{i=1}^n |h''(f(x_i))|  (\epsilon_i^{reg})^2,
	 \end{align}
	with
	$r_i^{(k)} := |\cos(\nabla_k f(g_k(x_i)), g_k(x_i))|$ 
	and
	\begin{align}
		(\epsilon_i^{reg})^2 &:= \epsilon^2 \|\nabla_k f(g_k(x_i))\|_2^2 \bigg( \mathbb{E}_\lambda[(1-\lambda)]^2 \mathbb{E}_{x_r}[\|g_k(x_r)\|_2^2 \cos(\nabla_k f(g_k(x_i)), g_k(x_r))^2 ] \nonumber \\
		&\hspace{0.7cm} + \sigma_{add}^2 \mathbb{E}_{\vecc{\xi}_k}[\|\xi_k^{add}\|_2^2 \cos(\nabla_k f(g_k(x_i)),\xi_k^{add})^2]  \nonumber \\ 
		&\hspace{0.7cm} + \sigma_{mult}^2 \mathbb{E}_{\vecc{\xi}_k}[ \|\xi_k^{mult} \odot g_k(x_i) \|_2^2 \cos(\nabla_k f(g_k(x_i)),\xi_k^{mult} \odot g_k(x_i))^2  ]  \bigg),
	\end{align}
	and $\phi$ is some function such that $\lim_{\epsilon \to 0} \phi(\epsilon) = 0$.
\end{thm}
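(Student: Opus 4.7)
The plan is a second-order Taylor expansion of $L_n^{NFM}$ in $\epsilon$, identifying a mixup-induced linear term that lower-bounds the adversarial linear increment via a cosine-alignment argument, and collecting the quadratic contributions into $L_n^{reg}$. This extends the mixup/adversarial connection of Zhang et al.\ and Lamb et al.\ to the feature-space setting with injected noise. First, via the Carratino et al.\ symmetrization $(x_i,x_j,\lambda)\leftrightarrow(x_j,x_i,1-\lambda)$, I rewrite the NFM loss so that the mixing coefficient is drawn from $\tilde{\mathcal{D}}_\lambda$; under the rescaling, the layer-$k$ perturbation of sample $i$ is $\Delta_k = \epsilon\bar\Delta_k + O(\epsilon^2)$ where
\[
\bar\Delta_k = (1-\lambda)(g_k(x_r)-g_k(x_i)) + \sigma_{mult}\,\xi_k^{mult}\odot g_k(x_i) + \sigma_{add}\,\xi_k^{add},
\]
with $x_r\sim\mathcal{D}_x$, $\vecc{\xi}_k\sim\mathcal{Q}$, $\lambda\sim\tilde{\mathcal{D}}_\lambda$.

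Using $\nabla_k^2 f = 0$, the map $f_k$ is exactly linear in $\Delta_k$; combined with $f_k(g_k(x_i)) = \nabla_k f(g_k(x_i))^T g_k(x_i) = f(x_i)$, the integrand reduces to $l(f(x_i)+\nabla_k f^T\Delta_k,\;y_i+\epsilon(1-\lambda)(y_r-y_i))$. Expanding $h$ to second order and collecting in powers of $\epsilon$ gives
\[
l = l(f(x_i),y_i) + \epsilon\bigl[(h'(f(x_i))-y_i)\nabla_k f^T\bar\Delta_k - (1-\lambda)(y_r-y_i)f(x_i)\bigr] + \epsilon^2\bigl[\tfrac{1}{2}h''(f(x_i))(\nabla_k f^T\bar\Delta_k)^2 - (1-\lambda)(y_r-y_i)\nabla_k f^T\bar\Delta_k\bigr] + O(\epsilon^3).
\]

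Taking expectations with $\mathbb{E}[\vecc{\xi}_k]=0$ and $\mathbb{E}_r[g_k(x_r)]=0$, the only surviving first-order contribution to $L_n^{NFM(k)}$ is $-\epsilon\mathbb{E}_\lambda[1-\lambda](h'(f(x_i))-y_i)f(x_i)$. For $\theta\in\Theta$ under binary cross-entropy, $(h'(f(x_i))-y_i)$ and $f(x_i)$ carry opposite signs, so this equals $\epsilon\mathbb{E}_\lambda[1-\lambda]\,|h'(f(x_i))-y_i|\,|f(x_i)|$. I then write $|f(x_i)| = r_i^{(k)}\|\nabla_k f\|_2\|g_k(x_i)\|_2 \geq r_i^{(k)} c_x^{(k)}\sqrt{d_k}\|\nabla_k f\|_2$, multiply and divide by $\|\nabla f(x_i)\|_2$, and average over $k\sim\mathcal{S}$, producing the factor $\epsilon_i^{mix}|h'(f(x_i))-y_i|\|\nabla f(x_i)\|_2$. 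Since a first-order adversarial attack in direction $\mathrm{sign}(h'(f(x_i))-y_i)\nabla f(x_i)$ achieves $\max_{\|\delta\|_2\leq\epsilon_i^{mix}} l(f(x_i+\delta),y_i) = l(f(x_i),y_i) + \epsilon_i^{mix}|h'(f(x_i))-y_i|\|\nabla f(x_i)\|_2 + O((\epsilon_i^{mix})^2)$, the surviving linear NFM term lower-bounds the adversarial increment. For the $\epsilon^2$ piece, independence and zero-mean of $\xi_k^{add}$, $\xi_k^{mult}$, and $g_k(x_r)$ split $\mathbb{E}[(\nabla_k f^T\bar\Delta_k)^2]$ into three non-negative pieces that match the three cosine-weighted second-moment summands of $(\epsilon_i^{reg})^2$ exactly, reproducing $L_n^{reg}$; the residual label-cross-term vanishes under symmetrization with $\tilde{\mathcal{D}}_\lambda$, and the Taylor remainder is absorbed into $\epsilon^2\phi(\epsilon)$.

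\textbf{Main obstacle.} The delicate step is the conversion of the hidden-feature quantity $\nabla_k f(g_k(x_i))^T g_k(x_i)$ into an input-space adversarial magnitude. It relies simultaneously on the linearity assumption $f_k(g_k(x_i)) = \nabla_k f^T g_k(x_i)$ (to identify $|f(x_i)|$), on the cosine factorization separating $r_i^{(k)}$ from $\|g_k(x_i)\|_2$, on the moment lower bound $\|g_k(x_i)\|_2\geq c_x^{(k)}\sqrt{d_k}$, and on the chain-rule-style ratio $\|\nabla_k f\|_2/\|\nabla f(x_i)\|_2$. A secondary subtlety is the accounting of the label cross-terms involving $(y_r-y_i)$, which must be shown (via symmetrization under $\tilde{\mathcal{D}}_\lambda$) to vanish in expectation or to be absorbed into $\epsilon^2\phi(\epsilon)$ without corrupting the adversarial identification.
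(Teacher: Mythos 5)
Your high-level route is the same as the paper's: expand $L_n^{NFM}$ to second order in $\epsilon$ (this is exactly Theorem \ref{sm_prop_implicitreg}), use $\theta\in\Theta$ to fix the sign of the first-order term, convert $|f_k(g_k(x_i))|=\|\nabla_k f(g_k(x_i))\|_2\|g_k(x_i)\|_2\,r_i^{(k)}\ge r_i^{(k)}c_x^{(k)}\sqrt{d_k}\,\|\nabla_k f(g_k(x_i))\|_2$, and compare with a Taylor expansion of the adversarial loss. The sign argument, the cosine/norm conversion producing $\epsilon_i^{mix}$, and the use of the symmetrized distribution $\tilde{\mathcal{D}}_\lambda$ (which removes the label-mixing exactly, since $l$ is affine in $y$, rather than "in expectation") all match the paper.

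However, there is a genuine gap in your second-order bookkeeping. First, your claim that $\mathbb{E}\bigl[(\nabla_k f^T\bar\Delta_k)^2\bigr]$ splits into \emph{three} pieces matching $(\epsilon_i^{reg})^2$ "exactly" is wrong: under $\mathbb{E}_{x_r}[g_k(x_r)]=0$ the mixup piece gives $\mathbb{E}_{x_r}[(g_k(x_r)-g_k(x_i))(g_k(x_r)-g_k(x_i))^T]=\mathbb{E}_{x_r}[g_k(x_r)g_k(x_r)^T]+g_k(x_i)g_k(x_i)^T$, so there is a \emph{fourth} contribution proportional to $(\nabla_k f(g_k(x_i))^T g_k(x_i))^2$, and the natural coefficient is $\mathbb{E}_\lambda[(1-\lambda)^2]$, not $\mathbb{E}_\lambda[1-\lambda]^2$ (Jensen is needed, fortunately in the right direction). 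Second, this extra term is not optional: the adversarial loss you are lower-bounding has its own quadratic increment $\tfrac{1}{2}|h''(f(x_i))|\,\|\nabla f(x_i)\|_2^2(\epsilon_i^{mix})^2$, which is $\Theta(\epsilon^2)$ because $h''=S(1-S)>0$, and therefore cannot be absorbed into $\epsilon^2\phi(\epsilon)$ (that remainder must be $o(\epsilon^2)$). In the paper's proof it is precisely the $g_k(x_i)g_k(x_i)^T$ part of $R_2^{(k)}$, lower-bounded via Jensen and $\|g_k(x_i)\|_2\ge c_x^{(k)}\sqrt{d_k}$, that dominates this adversarial quadratic term; only the remaining pieces (the $g_k(x_r)$ second moment and the two noise second moments) form $L_n^{reg}$. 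By allocating the entire quadratic NFM contribution to $L_n^{reg}$ and matching only the first-order adversarial increment, your argument leaves the adversarial $\Theta(\epsilon^2)$ term uncovered, so the claimed inequality does not follow as written. Fixing it requires separating out the $(\nabla_k f^Tg_k(x_i))^2$ piece, applying $\mathbb{E}[(1-\lambda)^2]\ge\mathbb{E}[1-\lambda]^2$, and pairing it against the second-order term of the adversarial expansion before declaring the rest $o(\epsilon^2)$. (A minor additional point: your expansion implicitly assumes the injected noise is zero-mean and independent so the cross terms vanish; this is consistent with the paper's Theorem \ref{sm_prop_implicitreg} but should be stated.)
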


The second assumption stated in Theorem \ref{thm_advbound} is similar to the one made in \cite{lamb2019interpolated,zhang2020does}, and is satisfied by linear models and  deep neural networks with ReLU activation function and max-pooling.  Theorem \ref{thm_advbound} shows that the NFM loss is approximately an upper bound of the adversarial loss with $l_2$ attack of size $\epsilon^{mix} = \min_{i \in [n]} \epsilon^{mix}_i$, plus a feature-dependent regularization term $L_n^{reg}$ (see {\bf SM} for further discussions).   
Therefore, we see that minimizing the NFM loss not only results in a small adversarial loss, while retaining the robustness benefits of manifold mixup, but it also imposes additional smoothness, due to noise injection, on the adversarial loss. 
The latter can help mitigate robust overfitting and improve test performance  \cite{rice2020overfitting, rebuffi2021fixing}. 

NFM can also implicitly increase the classification margin (see Section \ref{sm_secD} of {\bf SM}). Moreover, since the main novelty of NFM lies in the introduction of noise injection, it would be insightful to isolate the robustness boosting benefits of injecting noise on top of manifold mixup. We demonstrate these advantages via the lens of probabilistic robustness in Section \ref{sm_secE} of {\bf SM}.

\section{Empirical Results}
\label{sec_experiment} 

In this section, we study the test performance of models trained with NFM, and examine to what extent NFM can improve robustness to input perturbations. We
demonstrate the tradeoff between predictive accuracy on clean and perturbed test sets. We consider input perturbations that are common in the literature: (a) white noise; (b) salt and pepper; and (c) adversarial perturbations (see Section \ref{sm_secG}).

We evaluate the average performance of NFM with different model architectures on CIFAR-10~\cite{krizhevsky2009learning}, CIFAR-100~\cite{krizhevsky2009learning}, ImageNet~\cite{deng2009imagenet}, and CIFAR-10c~\cite{hendrycks2019robustness}. We use a pre-activated residual network (ResNet) with depth 18~\cite{he2016identity} on small scale tasks. For more challenging tasks, we consider the performance of wide ResNet-18~\cite{zagoruyko2016wide} and ResNet-50 architectures, respectively.

\textbf{Baselines.} 
We evaluate against related data augmentation schemes that have shown performance improvements in recent years: mixup~\cite{zhang2017mixup}; manifold mixup~\cite{verma2019manifold}; cutmix~\cite{yun2019cutmix}; puzzle mixup~\cite{kimICML20}; and noisy mixup~\cite{NEURIPS2020_44e76e99}. Further, we compare to vanilla models trained without data augmentation (baseline), models trained with label smoothing, and those trained on white noise perturbed inputs. 

\textbf{Experimental details.}
All hyperparameters are consistent with those of the baseline model across the ablation experiments. 
In the models trained on the different data augmentation schemes, we keep $\alpha$ fixed, i.e., the parameter defining $ Beta(\alpha,\alpha)$, from which the $\lambda$ parameter controlling the convex combination between data point pairs is sampled. 
Across all models trained with NFM, we control the level of noise injections by fixing the additive noise level to $\sigma_{add}=0.4$ and multiplicative noise to $\sigma_{mult}=0.2$. To demonstrate the significant improvements on robustness upon the introduction of these small input perturbations, we show a second model (`*') that was injected with higher noise levels (i.e., $\sigma_{add}=1.0$, $\sigma_{mult}=0.5$).
See {\bf SM} (Section~\ref{sec_highnoise}) for further details and comparisons against NFM models trained on various other levels of noise injections. 

Research code is provided here: \url{https://github.com/erichson/noisy_mixup}.

\begin{figure}[!b]
	\centering
	\begin{subfigure}[t]{0.47\textwidth}
		\centering
		\begin{overpic}[width=1\textwidth]{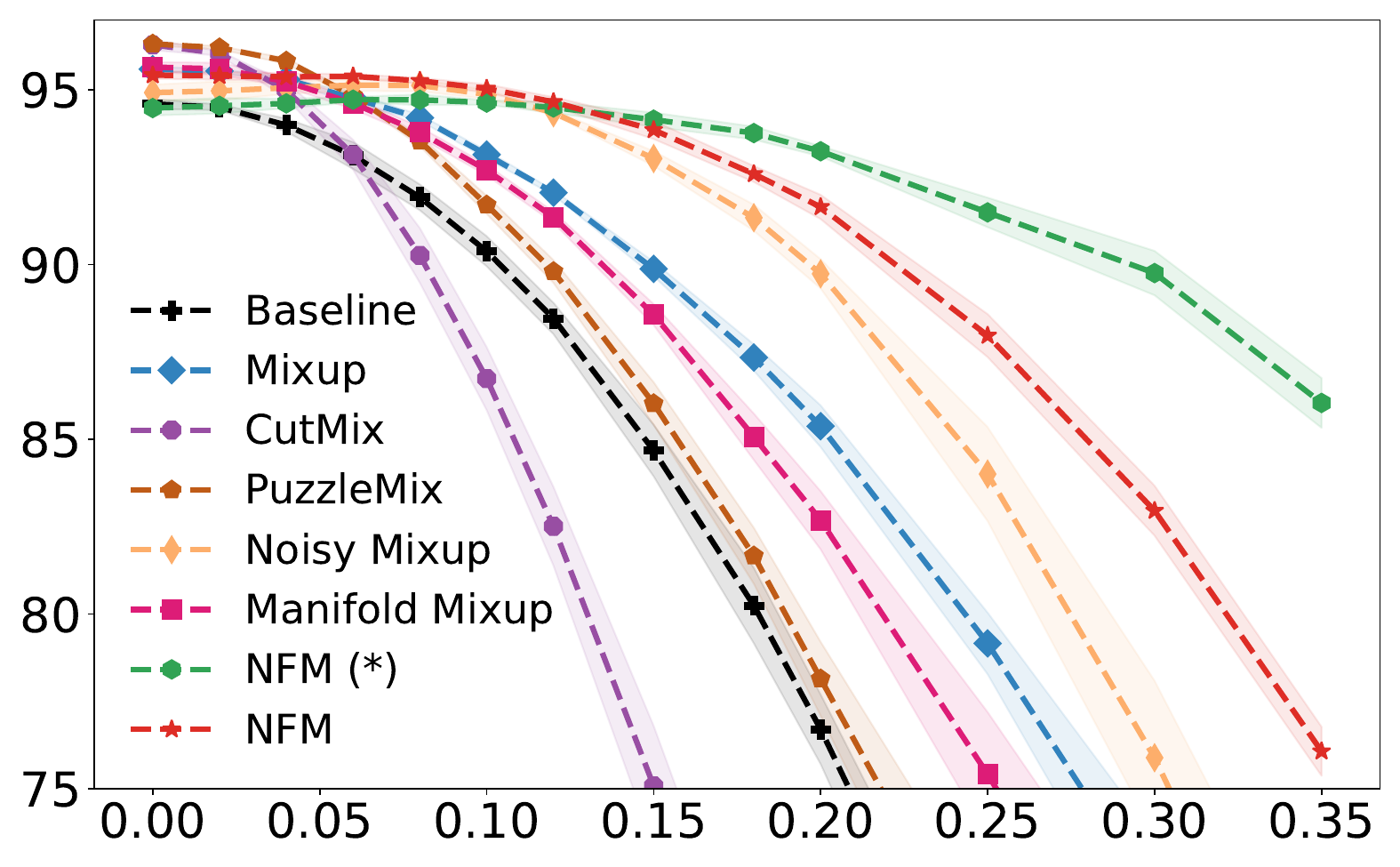}
			\put(-4,16){\rotatebox{90}{\footnotesize Test Accuracy}}			
			\put(37,-3){\footnotesize {White Noise ($\sigma$)}}  	
		\end{overpic}
	\end{subfigure}
	~
	\begin{subfigure}[t]{0.47\textwidth}
		\centering
		\begin{overpic}[width=1\textwidth]{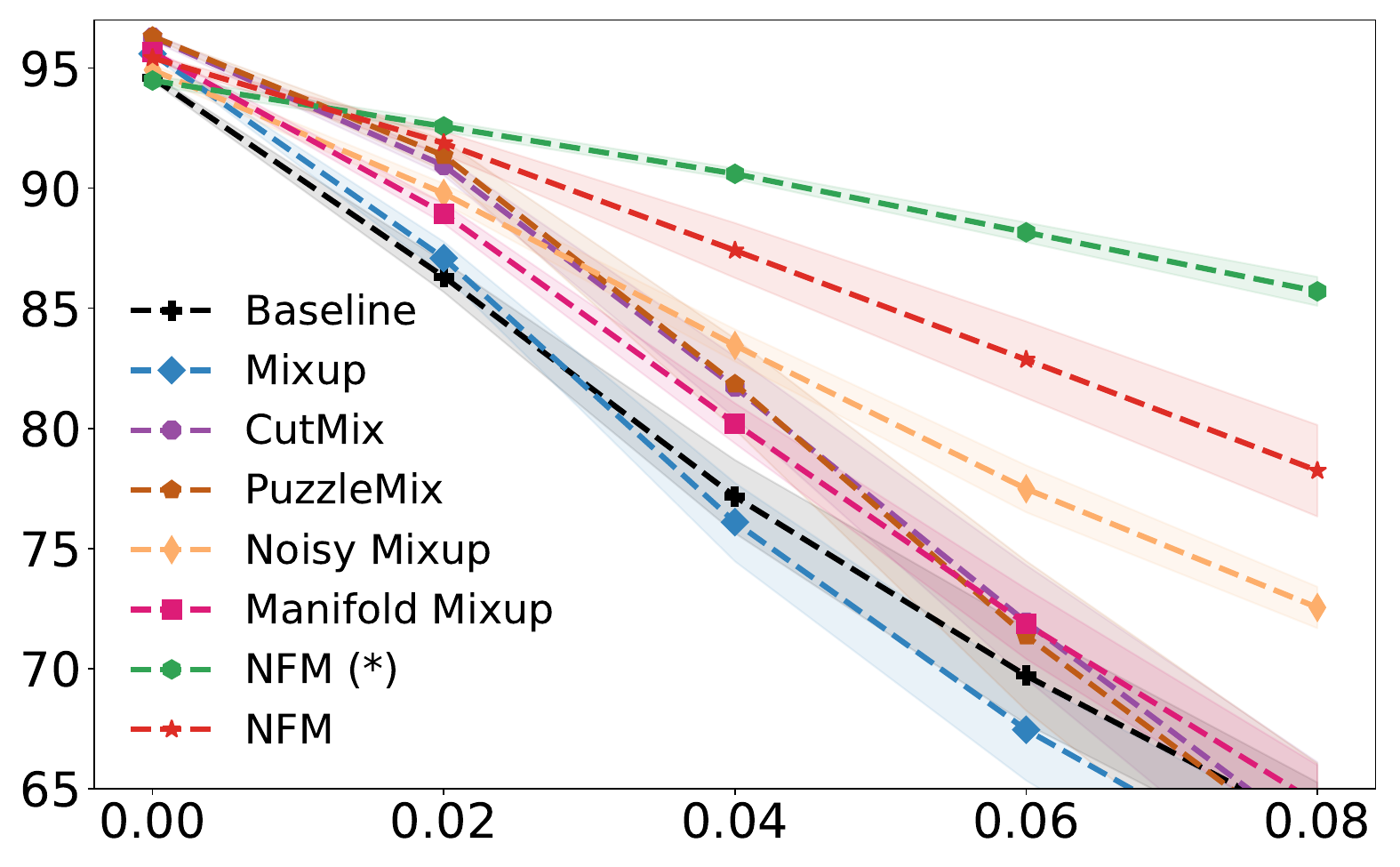} 
			\put(28,-3){\footnotesize {Salt and Pepper Noise ($\gamma$)}}  		
		\end{overpic}			
	\end{subfigure}	
		
	\vspace{+0.2cm}
	\caption{Pre-actived ResNet-18 evaluated on CIFAR-10 with different training schemes. Shaded regions indicate one standard deviation about the mean. Averaged across 5 random seeds.}
	\label{fig:cifar10}
\end{figure}

\begin{table*}[!b]
	\caption{Robustness of ResNet-18 w.r.t. white noise ($\sigma$) and salt and pepper ($\gamma$) perturbations evaluated on CIFAR-10. The results are averaged over 5 models trained with different seed values.}
	\label{tab:cifar10-resnet18}
	\centering
	\scalebox{0.83}{
		\begin{tabular}{l | c | c c c | c c c c c c}
			\toprule
			Scheme			                  & Clean (\%)  & \multicolumn{3}{c}{$\sigma$ (\%)} & \multicolumn{3}{c}{$\gamma$ (\%)} \\			
			&   & $0.1$ & $0.2$ & $0.3$ & $0.02$ & $0.04$ & $0.1$\\
			\midrule 
			Baseline                                               & 94.6 & 90.4 & 76.7 & 56.3		& 86.3 & 76.1 & 55.2  \\
			Baseline + Noise                                       & 94.4 & 94.0 & 87.5 & 71.2      & 89.3 & 
			82.5 & 64.9  \\			
			Baseline + Label Smoothing                             & 95.0 & 91.3 & 77.5 & 56.9 	& 87.7 & 79.2 & 60.0   \\	
			\midrule
			Mixup ($\alpha=1.0$)~\cite{zhang2017mixup}             & 95.6 & 93.2 & 85.4 & 71.8     & 87.1 & 76.1 & 55.2  \\
			CutMix ~\cite{yun2019cutmix}           & {\bf 96.3} & 86.7 & 60.8 & 32.4		& 90.9 & 81.7 & 54.7 \\							
			PuzzleMix~\cite{kimICML20}             & {\bf 96.3} & 91.7 & 78.1 & 59.9    & 91.4 & 81.8 & 54.4 \\	
			Manifold Mixup ($\alpha=1.0$)~\cite{verma2019manifold} & 95.7 & 92.7 & 82.7 & 67.6     & 88.9 & 80.2 & 57.6  \\	
			Noisy Mixup ($\alpha=1.0$)~\cite{NEURIPS2020_44e76e99} & 78.9 & 78.6 & 66.6 & 46.7		& 66.6 & 53.4 & 25.9  \\												
			\midrule
			Noisy Feature Mixup ($\alpha=1.0$) & 95.4 & {\bf 95.0} & {\bf 91.6} & {\bf 83.0}  & {\bf91.9} & {\bf 87.4} & {\bf 73.3}  \\	
			\bottomrule
	\end{tabular}}
\end{table*}

\subsection{CIFAR10} 

\textbf{Pre-activated ResNet-18.}
Table~\ref{tab:cifar10-resnet18} summarizes the performance improvements and indicates a consistent robustness across different $\alpha$ values. The model trained with NFM outperforms the baseline model on the clean test set, while being more robust to input perturbations (Fig.~\ref{fig:cifar10}; left). This advantage is also displayed in the models trained with mixup and manifold mixup, though in a less pronounced way. Notably, the NFM model is also robust to salt and pepper perturbations and could be significantly more so by further increasing the noise levels (Fig.~\ref{fig:cifar10}; right).
\begin{figure}[!t]
	\centering
	\begin{subfigure}[t]{0.47\textwidth}
		\centering
		\begin{overpic}[width=1\textwidth]{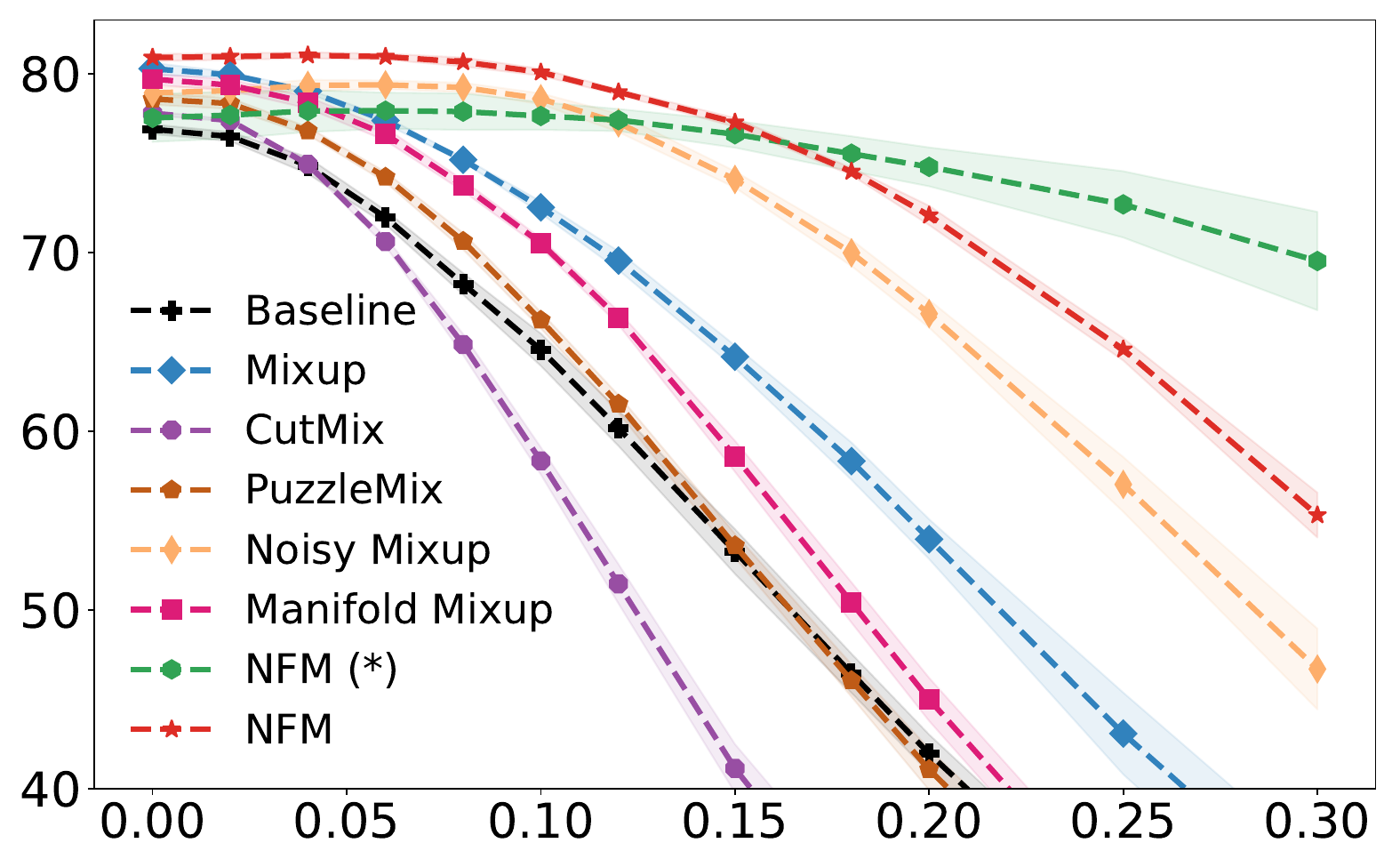}
			\put(-4,16){\rotatebox{90}{\footnotesize Test Accuracy}}			
			\put(37,-3){\footnotesize {White Noise ($\sigma$)}}  
		\end{overpic}		
	\end{subfigure}
	~
	\begin{subfigure}[t]{0.47\textwidth}
		\centering
		\begin{overpic}[width=1\textwidth]{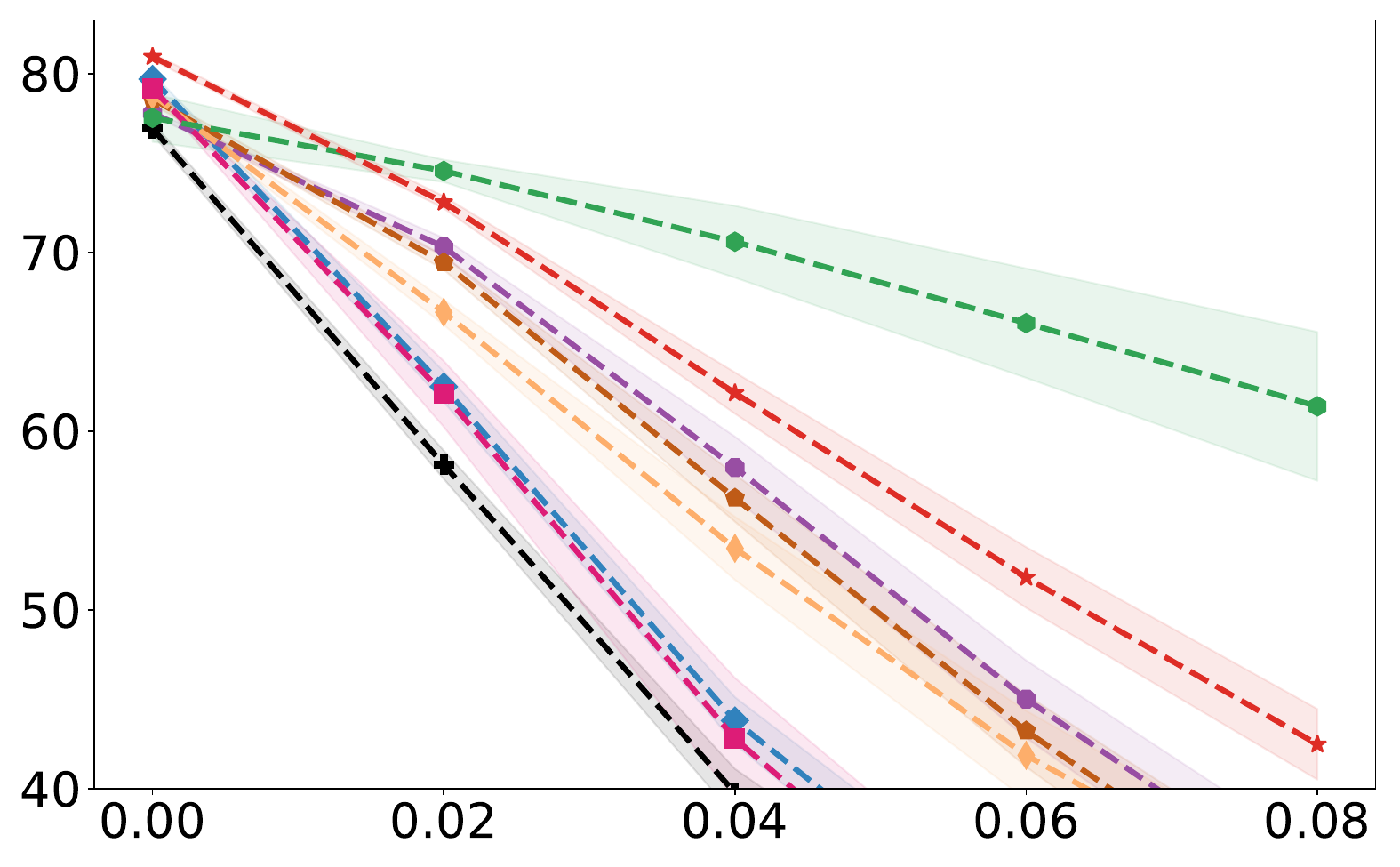} 
			\put(28,-3){\footnotesize {Salt and Pepper Noise ($\gamma$)}}  		
		\end{overpic}			
	\end{subfigure}	
	\vspace{+0.2cm}
	\caption{Wide ResNets evaluated on CIFAR-100. Averaged across 5 random seeds.}
	\label{fig:cifar100}
\end{figure}

\begin{table*}[!t]
	\caption{Robustness of Wide-ResNet-18 w.r.t. white noise ($\sigma$) and salt and pepper ($\gamma$) perturbations evaluated on CIFAR-100. The results are averaged over 5 models trained with different seed values.}
	\label{tab:cifar100-wresnet18}
	\centering
	\scalebox{0.83}{
		\begin{tabular}{l | c | c c c | c c c c c c}
			\toprule
			Scheme			                  & Clean (\%)  & \multicolumn{3}{c}{$\sigma$ (\%)} & \multicolumn{3}{c}{$\gamma$ (\%)} \\			
			&   & $0.1$ & $0.2$ & $0.3$ & $0.02$ & $0.04$ & $0.1$\\
			\midrule 
			Baseline                                               & 76.9 & 64.6 & 42.0 & 23.5		& 58.1 & 39.8 & 15.1  \\
			Baseline + Noise                                       & 76.1 & 75.2 & 60.5 & 37.6		& 64.9 & 51.3 & 23.0  \\		
			\midrule
			Mixup ($\alpha=1.0$)~\cite{zhang2017mixup}             & 80.3 & 72.5 & 54.0 & 33.4		& 62.5 & 43.8 & 16.2  \\		
			CutMix ~\cite{yun2019cutmix}                         & 77.8 & 58.3 & 28.1 & 13.8		& 70.3 & 58.  & 24.8  \\							
			
			PuzzleMix (200 epochs)~\cite{kimICML20}             & 78.6 & 66.2 & 41.1 & 22.6     & 69.4 & 56.3 & 23.3 \\				
			
			PuzzleMix (1200 epochs)~\cite{kimICML20}             & 80.3  & 53.0  & 19.1 & 6.2     & 69.3  & 51.9 & 15.7 \\	
			
			Manifold Mixup ($\alpha=1.0$)~\cite{verma2019manifold} & 79.7 & 70.5 & 45.0 & 23.8		& 62.1 & 42.8 & 14.8  \\	
			
			Noisy Mixup ($\alpha=1.0$)~\cite{NEURIPS2020_44e76e99} & 78.9 & 78.6 & 66.6 & 46.7		& 66.6 & 53.4 & 25.9  \\	
			\midrule
			Noisy Feature Mixup ($\alpha=1.0$) 						& {\bf 80.9} & {\bf 80.1} & {\bf 72.1} & {\bf 55.3}		& {\bf 72.8} &  {\bf 62.1} &  {\bf 34.4}  \\
			\bottomrule
	\end{tabular}}
\end{table*} 

\subsection{CIFAR-100}

\textbf{Wide ResNet-18.}   Previous work indicates that data augmentation has a positive effect on performance for this dataset~\cite{zhang2017mixup}. Fig.~\ref{fig:cifar100} (left) confirms that mixup and manifold mixup improve the generalization performance on clean data and highlights the advantage of data augmentation. The NFM training scheme is also capable of further improving the generalization performance. In addition, we see that the model trained with NFM is less sensitive to both white noise and salt and pepper perturbations. These results are surprising, as robustness is often thought to be at odds with accuracy~\cite{tsipras2018robustness}. However, we demonstrate NFM has the ability to improve both accuracy and robustness.  
Table~\ref{tab:cifar100-wresnet18} indicates that for the same $\alpha$, NFM can achieve an average test accuracy of $80.9\%$ compared to only $80.3\%$ in the mixup setting.

\subsection{ImageNet}

\textbf{ResNet-50.} Table~\ref{tab:imagenet-resnet50} similarly shows that NFM improves both the generalization and robustness capacities with respect to data perturbations. Although less pronounced in comparison to previous datasets, NFM shows a favorable trade-off without requiring additional computational resources. 
Note that due to computational costs, we do not average across multiple seeds and only compare NFM to the baseline and manifold mixup models.

\begin{table*}[!t]
	\caption{Robustness of ResNet-50 w.r.t. white noise ($\sigma$) and salt and pepper ($\gamma$) perturbations evaluated on ImageNet. Here, the NFM training scheme improves both the predictive accuracy on clean data and robustness with respect to data perturbations.}
	\label{tab:imagenet-resnet50}
	\centering
	\scalebox{0.83}{
		\begin{tabular}{l | c | c c c | c c c c c c}
			\toprule
			Scheme			                  & Clean (\%)  & \multicolumn{3}{c}{$\sigma$ (\%)} & \multicolumn{3}{c}{$\gamma$ (\%)} \\			
			&   & $0.1$ & $0.25$ & $0.5$ & $0.06$ & $0.1$ & $0.15$\\
			\midrule 
			Baseline & 76.0 & 73.5 & 67.0 & 50.1		& 53.2 & 50.4 & 45.0  \\
			Manifold Mixup ($\alpha=0.2$)~\cite{verma2019manifold} & 76.7 & 74.9 & 70.3 & 57.5		& 58.1 & 54.6 & 49.5  \\	
			\midrule								
			Noisy Feature Mixup ($\alpha=0.2$) & \textbf{77.0} & \textbf{76.5} & \textbf{72.0} & \textbf{60.1}		& 58.3 & 56.0 & 52.3  \\	
			Noisy Feature Mixup ($\alpha=1.0$) & 76.8 & 76.2 & 71.7 & 60.0		& \textbf{60.9} & \textbf{58.8} & \textbf{54.4}  \\	
			\bottomrule
	\end{tabular}}
\end{table*} 

\subsection{CIFAR-10c}

In Figure~\ref{fig:cifar10c} we use the CIFAR-10C dataset~\cite{hendrycks2019robustness} to demonstrate that models trained with NFM are more robust to a range of perturbations on natural images. Figure~\ref{fig:cifar10c} (left) shows the average test accuracy across six selected perturbations and demonstrates the advantage of NFM being particularly pronounced with the progression of severity levels. The right figure shows the performance on the same set of six perturbations for the median severity level 3. NFM excels on Gaussian, impulse, speckle and shot noise, and is competitive with the rest on the snow perturbation.

\subsection{Robustness to Adversarial Examples}

So far we have only considered white noise and salt and pepper perturbations. We further consider adversarial perturbations. Here, we use projected gradient decent~\cite{madry2017towards} with $7$ iterations and various $\epsilon$ levels to construct the adversarial perturbations. Fig.~\ref{fig:adv_pertubations} highlights the improved resilience of ResNets trained with NFM to adversarial input perturbations and shows this consistently on both CIFAR-10 (left) and CIFAR-100 (right). Models trained with both mixup and manifold mixup do not show a substantially increased resilience to adversarial perturbations. 

In Section~\ref{sec_adv}, we compare NFM to models that are adversarially trained. There, we see that adversarially trained models are indeed more robust to adversarial attacks, while at the same time being less accurate on clean data. However, models trained with NFM show an advantage compared to adversarially trained models when faced with salt and pepper perturbations.

\begin{figure}[!t]
	\centering
	\begin{subfigure}[t]{0.47\textwidth}
		\centering
		\begin{overpic}[width=1\textwidth]{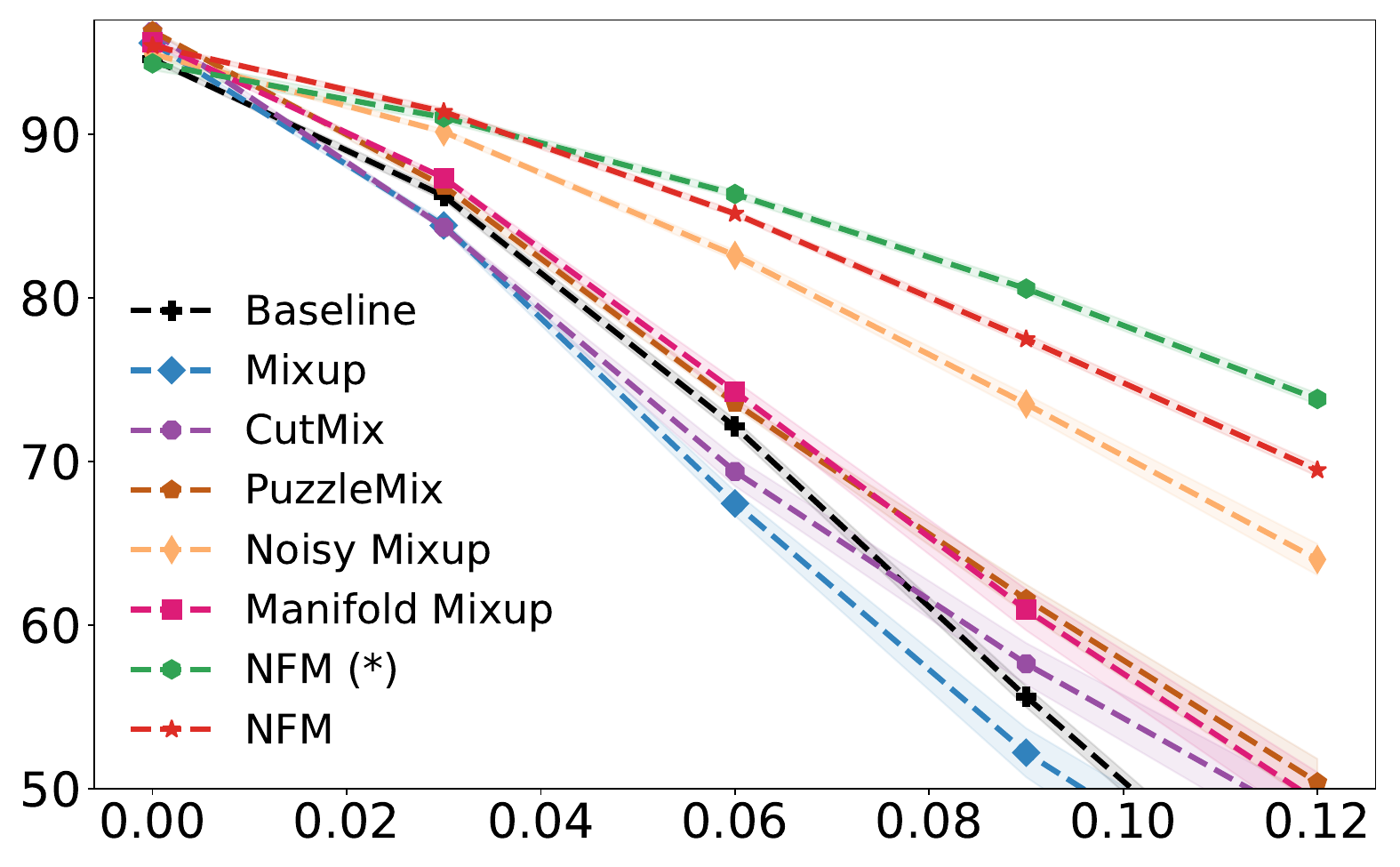}
			\put(-4,16){\rotatebox{90}{\footnotesize Test Accuracy}}			
			\put(32,-3){\footnotesize {Adverserial Noise ($\epsilon$)}}  	
		\end{overpic}
	\end{subfigure}
	~
	\begin{subfigure}[t]{0.47\textwidth}
		\centering
		\begin{overpic}[width=1\textwidth]{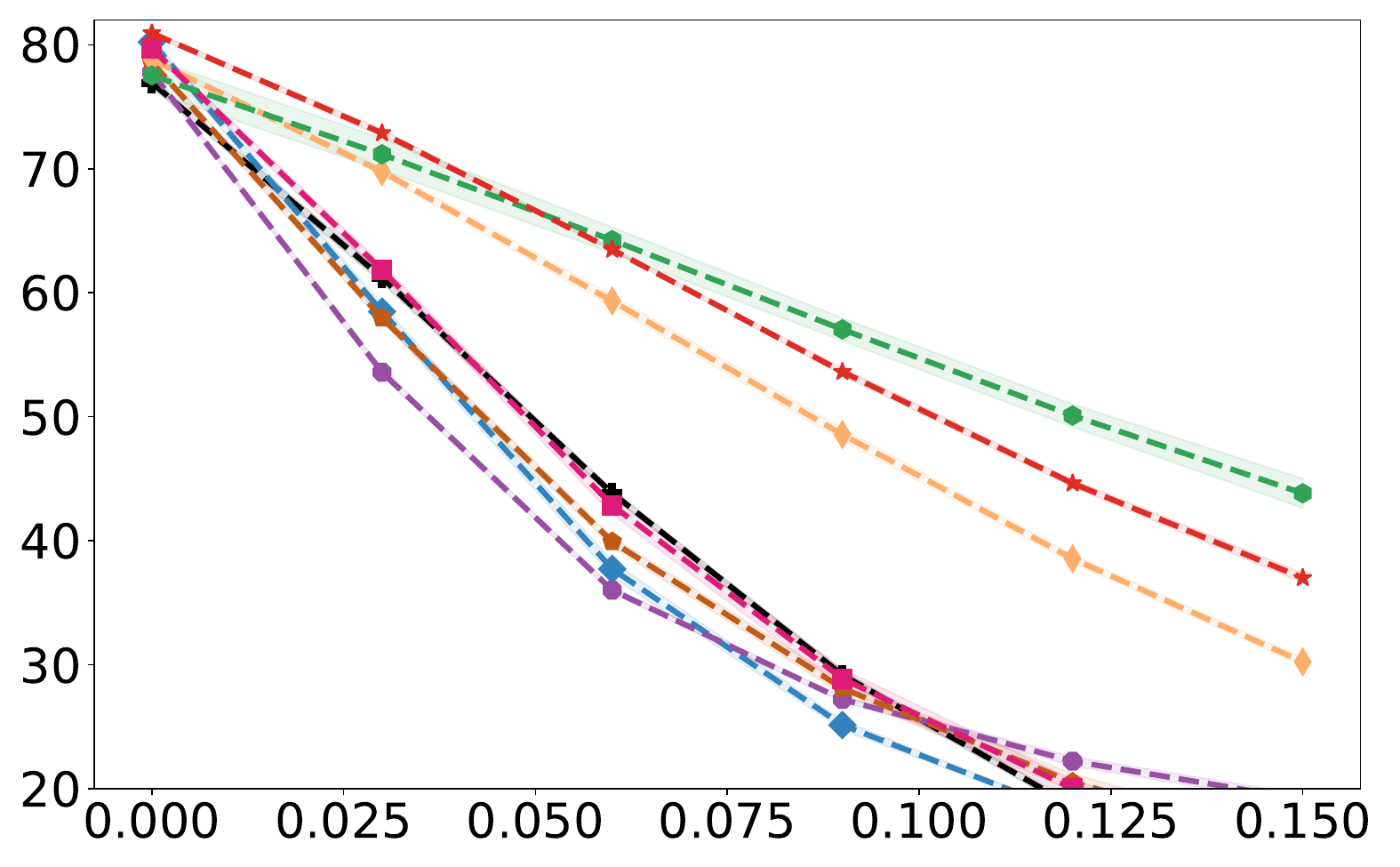} 
			\put(32,-3){\footnotesize {Adverserial Noise ($\epsilon$)}}  
		\end{overpic}			
	\end{subfigure}	
	
	\vspace{+0.2cm}
	\caption{Pre-actived ResNet-18 evaluated on CIFAR-10 (left) and Wide ResNet-18 evaluated on CIFAR-100 (right) with respect to adversarially perturbed inputs.} 
	\label{fig:adv_pertubations}
\end{figure}

\begin{figure}[!t]
	\centering
	\begin{subfigure}[t]{0.47\textwidth}
		\centering
		\begin{overpic}[width=1\textwidth]{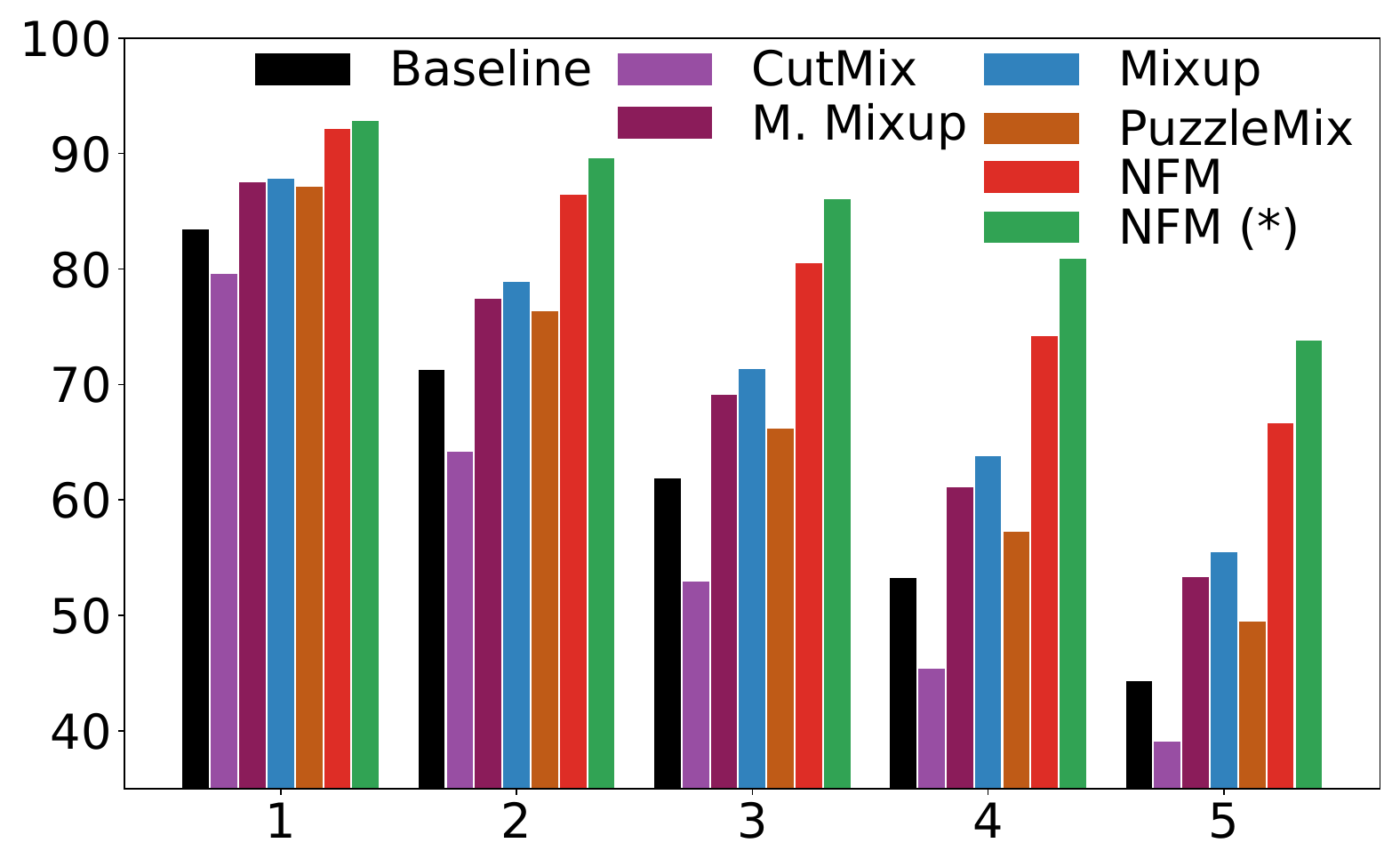}
			\put(-4,16){\rotatebox{90}{\footnotesize Test Accuracy}}			
			\put(45,-3){\footnotesize {Severities}}  	
		\end{overpic}
	\end{subfigure}
	~
	\begin{subfigure}[t]{0.47\textwidth}
		\centering
		\begin{overpic}[width=1\textwidth]{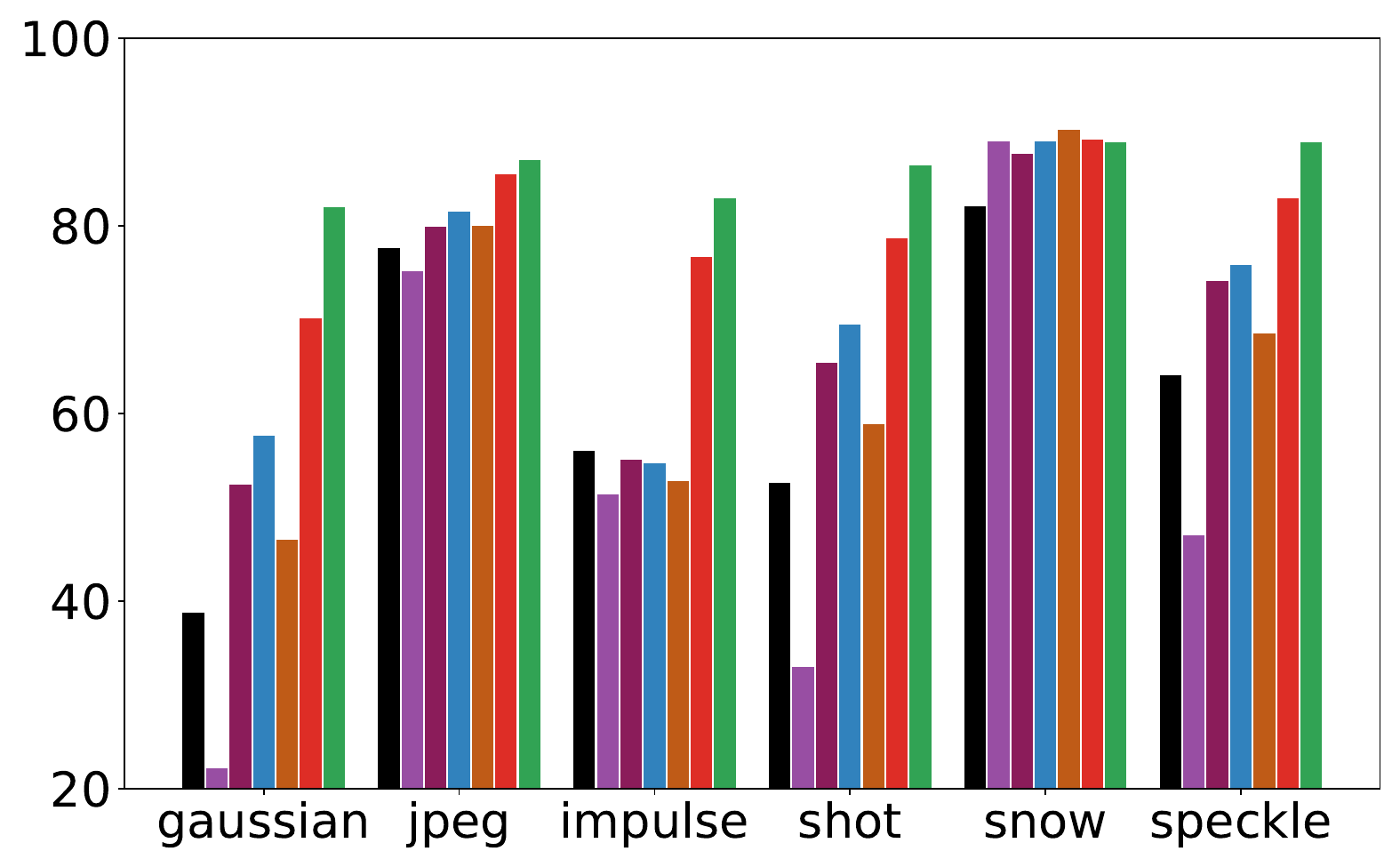} 
			\put(45,-3){\footnotesize {Noise type}}  
		\end{overpic}			
	\end{subfigure}	
	
	\vspace{+0.2cm}
	\caption{Pre-actived ResNet-18 evaluated on CIFAR-10c.} 
	\label{fig:cifar10c}
\end{figure}

\section{Conclusion}  \label{sec_concl}
We introduce Noisy Feature Mixup, an effective data augmentation method that combines mixup and noise injection. We identify the implicit regularization effects of NFM, showing that the effects are amplifications of those of manifold mixup and noise injection. Moreover, we demonstrate the benefits of NFM in terms of superior model robustness, both theoretically and experimentally. Our work inspires a range of interesting future directions, including theoretical investigations of  the trade-offs between accuracy and robustness for NFM and applications of NFM beyond computer vision tasks. 
Further, it will be interesting to study whether NFM may also lead to better model calibration by extending the analysis of \cite{thulasidasan2019mixup,zhang2021and}.

\section*{Acknowledgements}
S. H. Lim would like to acknowledge the WINQ Fellowship and the Knut and Alice Wallenberg Foundation for providing support of this work. N. B. Erichson and M. W. Mahoney would like to acknowledge IARPA (contract
W911NF20C0035), NSF, and ONR for providing partial support of this work. Our conclusions do not necessarily reflect the position or the policy of our sponsors, and no official endorsement should be inferred. We are also grateful for the generous support from Amazon AWS.

\clearpage

\bibliographystyle{plain}

\newpage
\appendix 
\begin{center}
    {\Large \bf  Supplementary Material (SM) for ``Noisy Feature Mixup"}
\end{center}

\noindent {\bf Organizational Details.} This {\bf SM} is organized as follows.

\begin{itemize}
    \item In Section \ref{sm_secB}, we study the regularizing effects of NFM  within the vicinal risk minimization framework, relating the effects to those of mixup and noise injection.
    \item In Section \ref{sm_secC}, we restate the results presented in the main paper and provide their proof.
    \item In Section \ref{sm_secD}, we study robutsness of NFM through the lens of implicit regularization, showing that NFM can implicitly increase the classification margin. 
    \item In Section \ref{sm_secE}, we study robustness of NFM via the lens of probabilistic robustness, showing that noise injection can improve robustness on top of manifold mixup while keeping track of maximal loss in accuracy incurred under attack by tuning the noise levels.
    \item In Section \ref{sm_secF}, we provide results on generalization bounds for NFM and their proofs, identifying the mechanisms by which NFM can lead to improved generalization bound.
    \item In Section \ref{sm_secG}, we provide additional experimental results and their details.
\end{itemize}

We recall the notation that we use in the main paper as well as this {\bf SM}.

\noindent {\bf Notation.}
$I$ denotes identity matrix, $[K] := \{1,\dots, K\}$, the superscript $^T$ denotes transposition, $\circ$ denotes composition, $\odot$ denotes Hadamard product, $\mathbb{1}$ denotes the vector with all components equal one. For a vector $v$, $v^k$ denotes its $k$th component and $\|v\|_p$ denotes its $l_p$ norm for $p > 0$. $conv(\mathcal{X})$ denote the convex hull of $\mathcal{X}$. $M_\lambda(a,b) := \lambda a + (1-\lambda) b$, for  random variables $a, b, \lambda$. $\delta_z$ denotes the Dirac delta function, defined as $\delta_z(x) = 1$ if $x = z$ and $\delta_z(x) = 0$ otherwise. $\mathbb{1}_A $ denotes indicator function of the set $A$. For $\alpha, \beta > 0$,  $\tilde{\mathcal{D}}_{\lambda}:= \frac{\alpha}{\alpha + \beta} Beta(\alpha + 1, \beta) + \frac{\beta}{\alpha + \beta} Beta(\beta + 1, \alpha)$, a uniform mixture of two Beta distributions. For two vectors $a,b$, $\cos(a,b) := \langle a, b \rangle/\|a\|_2\|b\|_2$ denotes their cosine similarity. $\mathcal{N}(a,b)$ denotes the Gaussian distribution with mean $a$ and covariance $b$.

\section{NFM Through the Lens of Vicinal Risk Minimization} \label{sm_secB}

In this section, we shall show that NFM can be constructed within a vicinal risk minimization (VRM) framework at the level of both input and hidden layer representations.

To begin with, we define a class of vicinal distributions and then relate NFM to such distributions.

\begin{defn}[Randomly perturbed feature distribution] \label{sm_defn_perturbeddistr} 
Let $\mathcal{Z}_n = \{z_1, \dots, z_n\}$ be a feature set. We say that $\mathbb{P}_n'$ is an $e_i$-randomly perturbed feature distribution if there exists a set $\{z_1', \dots, z_n'\}$ such that $\mathbb{P}_n' = \frac{1}{n} \sum_{i=1}^n \delta_{z_i'}$, with $z_i' = z_i + e_i$, for some random variable $e_i$ (possibly dependent on $\mathcal{Z}_n$) drawn from a probability distribution.
\end{defn}

Note that the support of an $e_i$-randomly perturbed feature distribution may be larger than that of $\mathcal{Z}$.

If $\mathcal{Z}_n$ is an input dataset and the $e_i$ are bounded  variables such that $\|e_i\| \leq \beta$ for some $\beta \geq 0$, then $\mathbb{P}_n'$ is a $\beta$-locally perturbed data distribution according to Definition 2 in \cite{kwon2020principled}. Examples of $\beta$-locally perturbed data distribution include that associated with denoising autoencoder, input mixup, and adversarial training (see Example 1-3 in \cite{kwon2020principled}). Definition \ref{sm_defn_perturbeddistr} can be viewed as an extension of the definition in \cite{kwon2020principled}, relaxing the boundedness condition on the  $e_i$ to cover a wide families of perturbed feature distribution. One simple example is the Gaussian distribution, i.e., when $e_i \sim \mathcal{N}(0, \sigma_i^2)$, which models Gaussian noise injection into the features. Another example is the distribution associated with NFM, which we now discuss.

To keep the randomly perturbed distribution close to the original distribution, the amplitude of the perturbation should be small. In the sequel, we let $\epsilon > 0$ be a small parameter and rescale $1-\lambda \mapsto \epsilon (1-\lambda)$, $\sigma_{add} \mapsto \epsilon \sigma_{add}$ and $\sigma_{mult} \mapsto \epsilon \sigma_{mult}$.

Let $\mathcal{F}_k$ be the family of mappings from $g_k(\mathcal{X})$ to $\mathcal{Y}$ and consider the VRM:
\begin{equation}
    \inf_{f_k \in \mathcal{F}_k} \mathcal{R}_{n}(f_k) := \mathbb{E}_{(g'_k(x),y') \sim \mathbb{P}^{(k)}_n}[l(f_k(g'_k(x))),y')],
\end{equation}
where 
$\mathbb{P}^{(k)}_n = \frac{1}{n} \sum_{i=1}^n \delta_{(g_k'(x_i),y_i')}$,  with $g_k'(x_i) = g_k(x_i) + \epsilon e_i^{NFM(k)}$ and $y_i' = y_i + \epsilon e_i^y$, for some random variables $e_i^{NFM(k)}$ and $e_i^y$. 

In NFM, we approximate the ground-truth distribution $\mathcal{D}$ using the family of distributions $\{\mathbb{P}^{(k)}_n\}_{k \in \mathcal{S}}$, with a particular choice of $(e_i^{NFM(k)}, e_i^y)$. In the sequel, we denote NFM at the level of $k$th layer as  $NFM(k)$ (i.e., the particular case when $\mathcal{S} := \{k\}$).

The following lemma identifies the $(e_i^{NFM(k)}, e_i^y)$ associated with $NFM(k)$ and relates the effects of $NFM(k)$ to those of mixup and noise injection, for any perturbation level $\epsilon > 0.$

\begin{lem} \label{sm_lem_VRM}
Let $\epsilon > 0$ and denote $z_i(k) := g_k(x_i)$. Learning the neural network map $f$ using $NFM(k)$ is a VRM with the $(\epsilon e_i^{NFM(k)}, \epsilon e_i^y)$-randomly perturbed feature distribution, $\mathbb{P}_{n}^{(k)} = \frac{1}{n} \sum_{i=1}^{n} \delta_{(z_{i}'(k), y_i')}$,  with $z_i'(k) := z_i(k) + \epsilon e_i^{NFM(k)}$, $y_i' := y_i + \epsilon e_i^y$, as the vicinal distribution. Here, $e_i^y = (1-\lambda)(\tilde{y}_i - y_i)$,
\begin{equation}
    e^{NFM(k)}_i =  (\mathbb{1}+\epsilon \sigma_{mult} \xi_{mult}) 
    \odot e_i^{mixup(k)} +  e_i^{noise(k)},
\end{equation}
where $e_i^{mixup(k)} = (1-\lambda)(\tilde{z}_i(k) - z_i(k))$, and $e_i^{noise(k)} =  \sigma_{mult} \xi_{mult} \odot z_i(k) + \sigma_{add} \xi_{add}$, with $z_i(k), \tilde{z}_i(k) \in g_k(\mathcal{X})$, $\lambda \sim Beta(\alpha, \beta)$ and $y_i, \tilde{y}_i \in \mathcal{Y}$. Here, $(\tilde{z}_i(k),\tilde{y}_i)$ are drawn randomly from the training set. 
\end{lem}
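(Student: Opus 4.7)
My plan is to prove this by direct algebraic computation: starting from the NFM perturbed-feature formula given in Section \ref{sec_NFM}, applying the stated rescaling, and reading off the vicinal distribution. The claim essentially reorganizes a single expression; there is no deep inequality involved, so the task is to match notation carefully and verify that the resulting distribution has the form required by Definition \ref{sm_defn_perturbeddistr}.

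First I would write down the noisy mixed minibatch produced by $NFM(k)$ on the pair $((g_k(x_i),y_i),(\tilde z_i(k),\tilde y_i))$, namely
\begin{equation*}
z_i'(k) = (\mathbb{1}+\sigma_{mult}\xi_{mult}) \odot M_\lambda(z_i(k),\tilde z_i(k)) + \sigma_{add}\xi_{add},\qquad y_i' = M_\lambda(y_i,\tilde y_i),
\end{equation*}
and then perform the substitutions $1-\lambda \mapsto \epsilon(1-\lambda)$, $\sigma_{add}\mapsto \epsilon\sigma_{add}$, $\sigma_{mult}\mapsto \epsilon\sigma_{mult}$. Using $M_\lambda(a,b) = a + (1-\lambda)(b-a)$, the convex combination becomes $z_i(k) + \epsilon(1-\lambda)(\tilde z_i(k)-z_i(k))$, and similarly for the labels $y_i' = y_i + \epsilon(1-\lambda)(\tilde y_i - y_i)$. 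This immediately gives the label perturbation $e_i^y = (1-\lambda)(\tilde y_i - y_i)$ as claimed.

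Next I would expand the product $(\mathbb{1}+\epsilon\sigma_{mult}\xi_{mult}) \odot [z_i(k) + \epsilon(1-\lambda)(\tilde z_i(k)-z_i(k))] + \epsilon\sigma_{add}\xi_{add}$ and collect the contribution beyond $z_i(k)$. Subtracting $z_i(k)$ and pulling out a factor of $\epsilon$ yields
\begin{equation*}
\epsilon e_i^{NFM(k)} = \epsilon(1-\lambda)(\tilde z_i(k)-z_i(k)) + \epsilon^2\sigma_{mult}(1-\lambda)\xi_{mult}\odot(\tilde z_i(k)-z_i(k)) + \epsilon\sigma_{mult}\xi_{mult}\odot z_i(k) + \epsilon\sigma_{add}\xi_{add}.
\end{equation*}
The first two terms combine to $\epsilon(\mathbb{1}+\epsilon\sigma_{mult}\xi_{mult})\odot e_i^{mixup(k)}$ with $e_i^{mixup(k)}=(1-\lambda)(\tilde z_i(k)-z_i(k))$, while the last two terms are precisely $\epsilon e_i^{noise(k)}$ with $e_i^{noise(k)} = \sigma_{mult}\xi_{mult}\odot z_i(k) + \sigma_{add}\xi_{add}$. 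This matches the stated decomposition.

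Finally, I would observe that $\lambda \sim Beta(\alpha,\beta)$, $(\tilde z_i(k),\tilde y_i)$ drawn uniformly from the training features at layer $k$, and $(\xi_{add},\xi_{mult})\sim\mathcal Q$ are jointly random and may depend on $\mathcal{Z}_n$, so $(\epsilon e_i^{NFM(k)},\epsilon e_i^y)$ qualifies as a random perturbation in the sense of Definition \ref{sm_defn_perturbeddistr}; the empirical distribution $\mathbb{P}_n^{(k)} = \tfrac{1}{n}\sum_{i=1}^n \delta_{(z_i'(k),y_i')}$ is therefore an $(\epsilon e_i^{NFM(k)},\epsilon e_i^y)$-randomly perturbed feature distribution, and the $NFM(k)$ expected loss coincides with the vicinal risk against $\mathbb{P}_n^{(k)}$. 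The only thing to be careful about is the order of the noise-injection and convex-combination steps, which was already noted in Section \ref{sec_NFM} to be immaterial; this justifies identifying $(\tilde{\tilde g}_k,\tilde y)$ with $(z_i'(k),y_i')$. No step presents a genuine obstacle — the entire argument is a bookkeeping exercise to confirm that the two decompositions $e_i^{mixup(k)}+e_i^{noise(k)}$ agree with the expansion of the NFM update after rescaling.
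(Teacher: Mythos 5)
Your proposal is correct and follows essentially the same route as the paper's proof: a direct expansion of the $NFM(k)$ update, decomposing the resulting perturbation into $e_i^{mixup(k)}$ and $e_i^{noise(k)}$ under the rescaling $1-\lambda \mapsto \epsilon(1-\lambda)$, $\sigma_{add}\mapsto\epsilon\sigma_{add}$, $\sigma_{mult}\mapsto\epsilon\sigma_{mult}$. The only cosmetic difference is that you apply the rescaling before the algebra while the paper works at $\epsilon=1$ and rescales at the end; both yield the same decomposition, including the order-$\epsilon^2$ cross term absorbed into $(\mathbb{1}+\epsilon\sigma_{mult}\xi_{mult})\odot e_i^{mixup(k)}$.
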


Therefore, the random perturbation associated to NFM is data-dependent, and it consists of a randomly weighted sum of that from injecting noise into the feature and that from mixing pairs of feature samples. As a simple example, one can take $\xi_{add}, \xi_{mult}$ to be independent standard Gaussian random variables, in which case we have $e_i^{noise(k)} \sim \mathcal{N}(0, \sigma_{add}^2 I + \sigma_{mult}^2 diag(z_i(k))^2)$, and $e_i \sim \mathcal{N}(0, \sigma_{add}^2 + \sigma_{mult}^2 M_\lambda(z_i(k), \tilde{z}_i(k))^2 )$ in Lemma \ref{sm_lem_VRM}.

We now prove Lemma \ref{sm_lem_VRM}. 

\begin{proof}[Proof of Lemma \ref{sm_lem_VRM}]
Let $k$ be given and set $\epsilon = 1$ without loss of generality. For every $i \in [n]$, $NFM(k)$ injects noise on top of a mixed sample $z_i'(k)$ and outputs:
\begin{align}
    z_i''(k) &= (\mathbb{1}+\sigma_{mult} \xi_{mult}) 
    \odot z_i'(k) + \sigma_{add} \xi_{add} \\
    &= (\mathbb{1}+\sigma_{mult} \xi_{mult}) 
    \odot (\lambda z_i(k) + (1-\lambda) \tilde{z}_i(k)) + \sigma_{add} \xi_{add} \\
    &= z_i(k) + e_i^{NFM(k)},
\end{align}
where $e_i^{NFM(k)} = (1-\lambda)(\tilde{z}_i(k) - z_i(k)) + \sigma_{mult} \xi_{mult} \odot (\lambda z_i(k) + (1-\lambda) \tilde{z}_i(k)) + \sigma_{add} \xi_{add}$.

Now, note that applying mixup to the pair $(z_i(k), \tilde{z}_i(k) )$  results in  $z_i'(k) = z_i(k) + e_i^{mixup(k)}$, with  $e_i^{mixup(k)} = (1-\lambda)(\tilde{z}_i(k) - z_i(k))$, where  $z_i(k), \tilde{z}_i(k) \in g_k(\mathcal{X})$ and $\lambda \sim Beta(\alpha, \beta)$, whereas applying noise injection to $z_i(k)$ results in $(\mathbb{1}+\sigma_{mult} \xi_{mult}) \odot z_i(k) + \sigma_{add} \xi_{add} = z_i(k) +  e_i^{noise(k)}$, with 
$e_i^{noise(k)} =  \sigma_{mult} \xi_{mult} \odot z_i(k) + \sigma_{add} \xi_{add}$. Rewriting $e^{NFM(k)}_i $ in terms of $e_i^{mixup(k)}$ and $e_i^{noise(k)}$ gives
\begin{equation}
    e^{NFM(k)}_i = (\mathbb{1}+\sigma_{mult} \xi_{mult}) 
    \odot e_i^{mixup(k)} +  e_i^{noise(k)}.
\end{equation}
Similarly, we can derive the expression for $e_i^y$ using the same argument. The results in the lemma follow  upon applying the rescaling $1-\lambda \mapsto \epsilon (1-\lambda)$, $\sigma_{add} \mapsto \epsilon \sigma_{add}$ and $\sigma_{mult} \mapsto \epsilon \sigma_{mult}$, for $\epsilon > 0$.
\end{proof}

\section{Statements and Proof of the Results in the Main Paper} \label{sm_secC}

\subsection{Complete Statement of Theorem \ref{prop_implicitreg} in the Main Paper and the Proof}

We first state the complete statement of Theorem \ref{prop_implicitreg} in the main paper.

\begin{thm}[Theorem \ref{prop_implicitreg} in the main paper] \label{sm_prop_implicitreg}
Let $\epsilon > 0$ be a small parameter, and assume that $h$ and $f$ are twice differentiable.  Then, $L^{NFM}_n = \mathbb{E}_{k 
    \sim \mathcal{S}} L^{NFM(k)}_n$, where
\begin{equation}
    L^{NFM(k)}_n = L_n^{std} + \epsilon R_1^{(k)} + \epsilon^2 \tilde{R}_2^{(k)} +  \epsilon^2 \tilde{R}_3^{(k)}  + \epsilon^2  \varphi(\epsilon),
\end{equation}
with
\begin{align}
    \tilde{R}_2^{(k)} &= R_2^{(k)} + \sigma_{add}^2 R_2^{add(k)} + \sigma_{mult}^2 R_2^{mult(k)}, \\
    \tilde{R}_3^{(k)} &= R_3^{(k)} + \sigma_{add}^2 R_3^{add(k)} + \sigma_{mult}^2 R_3^{mult(k)},
\end{align}
where 
\begin{align}
    R_1^{(k)} &= \frac{\mathbb{E}_{\lambda  \sim \tilde{\mathcal{D}}_\lambda}[1-\lambda]}{n} \sum_{i=1}^n (h'(f(x_i)-y_i)  \nabla_k f(g_k(x_i))^T \mathbb{E}_{x_r \sim \mathcal{D}_x}[g_k(x_r) - g_k(x_i)],  \\
    R_2^{(k)} &= \frac{\mathbb{E}_{\lambda  \sim \tilde{\mathcal{D}}_\lambda}[(1-\lambda)^2]}{2n} \sum_{i=1}^n h''(f(x_i)) \nabla_k f(g_k(x_i))^T \nonumber \\
    &\hspace{0.5cm}  \times \mathbb{E}_{x_r \sim \mathcal{D}_x}[(g_k(x_r) - g_k(x_i))(g_k(x_r) - g_k(x_i))^T]  \nabla_k f(g_k(x_i)), \\
    R_3^{(k)} &= \frac{\mathbb{E}_{\lambda  \sim \tilde{\mathcal{D}}_\lambda}[(1-\lambda)^2]}{2n} \sum_{i=1}^n (h'(f(x_i))-y_i) 
    \nonumber \\
    &\hspace{0.5cm}  \times \mathbb{E}_{x_r \sim \mathcal{D}_x}[(g_k(x_r) - g_k(x_i))^T \nabla_k^2 f(g_k(x_i))  (g_k(x_r) - g_k(x_i))], \\
     R_2^{add(k)} &= \frac{1}{2n} \sum_{i=1}^n h''(f(x_i)) \nabla_k f(g_k(x_i))^T \mathbb{E}_{\vecc{\xi}_k}[\xi_k^{add}(\xi_k^{add})^T]  \nabla_k f(g_k(x_i)), \\
    R_2^{mult(k)} &= \frac{1}{2n} \sum_{i=1}^n h''(f(x_i)) \nabla_k f(g_k(x_i))^T (\mathbb{E}_{\vecc{\xi}_k}[\xi_k^{mult} (\xi_k^{mult})^T] \odot  g_k(x_i) g_k(x_i)^T ) \nabla_k f(g_k(x_i)), \\
    R_3^{add(k)} &= \frac{1}{2n} \sum_{i=1}^n (h'(f(x_i))-y_i) \mathbb{E}_{\vecc{\xi}_k}[(\xi_k^{add})^T \nabla_k^2 f(g_k(x_i)) \xi_k^{add} ], \\
    R_3^{mult(k)} &= \frac{1}{2n} \sum_{i=1}^n (h'(f(x_i))-y_i)\mathbb{E}_{\vecc{\xi}_k}[(\xi_k^{mult} 
    \odot g_k(x_i))^T \nabla_k^2 f(g_k(x_i)) (\xi_k^{mult} 
\odot g_k(x_i)) ],
\end{align}
and $ \varphi(\epsilon) = \mathbb{E}_{\lambda \sim \tilde{\mathcal{D}}_{\lambda}  }  \mathbb{E}_{x_r \sim \mathcal{D}_x } \mathbb{E}_{\vecc{\xi}_k \sim \mathcal{Q}}[\varphi(\epsilon)]$, with $\varphi$ some function such that $\lim_{\epsilon \to 0} \varphi(\epsilon) = 0$.
\end{thm}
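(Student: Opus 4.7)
The plan is to derive the statement by a second-order Taylor expansion of the per-sample NFM loss in the small parameter $\epsilon$, combined with the standard mixup symmetrization trick that produces the mixture distribution $\tilde{\mathcal{D}}_\lambda$. Since $L_n^{NFM}$ is linear in the choice of layer, it suffices to analyze $L_n^{NFM(k)}$ for a fixed $k \in \mathcal{S}$ and then take the outer expectation $\mathbb{E}_{k\sim\mathcal{S}}$ to recover the general statement.

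For fixed $k$, I would first apply the mixup symmetrization (as in standard mixup analyses) to rewrite the expectation over $\lambda \sim Beta(\alpha,\beta)$ as one over $\lambda \sim \tilde{\mathcal{D}}_\lambda$, so that under the rescaling $1-\lambda \mapsto \epsilon(1-\lambda)$ the perturbation is anchored at $(g_k(x_i), y_i)$. Then I would expand
\[
M_{\lambda,\vecc{\xi}_k}(g_k(x_i),g_k(x_j)) - g_k(x_i) = \epsilon\,\Delta_{ij}^{(k)} + \epsilon^2\,\Gamma_{ij}^{(k)},
\]
where $\Delta_{ij}^{(k)} := (1-\lambda)(g_k(x_j)-g_k(x_i)) + \sigma_{mult}\,\xi_k^{mult}\odot g_k(x_i) + \sigma_{add}\,\xi_k^{add}$ is the first-order perturbation (the mixup displacement plus the additive and multiplicative noises acting on $g_k(x_i)$), and $\Gamma_{ij}^{(k)} := \sigma_{mult}(1-\lambda)\,\xi_k^{mult}\odot(g_k(x_j)-g_k(x_i))$ is the multiplicative cross term. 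A parallel expansion applies to the label, giving $\delta_y = \epsilon(1-\lambda)(y_j-y_i)$.

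Using $l(v,y)=h(v)-yv$, a second-order Taylor expansion of $l(f_k(g_k(x_i)+\delta), y_i+\delta_y)$ around $(f(x_i),y_i)$ produces the four contributions $(h'(f(x_i))-y_i)\nabla_k f(g_k(x_i))^T\delta$, $\tfrac12 (h'(f(x_i))-y_i)\,\delta^T \nabla_k^2 f(g_k(x_i))\,\delta$, $\tfrac12 h''(f(x_i))(\nabla_k f(g_k(x_i))^T\delta)^2$, and the label-perturbation term $-\delta_y\,f_k(g_k(x_i)+\delta)$, plus a Taylor remainder. I would substitute the expansions for $\delta$ and $\delta_y$, take expectations over $\lambda \sim \tilde{\mathcal{D}}_\lambda$, $\vecc{\xi}_k \sim \mathcal{Q}$ (assumed mean zero with finite second moments, as is implicit from the form of the stated regularizers), and $j$ uniform over $[n]$, and then collect terms by order of $\epsilon$. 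Independence and mean-zero of the noises kill every mixup--noise cross term, so the $\epsilon$-coefficient reduces to $R_1^{(k)}$; at $\epsilon^2$, the square $(\nabla_k f^T\Delta_{ij}^{(k)})^2$ splits additively into $R_2^{(k)}+\sigma_{add}^2 R_2^{add(k)}+\sigma_{mult}^2 R_2^{mult(k)}$ after invoking the identity $\mathbb{E}[(\xi_k^{mult}\odot g_k(x_i))(\xi_k^{mult}\odot g_k(x_i))^T] = \mathbb{E}[\xi_k^{mult}(\xi_k^{mult})^T]\odot g_k(x_i)g_k(x_i)^T$, while the quadratic form $(\Delta_{ij}^{(k)})^T \nabla_k^2 f\,\Delta_{ij}^{(k)}$ produces the corresponding $R_3$-regularizers by the same decomposition.

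The main obstacle, in my view, is the bookkeeping at order $\epsilon^2$. Three subtleties need care: (i) the label-perturbation pieces $-\delta_y f(x_i)$ at $\epsilon$ and $-\delta_y \nabla_k f^T \delta$ at $\epsilon^2$ must either vanish after the $\tilde{\mathcal{D}}_\lambda$-symmetrization (via the mixup swap symmetry $(x_i,y_i,\lambda)\leftrightarrow(x_j,y_j,1-\lambda)$) or recombine with the prediction-perturbation contributions to yield the $(h'(f(x_i))-y_i)$ prefactor appearing in $R_1^{(k)}$ and $R_3^{(k)}$; (ii) the multiplicative cross term $\Gamma_{ij}^{(k)}$ enters at $\epsilon^2$ through the first-order slot $(h'(f(x_i))-y_i)\nabla_k f^T(\epsilon^2 \Gamma_{ij}^{(k)})$ and must vanish in expectation by independence of $\lambda$ and $\xi_k^{mult}$ together with $\mathbb{E}[\xi_k^{mult}]=0$; and (iii) the Taylor remainder must be controlled uniformly in $i$ so that it packages as $\epsilon^2 \varphi(\epsilon)$ with $\varphi(\epsilon)\to 0$, which follows from twice-differentiability of $h$ and $f$ together with the finiteness of the training set. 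Once these are resolved, matching coefficients to the stated expressions is a direct calculation.
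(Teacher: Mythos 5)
Your plan is essentially the paper's proof: the paper first invokes the symmetrization argument of Lemma 3.1 of \cite{zhang2020does} to rewrite $L_n^{NFM(k)}$ as $\frac{1}{n}\sum_i \mathbb{E}_{\lambda\sim\tilde{\mathcal{D}}_\lambda}\mathbb{E}_{x_r\sim\mathcal{D}_x}\mathbb{E}_{\vecc{\xi}_k}[h(f_k(g_k(x_i)+\epsilon e_i^{NFM(k)}))-y_i f_k(g_k(x_i)+\epsilon e_i^{NFM(k)})]$, and then performs exactly your second-order Taylor expansion in $\epsilon$, with the perturbation split $\epsilon\Delta_{ij}^{(k)}+\epsilon^2\Gamma_{ij}^{(k)}$ appearing verbatim (your $\Gamma_{ij}^{(k)}$ is the $\epsilon(1-\lambda)\sigma_{mult}\xi_k^{mult}\odot(\cdot)$ term in the paper's $\psi_i'$, $\psi_i''$, and, as you say in (ii), its order-$\epsilon^2$ contribution dies in expectation because $\xi_k^{mult}$ is centered and independent of $\lambda,x_r$). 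Your explicit flagging of the mean-zero/uncorrelatedness assumptions on $\vecc{\xi}_k$ needed to kill the mixup--noise and additive--multiplicative cross terms, and of the uniform control of the remainder over the finite training set, is in fact more careful than the paper, which uses these facts silently.

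The one step you leave genuinely unresolved is your subtlety (i), and the hedge there hides a potential error. Because the loss is affine in $y$, the symmetrization is an exact identity that does two things at once: it replaces $Beta(\alpha,\beta)$ by the size-biased mixture $\tilde{\mathcal{D}}_\lambda$ \emph{and} it eliminates the label mixing, leaving the unmixed label $y_i$ --- the reweighting of $\lambda$ in $\tilde{\mathcal{D}}_\lambda$ is precisely the residue of the $(1-\lambda)\,y_r f(\cdot)$ term. So once you have passed to $\lambda\sim\tilde{\mathcal{D}}_\lambda$ there is no $\delta_y$ left to expand; carrying $\delta_y=\epsilon(1-\lambda)(y_j-y_i)$ alongside $\tilde{\mathcal{D}}_\lambda$ double-counts the label, and the prefactor $(h'(f(x_i))-y_i)$ in $R_1^{(k)}$, $R_3^{(k)}$ comes for free from differentiating $h(f_k(\cdot))-y_i f_k(\cdot)$, not from any recombination of label-perturbation pieces with prediction-perturbation pieces. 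Moreover, your fallback of cancelling the label terms \emph{after} expanding cannot be executed as stated: the swap symmetry $(x_i,y_i,\lambda)\leftrightarrow(x_j,y_j,1-\lambda)$ is broken by the rescaling $1-\lambda\mapsto\epsilon(1-\lambda)$, so the symmetrization must be carried out before $\epsilon$ is introduced --- which is exactly the order of operations in the paper (symmetrize, then rescale and Taylor expand). With that ordering fixed, the rest of your calculation matches the stated regularizers, including the Hadamard identity $\mathbb{E}[(\xi_k^{mult}\odot g_k(x_i))(\xi_k^{mult}\odot g_k(x_i))^T]=\mathbb{E}[\xi_k^{mult}(\xi_k^{mult})^T]\odot g_k(x_i)g_k(x_i)^T$ used to put $R_2^{mult(k)}$ and $R_3^{mult(k)}$ in the displayed form.
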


Following the setup of \cite{zhang2020does}, we provide empirical results to show that the second order Taylor approximation for the NFM loss function is generally accurate (see Figure \ref{fig_compare}).

\begin{figure}[b] 
\begin{center}
\includegraphics[scale=0.47]{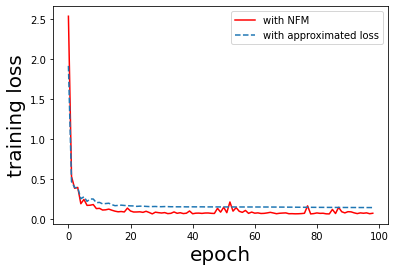}
\includegraphics[scale=0.47]{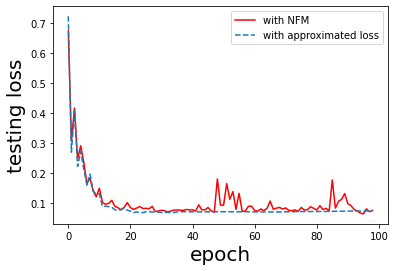}
\end{center}
\caption{Comparison of the original NFM loss with the approximate loss function during training and testing for a two layer ReLU neural network trained on the toy dataset of Subsection \ref{sm_secA}.}
\label{fig_compare}
\end{figure}

Recall from the main paper that the NFM loss function to be minimized is $L_n^{NFM} = \mathbb{E}_{k \sim \mathcal{S}} L_n^{NFM(k)}$, where
\begin{equation}
    L_n^{NFM(k)} = \frac{1}{n^2} \sum_{i=1}^n \sum_{j=1}^n
      \mathbb{E}_{\lambda 
    \sim Beta(\alpha, \beta)}  \mathbb{E}_{\vecc{\xi}_k \sim \mathcal{Q}} l(f_k(M_{\lambda, \vecc{
\xi}_k} (g_k(x_i), g_k(x_j))), M_{\lambda}(y_i,y_j)), \label{emp_NFMloss}
\end{equation}
where $l:\RR^K \times \RR^K \to [0, \infty)$ is a loss function of the form $l(f(x),y) = h(f(x)) - y f(x)$, $\vecc{\xi}_k := (\xi_k^{add}, \xi_k^{mult})$ are drawn from some probability distribution $\mathcal{Q}$ with finite first two moments (with zero mean), and
\begin{align}
    M_{\lambda, \vecc{
\xi}_k} (g_k(x), g_k(x')) &:= (\mathbb{1}+\sigma_{mult} \xi_k^{mult}) \odot M_\lambda(g_k(x), g_k(x')) + \sigma_{add} \xi_k^{add}.
\end{align}

Before proving Theorem \ref{sm_prop_implicitreg}, we note that, following the argument of the proof of Lemma 3.1 in  \cite{zhang2020does}, 
the  loss function minimized by NFM can be written as follows. For completeness, we provide all details of the proof.

\begin{lem} \label{sm_aux}
The NFM  loss  (\ref
{emp_NFMloss}) can be equivalently written as
$L^{NFM}_n = \mathbb{E}_{k 
    \sim \mathcal{S}} L^{NFM(k)}_n$, where
\begin{equation} \label{eq_1}
    L^{NFM(k)}_n = \frac{1}{n} \sum_{i=1}^n  \mathbb{E}_{\lambda 
\sim \tilde{\mathcal{D}}_{\lambda}  }  \mathbb{E}_{x_r 
\sim \mathcal{D}_x } \mathbb{E}_{\vecc{\xi}_k \sim \mathcal{Q}} [h(f_k(g_k(x_i) + \epsilon e_i^{NFM(k)})) - y_i f_k(g_k(x_i) + \epsilon e_i^{NFM(k)})],
\end{equation}
with
\begin{equation} \label{sm_eNFMk}
    e^{NFM(k)}_i =  (\mathbb{1}+\epsilon \sigma_{mult} \xi_k^{mult}) 
    \odot e_i^{mixup(k)} +  e_i^{noise(k)}.
\end{equation}
Here $e_i^{mixup(k)} = (1-\lambda)(g_k(x_r) - g_k(x_i))$ and $e_i^{noise(k)} =  \sigma_{mult} \xi_k^{mult} \odot g_k(x_i) + \sigma_{add} \xi_k^{add}$, with $g_k(x_i), g_k(x_r) \in g_k(\mathcal{X})$ and $\lambda \sim Beta(\alpha, \beta)$.
\end{lem}

\begin{proof}[Proof of Lemma \ref{sm_aux}]
From (\ref{emp_NFMloss}), we have:
\begin{align}
     L_n^{NFM(k)} &= \frac{1}{n^2} \sum_{i=1}^n \sum_{j=1}^n
      \mathbb{E}_{\lambda 
    \sim Beta(\alpha, \beta)}  \mathbb{E}_{\vecc{\xi}_k \sim \mathcal{Q}} l(f_k(M_{\lambda, \vecc{\xi}_k} (g_k(x_i), g_k(x_j))), M_{\lambda}(y_i,y_j)).
\end{align}
We can rewrite:
\begin{align}
    &\mathbb{E}_{\lambda 
    \sim Beta(\alpha, \beta)}  l(f_k(M_{\lambda, \vecc{\xi}_k} (g_k(x_i), g_k(x_j))), M_{\lambda}(y_i,y_j)) \nonumber \\
    &= \mathbb{E}_{\lambda 
    \sim Beta(\alpha, \beta)}  [h(f_k(M_{\lambda, \vecc{\xi}_k} (g_k(x_i), g_k(x_j)))) - M_{\lambda}(y_i,y_j) f_k(M_{\lambda, \vecc{\xi}_k} (g_k(x_i), g_k(x_j)))] \\ 
    &= \mathbb{E}_{\lambda 
    \sim Beta(\alpha, \beta)} [\lambda (h(f_k(M_{\lambda, \vecc{\xi}_k} (g_k(x_i), g_k(x_j)))) - y_i f_k(M_{\lambda, \vecc{\xi}_k} (g_k(x_i), g_k(x_j))) )  \nonumber \\ 
    &\ \ \ \ \ + (1-\lambda) ( h(f_k(M_{\lambda, \vecc{\xi}_k} (g_k(x_i), g_k(x_j)))) - y_j f_k(M_{\lambda, \vecc{\xi}_k} (g_k(x_i), g_k(x_j))))] \\
    &= \mathbb{E}_{\lambda 
    \sim Beta(\alpha, \beta)}  \mathbb{E}_{B \sim Bern(\lambda)}[B(h(f_k(M_{\lambda, \vecc{\xi}_k} (g_k(x_i), g_k(x_j)))) - y_i f_k(M_{\lambda, \vecc{\xi}_k} (g_k(x_i), g_k(x_j)))) \nonumber \\
    &\ \ \ \ \ + (1-B) (h(f_k(M_{\lambda, \vecc{\xi}_k} (g_k(x_i), g_k(x_j)))) - y_j f_k(M_{\lambda, \vecc{\xi}_k} (g_k(x_i), g_k(x_j)))) ], 
\end{align}
where $Bern(\lambda)$ denotes the Bernoulli distribution with parameter $\lambda$ (i.e., $\mathbb{P}[B=1] = \lambda$ and $\mathbb{P}[B=0] = 1-\lambda$).

Note that $\lambda \sim Beta(\alpha, \beta)$ and $B | \lambda \sim Bern(\lambda)$. By conjugacy, we can switch their order:
\begin{equation}
    B \sim Bern\left(\frac{\alpha}{\alpha + \beta} \right), \ \lambda | B \sim Beta(\alpha + B, \beta + 1 - B),
\end{equation}
and arrive at:
\begin{align}
    &\mathbb{E}_{\lambda 
    \sim Beta(\alpha, \beta)}  l(f_k(M_{\lambda, \vecc{\xi}_k} (g_k(x_i), g_k(x_j))), M_{\lambda}(y_i,y_j)) \nonumber \\
    &= \mathbb{E}_{B \sim Bern\left(\frac{\alpha}{\alpha + \beta} \right)} \mathbb{E}_{\lambda \sim  Beta(\alpha + B, \beta + 1 - B)} [B(h(f_k(M_{\lambda, \vecc{\xi}_k} (g_k(x_i), g_k(x_j)))) \nonumber \\
    &\ \ \ \ \ - y_i f_k(M_{\lambda, \vecc{\xi}_k} (g_k(x_i), g_k(x_j)))) \nonumber \\
    &\ \ \ \ \ + (1-B) (h(f_k(M_{\lambda, \vecc{\xi}_k} (g_k(x_i), g_k(x_j)))) - y_j f_k(M_{\lambda, \vecc{\xi}_k} (g_k(x_i), g_k(x_j)))) ] \\
    &= \frac{\alpha}{\alpha + \beta} \mathbb{E}_{\lambda \sim  Beta(\alpha + 1, \beta)} [h(f_k(M_{\lambda, \vecc{\xi}_k} (g_k(x_i), g_k(x_j))))  - y_i f_k(M_{\lambda, \vecc{\xi}_k} (g_k(x_i), g_k(x_j)))] \nonumber \\
    &\ \ \ \ \ +  \frac{\beta}{\alpha + \beta} \mathbb{E}_{\lambda \sim  Beta(\alpha, \beta + 1)} [h(f_k(M_{\lambda, \vecc{\xi}_k} (g_k(x_i), g_k(x_j)))) - y_j f_k(M_{\lambda, \vecc{\xi}_k} (g_k(x_i), g_k(x_j))) ].
\end{align}

Using the facts that $Beta(\beta+1, \alpha)$ and $1-Beta(\alpha, \beta + 1)$ are of the same distribution and $M_{1-\lambda}(x_i, x_j) = M_\lambda(x_j, x_i)$, 
we have:
\begin{align}
    &\sum_{i,j}  \mathbb{E}_{\lambda \sim  Beta(\alpha, \beta + 1)} [h(f_k(M_{\lambda, \vecc{\xi}_k} (g_k(x_i), g_k(x_j)))) - y_j f_k(M_{\lambda, \vecc{\xi}_k} (g_k(x_i), g_k(x_j))) ] \nonumber \\
    &= \sum_{i,j}  \mathbb{E}_{\lambda \sim  Beta(\beta + 1, \alpha)} [h(f_k(M_{\lambda, \vecc{\xi}_k} (g_k(x_i), g_k(x_j)))) - y_i f_k(M_{\lambda, \vecc{\xi}_k} (g_k(x_i), g_k(x_j))) ].
\end{align}

Therefore, denoting $\tilde{\mathcal{D}}_{\lambda}:= \frac{\alpha}{\alpha + \beta} Beta(\alpha + 1, \beta) + \frac{\beta}{\alpha + \beta} Beta(\beta + 1, \alpha)$ and $\mathcal{D}_x := \frac{1}{n} \sum_{j=1}^n \delta_{x_j}$ the empirical distribution induced by the training samples $\{x_j\}_{j \in [n]}$, we have:
\begin{align}
     L^{NFM(k)}_n 
     &= \frac{1}{n} \sum_{i=1}^n  \mathbb{E}_{\lambda \sim \tilde{\mathcal{D}}_{\lambda}  }  \mathbb{E}_{x_r  \sim \mathcal{D}_x } \mathbb{E}_{\vecc{\xi}_k \sim \mathcal{Q}}[h(f_k(M_{\lambda, \vecc{\xi}_k} (g_k(x_i), g_k(x_r)))) \nonumber \\
     &\ \ \ \ \ \  - y_i f_k(M_{\lambda, \vecc{\xi}_k} (g_k(x_i), g_k(x_r))) ].
\end{align}

The statement of the lemma follows upon substituting the fact that $M_{\lambda, \vecc{\xi}_k} (g_k(x_i), g_k(x_r)) = g_k(x_i) + \epsilon e_i^{NFM(k)}$ into the above equation.
\end{proof}

With this lemma in hand, we now prove Theorem \ref{sm_prop_implicitreg}.

\begin{proof}[Proof of Theorem \ref{sm_prop_implicitreg}]
Denote $\psi_i(\epsilon) := h(f_k(g_k(x_i) + \epsilon e_i^{NFM(k)})) - y_i f_k(g_k(x_i) + \epsilon e_i^{NFM(k)})$, where $e_i^{NFM(k)}$ is given in (\ref{sm_eNFMk}).
Since $h$ and $f_k$ are twice differentiable by assumption, $\psi_i$ is  twice differentiable in $\epsilon$, and 
\begin{equation}
    \psi_i(\epsilon) =     \psi_i(0) +  \epsilon   \psi_i'(0) + \frac{\epsilon^2}{2} \psi''_i(0) + \epsilon^2 \varphi_i(\epsilon),
\end{equation}
where $\varphi_i$ is some function such that $\lim_{\epsilon \to 0} \varphi_i(\epsilon) = 0$.
Therefore, by Lemma \ref{sm_aux}, $L^{NFM}_n = \mathbb{E}_{k 
    \sim \mathcal{S}} L^{NFM(k)}_n$, where
\begin{align}
    L^{NFM(k)}_n 
    &= \frac{1}{n} \sum_{i=1}^n  \mathbb{E}_{\lambda 
    \sim \tilde{\mathcal{D}}_{\lambda}  }  \mathbb{E}_{x_r 
    \sim \mathcal{D}_x } \mathbb{E}_{\vecc{\xi}_k \sim \mathcal{Q}} [\psi_i(\epsilon)] \\
    &= \frac{1}{n} \sum_{i=1}^n  \mathbb{E}_{\lambda 
    \sim \tilde{\mathcal{D}}_{\lambda}  }  \mathbb{E}_{x_r 
    \sim \mathcal{D}_x } \mathbb{E}_{\vecc{\xi}_k \sim \mathcal{Q}} \left[\psi_i(0) +  \epsilon   \psi_i'(0) + \frac{\epsilon^2}{2} \psi''_i(0) + \epsilon^2 \varphi_i(\epsilon)\right] \\
    &= \frac{1}{n} \sum_{i=1}^n  \mathbb{E}_{\lambda 
    \sim \tilde{\mathcal{D}}_{\lambda}  }  \mathbb{E}_{x_r 
    \sim \mathcal{D}_x } \mathbb{E}_{\vecc{\xi}_k \sim \mathcal{Q}} \left[\psi_i(0) + \epsilon   \psi_i'(0) + \frac{\epsilon^2}{2} \psi''_i(0) \right]  + \epsilon^2 \varphi(\epsilon) \label{sm_eqNFMloss} \\
    &=: L_n^{std} + \epsilon R_1^{(k)} +  \epsilon^2 ( \tilde{R}_2^{(k)} + \tilde{R}_3^{(k)}) + \epsilon^2 \varphi(\epsilon),
\end{align}
where $ \varphi(\epsilon) = \frac{1}{n} \sum_{i=1}^n \mathbb{E}_{\lambda \sim \tilde{\mathcal{D}}_{\lambda}  }  \mathbb{E}_{x_r \sim \mathcal{D}_x } \mathbb{E}_{\vecc{\xi}_k \sim \mathcal{Q}}[\varphi_i(\epsilon)]$.

It remains to compute $\psi_i'(0)$ and $\psi_i''(0)$ in order to arrive at the expression for the $R_1^{(k)}$,  $\tilde{R}_2^{(k)}$ and $\tilde{R}_3^{(k)}$ presented in Theorem \ref{sm_prop_implicitreg}.

Denoting $\tilde{g}_k(x_i) := g_k(x_i) + \epsilon e_i^{NFM(k)}$, we compute,  applying chain rule:
\begin{align}
    \psi_i'(\epsilon) 
    &= h'(f_k(\tilde{g}_k(x_i)))  \nabla_k f_k(\tilde{g}_k(x_i))^T \frac{\partial \tilde{g}_k(x_i)}{\partial \epsilon} -  y_i \nabla_k f_k(\tilde{g}_k(x_i))^T \frac{\partial \tilde{g}_k(x_i)}{\partial \epsilon} \\
    &= (h'(f_k(\tilde{g}_k(x_i))) - y_i) \nabla_k f_k(\tilde{g}_k(x_i))^T  \frac{\partial \tilde{g}_k(x_i)}{\partial \epsilon} \\
    &= (h'(f_k(\tilde{g}_k(x_i))) - y_i) \nabla_k f_k(\tilde{g}_k(x_i))^T e_i^{NFM(k)} \\
    &= (h'(f_k(\tilde{g}_k(x_i))) - y_i) \nabla_k f_k(\tilde{g}_k(x_i))^T [ (1-\lambda)(g_k(x_r) - g_k(x_i)) + \sigma_{add} \xi_k^{add} \nonumber \\ 
    &\ \ \ \ + \sigma_{mult} \xi_k^{mult} \odot g_k(x_i)  + \epsilon (1-\lambda) \sigma_{mult} \xi_k^{mult} \odot (g_k(x_r) - g_k(x_i))  ],
\end{align}
where we have used $\frac{\partial \tilde{g}_k(x_i)}{\partial \epsilon} = e_i^{NFM(k)}$ in the second last line and substituted the expression for $e_i^{NFM(k)}$ from (\ref{sm_eNFMk}) in the last line above. 

Therefore, 
\begin{align}
    \psi_i'(0) &= (h'(f_k(g_k(x_i))) - y_i) \nabla_k f_k(g_k(x_i))^T [ (1-\lambda)(g_k(x_r) - g_k(x_i)) + \sigma_{add} \xi_k^{add} \nonumber \\ 
    &\ \ \ \ + \sigma_{mult} \xi_k^{mult} \odot g_k(x_i) ],
\end{align}
and 
\begin{align}
    \mathbb{E}_{\vecc{\xi}_k \sim \mathcal{Q}} \psi_i'(0) &= (h'(f_k(g_k(x_i))) - y_i) \nabla_k f_k(g_k(x_i))^T [ (1-\lambda)(g_k(x_r) - g_k(x_i)) ], \label{sm_eq_last}
\end{align}
where we have used  the assumptions that $\mathbb{E}_{\vecc{\xi}_k \sim \mathcal{Q}} \xi_k^{add} = 0$ and  $\mathbb{E}_{\vecc{\xi}_k \sim \mathcal{Q}} \xi_k^{mult} = 0$. The expression for the $R_1^{(k)}$  in the theorem then  follows from substituting (\ref{sm_eq_last}) into (\ref{sm_eqNFMloss}).

Next, using chain rule, we have:
\begin{align}
     \psi_i''(\epsilon) 
     &= \frac{\partial}{\partial \epsilon} \left( (h'(f_k(\tilde{g}_k(x_i))) - y_i) \nabla_k f_k(\tilde{g}_k(x_i))^T  \frac{\partial \tilde{g}_k(x_i)}{\partial \epsilon}  \right) \\ 
     &= \left( \frac{\partial}{\partial \epsilon}  (h'(f_k(\tilde{g}_k(x_i))) - y_i) \right) \nabla_k f_k(\tilde{g}_k(x_i))^T  \frac{\partial \tilde{g}_k(x_i)}{\partial \epsilon}  \nonumber \\ 
     &\ \ \ \ \ +  (h'(f_k(\tilde{g}_k(x_i))) - y_i) \frac{\partial}{\partial \epsilon}  \left( \nabla_k f_k(\tilde{g}_k(x_i))^T  \frac{\partial \tilde{g}_k(x_i)}{\partial \epsilon} \right).
\end{align}

Note that, applying chain rule,
\begin{align}
    \frac{\partial}{\partial \epsilon}  \left( \nabla_k f_k(\tilde{g}_k(x_i))^T  \frac{\partial \tilde{g}_k(x_i)}{\partial \epsilon} \right) 
    &= \frac{\partial}{\partial \epsilon}  \left( \nabla_k f_k(\tilde{g}_k(x_i))^T  e_i^{NFM(k)} \right) \\
    &= \frac{\partial}{\partial \epsilon}   \left( (e_i^{NFM(k)})^T  \nabla_k f_k(\tilde{g}_k(x_i))  \right) \\
    &= (e_i^{NFM(k)})^T \nabla_k^2 f_k(\tilde{g}_k(x_i)) \frac{\partial \tilde{g}_k(x_i)}{\partial \epsilon} \\
    &= (e_i^{NFM(k)})^T \nabla_k^2 f_k(\tilde{g}_k(x_i))  e_i^{NFM(k)}.
\end{align}

Also, using chain rule again,
\begin{align}
    \left( \frac{\partial}{\partial \epsilon}  (h'(f_k(\tilde{g}_k(x_i))) - y_i) \right) 
    &= h''(f_k(\tilde{g}_k(x_i)))  \nabla_k f_k(\tilde{g}_k(x_i))^T \frac{\partial \tilde{g}_k(x_i)}{\partial \epsilon} \\
    &= h''(f_k(\tilde{g}_k(x_i)))  \nabla_k f_k(\tilde{g}_k(x_i))^T e_i^{NFM(k)}.
\end{align}

Therefore, we have:
\begin{align}
    \psi_i''(\epsilon) &= 
    h''(f_k(\tilde{g}_k(x_i))) \nabla_k f_k(\tilde{g}_k(x_i))^T  e_i^{NFM(k)} (e_i^{NFM(k)})^T \nabla_k f_k(\tilde{g}_k(x_i)) \nonumber \\
    &\ \ \ \ + (h'(f_k(\tilde{g}_k(x_i))) - y_i)  (e_i^{NFM(k)})^T  \nabla_k^2 f_k(\tilde{g}_k(x_i))  e_i^{NFM(k)} \\
    &= h''(f_k(\tilde{g}_k(x_i))) \nabla_k f_k(\tilde{g}_k(x_i))^T [ (1-\lambda)(g_k(x_r) - g_k(x_i)) + \sigma_{add} \xi_k^{add} \nonumber \\ 
    &\ \ \ \ + \sigma_{mult} \xi_k^{mult} \odot g_k(x_i)  + \epsilon (1-\lambda) \sigma_{mult} \xi_k^{mult} \odot (g_k(x_r) - g_k(x_i))  ] \nonumber \\
    &\ \ \  \ \times [ (1-\lambda)(g_k(x_r) - g_k(x_i)) + \sigma_{add} \xi_k^{add} + \sigma_{mult} \xi_k^{mult} \odot g_k(x_i) \nonumber \\ 
    &\ \ \ \ + \epsilon (1-\lambda) \sigma_{mult} \xi_k^{mult} \odot (g_k(x_r) - g_k(x_i))  ]^T \nabla_k f_k(\tilde{g}_k(x_i)) \nonumber \\
    &\ \ \ \ + (h'(f_k(\tilde{g}_k(x_i))) - y_i)  [ (1-\lambda)(g_k(x_r) - g_k(x_i)) + \sigma_{add} \xi_k^{add} + \sigma_{mult} \xi_k^{mult} \odot g_k(x_i) \nonumber \\ 
    &\ \ \ \ + \epsilon (1-\lambda) \sigma_{mult} \xi_k^{mult} \odot (g_k(x_r) - g_k(x_i))  ]^T \nabla_k^2 f_k(\tilde{g}_k(x_i))  [ (1-\lambda)(g_k(x_r) - g_k(x_i)) \nonumber \\ 
    &\ \ \ \ + \sigma_{add} \xi_k^{add} + \sigma_{mult} \xi_k^{mult} \odot g_k(x_i) + \epsilon (1-\lambda) \sigma_{mult} \xi_k^{mult} \odot (g_k(x_r) - g_k(x_i))] \\
    &=: h''(f_k(\tilde{g}_k(x_i))) \nabla_k f_k(\tilde{g}_k(x_i))^T P_1(\epsilon)  \nabla_k f_k(\tilde{g}_k(x_i)) + (h'(f_k(\tilde{g}_k(x_i))) - y_i) P_2(\epsilon), \label{sm_eq_complex}
\end{align}
where we have substituted the expression for the $e_i^{NFM(k)}$ into the first line to arrive at the last line above.

Note that,
\begin{align}
    &\mathbb{E}_{\vecc{\xi}_k \sim \mathcal{Q}} P_1(\epsilon) \nonumber \\ 
    &= \mathbb{E}_{\vecc{\xi}_k \sim \mathcal{Q}} [ (1-\lambda)(g_k(x_r) - g_k(x_i)) + \sigma_{add} \xi_k^{add} + \sigma_{mult} \xi_k^{mult} \odot g_k(x_i) \nonumber \\
    &\ \ \ \ + \epsilon (1-\lambda) \sigma_{mult} \xi_k^{mult} \odot (g_k(x_r) - g_k(x_i))  ] \times [ (1-\lambda)(g_k(x_r) - g_k(x_i)) + \sigma_{add} \xi_k^{add} \nonumber \\ &\ \ \ \ + \sigma_{mult} \xi_k^{mult} \odot g_k(x_i) + \epsilon (1-\lambda) \sigma_{mult} \xi_k^{mult} \odot (g_k(x_r) - g_k(x_i))  ]^T \\
    &=  (1-\lambda)^2 (g_k(x_r) - g_k(x_i))   (g_k(x_r) - g_k(x_i))^T + \sigma_{add}^2 \mathbb{E}_{\vecc{\xi}_k \sim \mathcal{Q}} [ \xi_k^{add}  (\xi_k^{add})^T] \nonumber \\
    &\ \ \ \ \  + \sigma_{mult}^2 \mathbb{E}_{\vecc{\xi}_k \sim \mathcal{Q}}[ (\xi_k^{mult} \odot g_k(x_i)) (\xi_k^{mult} \odot g_k(x_i))^T] + o(\epsilon) \\
    &=  (1-\lambda)^2 (g_k(x_r) - g_k(x_i))   (g_k(x_r) - g_k(x_i))^T + \sigma_{add}^2 \mathbb{E}_{\vecc{\xi}_k \sim \mathcal{Q}} [ \xi_k^{add}  (\xi_k^{add})^T] \nonumber \\
    &\ \ \ \ \  + \sigma_{mult}^2 \mathbb{E}_{\vecc{\xi}_k \sim \mathcal{Q}}[ (\xi_k^{mult} (\xi_k^{mult})^T)  \odot g_k(x_i))  g_k(x_i)^T] + o(\epsilon),
\end{align}
as $\epsilon \to 0$, where we have used  the assumption that  $\mathbb{E}_{\vecc{\xi}_k \sim \mathcal{Q}} \xi_k^{add} = 0$ and  $\mathbb{E}_{\vecc{\xi}_k \sim \mathcal{Q}} \xi_k^{mult} = 0$ in the second last line above.

Similarly, 
\begin{align}
    &\mathbb{E}_{\vecc{\xi}_k \sim \mathcal{Q}} P_2(\epsilon) \nonumber \\ 
    &= \mathbb{E}_{\vecc{\xi}_k \sim \mathcal{Q}} [(1-\lambda)(g_k(x_r) - g_k(x_i)) + \sigma_{add} \xi_k^{add} + \sigma_{mult} \xi_k^{mult} \odot g_k(x_i) \nonumber \\ 
    &\ \ \ \ \ + \epsilon (1-\lambda) \sigma_{mult} \xi_k^{mult} \odot (g_k(x_r) - g_k(x_i))  ]^T \nabla_k^2 f_k(\tilde{g}_k(x_i))  [ (1-\lambda)(g_k(x_r) - g_k(x_i)) \nonumber \\ 
    &\ \ \ \ \ + \sigma_{add} \xi_k^{add} + \sigma_{mult} \xi_k^{mult} \odot g_k(x_i) + \epsilon (1-\lambda) \sigma_{mult} \xi_k^{mult} \odot (g_k(x_r) - g_k(x_i))] \\
    &= (1-\lambda)^2 (g_k(x_r) - g_k(x_i))^T \nabla_k^2 f_k(\tilde{g}_k(x_i))   (g_k(x_r) - g_k(x_i)) \nonumber \\
    &\ \ \ \ \ + \sigma_{add}^2 \mathbb{E}_{\vecc{\xi}_k \sim \mathcal{Q}} [  (\xi_k^{add})^T \nabla_k^2 f_k(\tilde{g}_k(x_i)) \xi_k^{add}  ] \nonumber \\
    &\ \ \ \ \ + \sigma_{mult}^2 \mathbb{E}_{\vecc{\xi}_k \sim \mathcal{Q}}[ (\xi_k^{mult} \odot g_k(x_i))^T \nabla_k^2 f_k(\tilde{g}_k(x_i)) (\xi_k^{mult} \odot g_k(x_i))  ] + o(\epsilon),
\end{align}
as $\epsilon \to 0$.

Now, recall from Eq. (\ref{sm_eqNFMloss}) that we have
\begin{align}
    L_n^{NFM(k)} &= \frac{1}{n} \sum_{i=1}^n  \mathbb{E}_{\lambda 
    \sim \tilde{\mathcal{D}}_{\lambda}  }  \mathbb{E}_{x_r 
    \sim \mathcal{D}_x } \mathbb{E}_{\vecc{\xi}_k \sim \mathcal{Q}} \left[\psi_i(0) + \epsilon   \psi_i'(0) + \frac{\epsilon^2}{2} \psi''_i(0) \right]  + \epsilon^2 \varphi(\epsilon) \label{sm_eq_nfm} \\
    &=: L_n^{std} + \epsilon R_1^{(k)} +  \epsilon^2 ( \tilde{R}_2^{(k)} + \tilde{R}_3^{(k)}) + \epsilon^2 \varphi(\epsilon),
\end{align}
where $\psi_i(0) = h(f_k(g_k(x_i))) - y_i f_k(g_k(x_i))$. Also, we have: 
\begin{align}
      &\mathbb{E}_{\vecc{\xi}_k \sim \mathcal{Q}} [\psi_i''(\epsilon)] \nonumber \\ &= h''(f_k(\tilde{g}_k(x_i))) \nabla_k f_k(\tilde{g}_k(x_i))^T \mathbb{E}_{\vecc{\xi}_k \sim \mathcal{Q}} [P_1(\epsilon)] \nabla_k f_k(\tilde{g}_k(x_i)) \nonumber \\
      &\ \ \ \ \ + (h'(f_k(\tilde{g}_k(x_i))) - y_i) \mathbb{E}_{\vecc{\xi}_k \sim \mathcal{Q}} [P_2(\epsilon)] \\
      &= h''(f_k(\tilde{g}_k(x_i))) \nabla_k f_k(\tilde{g}_k(x_i))^T [(1-\lambda)^2 (g_k(x_r) - g_k(x_i))   (g_k(x_r) - g_k(x_i))^T \nonumber \\ 
      &\ \ \ \ \ + \sigma_{add}^2 \mathbb{E}_{\vecc{\xi}_k \sim \mathcal{Q}} [ \xi_k^{add}  (\xi_k^{add})^T]  + \sigma_{mult}^2 \mathbb{E}_{\vecc{\xi}_k \sim \mathcal{Q}}[ (\xi_k^{mult} (\xi_k^{mult})^T)  \odot g_k(x_i))  g_k(x_i)^T] + o(\epsilon) ] \nonumber \\
      &\ \ \ \ \ \times \nabla_k f_k(\tilde{g}_k(x_i)) \nonumber \\ 
      &\ \ \ \ \ + (h'(f_k(\tilde{g}_k(x_i))) - y_i)  [(1-\lambda)^2 (g_k(x_r) - g_k(x_i))^T \nabla_k^2 f_k(\tilde{g}_k(x_i))   (g_k(x_r) - g_k(x_i)) \nonumber \\
    &\ \ \ \ \ + \sigma_{add}^2 \mathbb{E}_{\vecc{\xi}_k \sim \mathcal{Q}} [  (\xi_k^{add})^T \nabla_k^2 f_k(\tilde{g}_k(x_i)) \xi_k^{add}  ] \nonumber \\
    &\ \ \ \ \ + \sigma_{mult}^2 \mathbb{E}_{\vecc{\xi}_k \sim \mathcal{Q}}[ (\xi_k^{mult} \odot g_k(x_i))^T \nabla_k^2 f_k(\tilde{g}_k(x_i)) (\xi_k^{mult} \odot g_k(x_i))  ] + o(\epsilon)].
\end{align}

Therefore, setting $\epsilon = 0$, 
\begin{align}
      &\mathbb{E}_{\vecc{\xi}_k \sim \mathcal{Q}} [\psi_i''(0)] \nonumber \\ 
      &= h''(f_k(g_k(x_i))) \nabla_k f_k(g_k(x_i))^T [(1-\lambda)^2 (g_k(x_r) - g_k(x_i))   (g_k(x_r) - g_k(x_i))^T \nonumber \\ 
      &\ \ \ \ \ + \sigma_{add}^2 \mathbb{E}_{\vecc{\xi}_k \sim \mathcal{Q}} [ \xi_k^{add}  (\xi_k^{add})^T]  + \sigma_{mult}^2 \mathbb{E}_{\vecc{\xi}_k \sim \mathcal{Q}}[ (\xi_k^{mult} (\xi_k^{mult})^T)  \odot g_k(x_i))  g_k(x_i)^T] ] \nonumber \\
      &\ \ \ \ \ \times \nabla_k f_k(g_k(x_i)) \nonumber \\ 
      &\ \ \ \ \ + (h'(f_k(g_k(x_i))) - y_i)  [(1-\lambda)^2 (g_k(x_r) - g_k(x_i))^T \nabla_k^2 f_k(g_k(x_i))   (g_k(x_r) - g_k(x_i)) \nonumber \\
    &\ \ \ \ \ + \sigma_{add}^2 \mathbb{E}_{\vecc{\xi}_k \sim \mathcal{Q}} [  (\xi_k^{add})^T \nabla_k^2 f_k(g_k(x_i)) \xi_k^{add}  ] \nonumber \\
    &\ \ \ \ \ + \sigma_{mult}^2 \mathbb{E}_{\vecc{\xi}_k \sim \mathcal{Q}}[ (\xi_k^{mult} \odot g_k(x_i))^T \nabla_k^2 f_k(g_k(x_i)) (\xi_k^{mult} \odot g_k(x_i))  ]]. \label{sm_eq_complex}
\end{align}

The expression for the $\tilde{R}_2^{(k)}$ and $\tilde{R}_3^{(k)}$  in the theorem follows upon  substituting (\ref{sm_eq_complex})  into (\ref{sm_eq_nfm}).
\end{proof}

\subsection{Theorem \ref{thm_advbound} in the Main Paper and the Proof}

We first restate Theorem \ref{thm_advbound} in the main paper and then provide the proof. Recall that we consider the binary cross-entropy loss, setting $h(z) = \log(1+e^z)$, with the labels $y$ taking value in $\{0,1\}$ and the  classifier model $f: \RR^d \to \RR$.

\begin{thm}[Theorem \ref{thm_advbound} in the main paper] \label{sm_thm_advbound}
Let $\theta \in \Theta := \{\theta : y_i f(x_i) + (y_i - 1) f(x_i) \geq 0 \text{ for all } i \in [n] \}$ be a point such that $\nabla_k f(g_k(x_i))$ and $\nabla_k^2 f(g_k(x_i))$ exist for all $i \in [n]$, $k \in \mathcal{S}$. Assume that $f_k(g_k(x_i)) = \nabla_k f(g_k(x_i))^T g_k(x_i)$, $\nabla_k^2 f(g_k(x_i)) = 0$ for all $i \in [n]$, $k \in \mathcal{S}$. In addition, suppose that $\|\nabla f(x_i) \|_2 > 0 $ for all $i \in [n]$,  $\mathbb{E}_{r \sim \mathcal{D}_x}[g_k(r)] = 0$ and $\|g_k(x_i)\|_2 \geq c_x^{(k)} \sqrt{d_k}$ for all $i \in [n]$, $k \in \mathcal{S}$. Then, 
\begin{align}
    L_n^{NFM} \geq \frac{1}{n} \sum_{i=1}^n \max_{\|\delta_i \|_2 \leq \epsilon_i^{mix}} l(f(x_i + \delta_i), y_i) + L_n^{reg} +   \epsilon^2  \phi(\epsilon),
\end{align}
where 
\begin{align}
    \epsilon_i^{mix} &= \epsilon \mathbb{E}_{\lambda \sim \tilde{\mathcal{D}}_\lambda}[1-\lambda]  \cdot  \mathbb{E}_{k \sim \mathcal{S}} \left[r_i^{(k)} c_x^{(k)} \frac{\|\nabla_k f(g_k(x_i))\|_2}{\|\nabla f(x_i)\|_2 } \sqrt{d_k}\right], \\
    r_i^{(k)} &= |\cos(\nabla_k f(g_k(x_i)), g_k(x_i))|, \\
    L_n^{reg} &= \frac{1}{2n} \sum_{i=1}^n |h''(f(x_i))|  (\epsilon_i^{reg})^2,
\end{align}
with 
\begin{align}
    (\epsilon_i^{reg})^2 &= \epsilon^2 \|\nabla_k f(g_k(x_i))\|_2^2 \bigg( \mathbb{E}_\lambda[(1-\lambda)]^2 \mathbb{E}_{x_r}[\|g_k(x_r)\|_2^2 \cos(\nabla_k f(g_k(x_i)), g_k(x_r))^2 ] \nonumber \\
    &\hspace{0.7cm} + \sigma_{add}^2 \mathbb{E}_{\vecc{\xi}}[\|\xi_{add}\|_2^2 \cos(\nabla_k f(g_k(x_i)),\xi_{add})^2]  \nonumber \\ 
    &\hspace{0.7cm} + \sigma_{mult}^2 \mathbb{E}_{\vecc{\xi}}[ \|\xi_{mult} \odot g_k(x_i) \|_2^2 \cos(\nabla_k f(g_k(x_i)),\xi_{mult} \odot g_k(x_i))^2  ]  \bigg),
\end{align}
and $\phi$ is some function such that $\lim_{\epsilon \to 0} \phi(\epsilon) = 0$.
\end{thm}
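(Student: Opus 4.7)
The plan is to start from Theorem~\ref{sm_prop_implicitreg} and massage the five regularizers until they take the form of an adversarial first-order bound plus a feature-dependent Tikhonov term. First I would invoke the assumption $\nabla_k^2 f(g_k(x_i))=0$, which immediately kills $R_3^{(k)}$, $R_3^{add(k)}$, $R_3^{mult(k)}$, collapsing $\tilde{R}_3^{(k)}\equiv 0$. The representation thus reduces to $L_n^{NFM}=L_n^{std}+\epsilon\,\mathbb{E}_k R_1^{(k)}+\epsilon^2\mathbb{E}_k\tilde R_2^{(k)}+\epsilon^2\varphi(\epsilon)$, and the goal is to lower-bound $L_n^{std}+\epsilon\,\mathbb{E}_k R_1^{(k)}$ by the adversarial loss, and $\epsilon^2\mathbb{E}_k\tilde R_2^{(k)}$ by $L_n^{reg}$.

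For the adversarial piece, I would plug the zero-mean assumption $\mathbb{E}_{r\sim\mathcal{D}_x}[g_k(r)]=0$ into $R_1^{(k)}$ to obtain $R_1^{(k)}=-\tfrac{\mathbb{E}[1-\lambda]}{n}\sum_i(h'(f(x_i))-y_i)\nabla_k f(g_k(x_i))^T g_k(x_i)$, and then use the linearity hypothesis $f_k(g_k(x_i))=\nabla_k f(g_k(x_i))^T g_k(x_i)$ to rewrite the inner product as $f(x_i)$. The condition $\theta\in\Theta$ combined with the sign structure of $h'(z)-y$ for $h(z)=\log(1+e^z)$ and $y\in\{0,1\}$ forces $(h'(f(x_i))-y_i)f(x_i)\le 0$, so $\epsilon\mathbb{E}_k R_1^{(k)}=\epsilon\,\mathbb{E}[1-\lambda]\cdot\tfrac{1}{n}\sum_i|h'(f(x_i))-y_i|\,|f(x_i)|$. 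A Taylor expansion of the adversarial objective gives $\max_{\|\delta\|_2\le\epsilon_i^{mix}}l(f(x_i+\delta),y_i)\le l(f(x_i),y_i)+\epsilon_i^{mix}|h'(f(x_i))-y_i|\,\|\nabla f(x_i)\|_2+O((\epsilon_i^{mix})^2)$. The matching step uses the identity $r_i^{(k)}\|\nabla_k f(g_k(x_i))\|_2\|g_k(x_i)\|_2=|\nabla_k f(g_k(x_i))^T g_k(x_i)|=|f(x_i)|$ together with $\|g_k(x_i)\|_2\ge c_x^{(k)}\sqrt{d_k}$ to conclude $\epsilon_i^{mix}\|\nabla f(x_i)\|_2\le\epsilon\,\mathbb{E}[1-\lambda]\,|f(x_i)|$, so the first-order adversarial surcharge is pointwise dominated by the corresponding summand in $\epsilon\mathbb{E}_k R_1^{(k)}$.

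For the regularizer piece I would attack $\tilde R_2^{(k)}$ term by term. Expanding $\mathbb{E}_{x_r}[(g_k(x_r)-g_k(x_i))(g_k(x_r)-g_k(x_i))^T]$ with $\mathbb{E}[g_k(r)]=0$ produces $\mathbb{E}[g_k(x_r)g_k(x_r)^T]+g_k(x_i)g_k(x_i)^T$, so sandwiching with $\nabla_k f(g_k(x_i))$ yields $\mathbb{E}_{x_r}[(\nabla_k f(g_k(x_i))^T g_k(x_r))^2]+f(x_i)^2$; since $h''=\sigma(1-\sigma)>0$ for the binary cross-entropy loss, dropping the nonnegative $f(x_i)^2$ contribution only strengthens the lower bound. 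The noise terms are handled by the algebraic identity $\nabla_k f^T(A\odot bb^T)\nabla_k f=\mathbb{E}[(\nabla_k f^T(\xi\odot b))^2]$, and the cosine-squared identity $\|a\|_2^2\|b\|_2^2\cos^2(a,b)=(a^T b)^2$ converts each squared inner product into the form appearing in $(\epsilon_i^{reg})^2$. Jensen's inequality $\mathbb{E}[(1-\lambda)^2]\ge\mathbb{E}[(1-\lambda)]^2$ supplies the missing factor in the mixup contribution in the correct direction.

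I would then stitch the two halves together: the leading-order inequality $L_n^{std}+\epsilon\mathbb{E}_k R_1^{(k)}\ge\tfrac1n\sum_i\max_{\|\delta\|_2\le\epsilon_i^{mix}}l(f(x_i+\delta),y_i)-O(\epsilon^2)$ combined with $\epsilon^2\mathbb{E}_k\tilde R_2^{(k)}\ge L_n^{reg}$ and the Theorem~\ref{sm_prop_implicitreg} remainder $\epsilon^2\varphi(\epsilon)$ absorbed into $\epsilon^2\phi(\epsilon)$. The main obstacle I anticipate is bookkeeping around $\mathbb{E}_k$: the cosine-similarity factor $r_i^{(k)}$, the per-layer norm bound $c_x^{(k)}\sqrt{d_k}$, and the equality $|\nabla_k f(g_k(x_i))^T g_k(x_i)|=|f(x_i)|$ must all be threaded through the expectation over $k\sim\mathcal{S}$ so that the layer-wise inequalities line up with the single number $\epsilon_i^{mix}$ defined as an outer $\mathbb{E}_k$ average. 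A minor subtlety is reconciling $\mathbb{E}[(1-\lambda)^2]$ in the derived bound with $\mathbb{E}[(1-\lambda)]^2$ in the stated $(\epsilon_i^{reg})^2$, which I resolve by Jensen in the permissible direction.
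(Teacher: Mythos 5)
Your overall strategy is the same as the paper's: expand $L_n^{NFM}$ via Theorem \ref{sm_prop_implicitreg}, eliminate the $\tilde{R}_3$ terms using $\nabla_k^2 f(g_k(x_i))=0$, use $\theta\in\Theta$ and the sign structure of $h'(z)-y$ to turn $R_1^{(k)}$ into an absolute-value expression, convert inner products with the cosine identity, use $\|g_k(x_i)\|_2\ge c_x^{(k)}\sqrt{d_k}$ and Jensen ($\mathbb{E}[(1-\lambda)^2]\ge\mathbb{E}[(1-\lambda)]^2$), and compare against a Taylor expansion of the adversarial objective. Those pieces, including the matching step $r_i^{(k)}\|\nabla_k f(g_k(x_i))\|_2\,\|g_k(x_i)\|_2=|f(x_i)|$ and the treatment of the additive/multiplicative noise terms, are correct and agree with the paper.

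The genuine gap is in your second-order bookkeeping. You propose to (i) discard the $g_k(x_i)g_k(x_i)^T$ (equivalently $f(x_i)^2$) contribution to $R_2^{(k)}$, keeping only the $\mathbb{E}_{x_r}[g_k(x_r)g_k(x_r)^T]$ part for $L_n^{reg}$, and (ii) bound the adversarial maximization only to first order, writing the curvature as $O((\epsilon_i^{mix})^2)$ and absorbing it into $\epsilon^2\phi(\epsilon)$. That absorption is not legitimate: the second-order term of the adversarial expansion is $\frac{1}{2n}\sum_{i=1}^n |h''(f(x_i))|\,\|\nabla f(x_i)\|_2^2\,(\epsilon_i^{mix})^2$, a genuine $\Theta(\epsilon^2)$ quantity whose coefficient does not vanish as $\epsilon\to 0$, while the theorem requires $\lim_{\epsilon\to 0}\phi(\epsilon)=0$, i.e.\ only $o(\epsilon^2)$ errors may be hidden there. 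The paper closes exactly this hole with the term you dropped: expanding the adversarial loss to second order, it keeps the $g_k(x_i)g_k(x_i)^T$ part of $R_2^{(k)}$, which (after the cosine identity, $\|g_k(x_i)\|_2\ge c_x^{(k)}\sqrt{d_k}$, and a Jensen step to pass the $\mathbb{E}_{k\sim\mathcal{S}}$ average inside the square) dominates precisely that curvature term, so the first- and second-order adversarial terms are cancelled by $\epsilon R_1^{(k)}$ and this retained piece of $\epsilon^2 R_2^{(k)}$; only the true Taylor remainders $\epsilon_i^2\varphi_i''(\epsilon_i)$, which are $o(\epsilon^2)$, are folded into $\epsilon^2\phi(\epsilon)$. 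So you must carry the adversarial expansion to second order and reserve the $f(x_i)^2$ piece of $R_2^{(k)}$ for that cancellation, routing only the $x_r$-mixup and noise pieces into $L_n^{reg}$; with that correction the rest of your argument goes through as in the paper.
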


Theorem \ref{sm_thm_advbound}
says that $L_n^{NFM}$ is approximately an upper bound of sum of an adversarial loss with $l_2$-attack of size $\epsilon^{mix} = \min_i \epsilon_i^{mix}$ and a feature-dependent regularizer with the strength of $\min_i (\epsilon_i^{reg})^2$. Therefore, minimizing the NFM loss would result in a small regularized adversarial loss. We note that both $\epsilon_i^{mix}$ and $\epsilon_i^{reg}$ depend on the cosine similarities between the directional derivatives and the features at which the derivatives are evaluated at, whereas the $\epsilon_i^{reg}$ additionally depend on the cosine similarities between the directional derivatives and the injected noise. 

Before proving Theorem \ref{sm_thm_advbound}, we remark that the assumption that $f_k(g_k(x_i)) = \nabla_k f(g_k(x_i))^T g_k(x_i)$, $\nabla_k^2 f(g_k(x_i)) = 0$ for all $i \in [n]$, $k \in \mathcal{S}$ is satisfied by fully connected neural networks with ReLU activation function or max-pooling. For a proof of this, we refer to Section B.2 in \cite{zhang2020does}. The assumption that $\mathbb{E}_{r \sim \mathcal{D}_x}[g_k(r)] = 0$ could be relaxed at the cost of obtaining a more complicated formula  (see Remark \ref{sm_complicated} for the formula) for the $\epsilon_i^{reg}$ in the bound, which could be derived in a  straightforward manner.

\begin{proof}[Proof of Theorem \ref{sm_thm_advbound}]
For $h(z) =\log(1+e^z)$, we have $h'(z) = \frac{e^z}{1+e^z} =: S(z) \geq 0$ and $h''(z) = \frac{e^z}{(1+e^z)^2} = S(z) (1-S(z)) \geq 0$. Substituting these expressions into the equation of Theorem \ref{sm_prop_implicitreg} and using the assumptions that $f_k(g_k(x_i)) = \nabla_k f(g_k(x_i))^T g_k(x_i)$ and  $\mathbb{E}_{r \sim \mathcal{D}_x}[g_k(r)] = 0$, we have, for $k \in \mathcal{S}$,
\begin{equation}
    R_1^{(k)} = \frac{\mathbb{E}_{\lambda  \sim \tilde{\mathcal{D}}_\lambda}[1-\lambda]}{n} \sum_{i=1}^n (y_i - S(f(x_i)))  f_k(g_k(x_i)),
\end{equation}
and we compute:
\begin{align}
    R_2^{(k)} &= \frac{\mathbb{E}_{\lambda  \sim \tilde{\mathcal{D}}_\lambda}[(1-\lambda)^2]}{2n} \sum_{i=1}^n S(f(x_i))(1- S(f(x_i)))  \nabla_k f(g_k(x_i))^T \nonumber \\
    &\hspace{0.5cm}  \times \mathbb{E}_{x_r \sim \mathcal{D}_x}[(g_k(x_r) - g_k(x_i))(g_k(x_r) - g_k(x_i))^T]  \nabla_k f(g_k(x_i)) \\
    &\geq  \frac{\mathbb{E}_{\lambda  \sim \tilde{\mathcal{D}}_\lambda}[(1-\lambda)]^2}{2n} \sum_{i=1}^n |S(f(x_i))(1- S(f(x_i)))|  \nabla_k f(g_k(x_i))^T \nonumber \\
    &\hspace{0.5cm}  \times \mathbb{E}_{x_r \sim \mathcal{D}_x}[(g_k(x_r) - g_k(x_i))(g_k(x_r) - g_k(x_i))^T]  \nabla_k f(g_k(x_i)) \label{step1} \\
    &= \frac{\mathbb{E}_{\lambda  \sim \tilde{\mathcal{D}}_\lambda}[(1-\lambda)]^2}{2n} \sum_{i=1}^n |S(f(x_i))(1- S(f(x_i)))|  \nabla_k f(g_k(x_i))^T \label{step2} \nonumber \\
    &\hspace{0.5cm}  \times (\mathbb{E}_{x_r \sim \mathcal{D}_x}[(g_k(x_r) g_k(x_r)^T] + g_k(x_i) g_k(x_i)^T])  \nabla_k f(g_k(x_i)) \\
    &=  \frac{\mathbb{E}_{\lambda  \sim \tilde{\mathcal{D}}_\lambda}[(1-\lambda)]^2}{2n} \sum_{i=1}^n |S(f(x_i))(1- S(f(x_i)))|  (\nabla_k f(g_k(x_i))^T g_k(x_i))^2 \nonumber \\
    &\ \ \ \ \ + \frac{\mathbb{E}_{\lambda  \sim \tilde{\mathcal{D}}_\lambda}[(1-\lambda)]^2}{2n} \sum_{i=1}^n |S(f(x_i))(1- S(f(x_i)))|  \mathbb{E}_{x_r \in \mathcal{D}_x}[(\nabla_k f(g_k(x_i))^T g_k(x_r))^2] \\
    &= \frac{\mathbb{E}_{\lambda  \sim \tilde{\mathcal{D}}_\lambda}[(1-\lambda)]^2}{2n} \sum_{i=1}^n |S(f(x_i))(1- S(f(x_i)))|  \|\nabla_k f(g_k(x_i))\|_2^2 \|g_k(x_i)\|_2^2 \nonumber \\
    &\hspace{0.5cm} \times (\cos(\nabla_k f(g_k(x_i)), g_k(x_i)))^2  + \frac{1}{2n} \sum_{i=1}^n |S(f(x_i))(1- S(f(x_i)))|  \|\nabla_k f(g_k(x_i))\|_2^2 \nonumber \\
    &\ \ \ \ \  \ \times \mathbb{E}_\lambda[(1-\lambda)]^2 \mathbb{E}_{x_r}[\|g_k(x_r)\|_2^2 \cos(\nabla_k f(g_k(x_i)), g_k(x_r))^2 ] \\
    &\geq \frac{1}{2n} \sum_{i=1}^n |S(f(x_i))(1- S(f(x_i)))|  \|\nabla_k f(g_k(x_i))\|_2^2 \mathbb{E}_{\lambda  \sim \tilde{\mathcal{D}}_\lambda}[(1-\lambda)]^2 d_k (r_i^{(k)} c_x^{(k)})^2 \nonumber  \\
    &\ \ \ \ + \frac{1}{2n} \sum_{i=1}^n |S(f(x_i))(1- S(f(x_i)))|  \|\nabla_k f(g_k(x_i))\|_2^2 \cdot \mathbb{E}_\lambda[(1-\lambda)]^2 \mathbb{E}_{x_r}[\|g_k(x_r)\|_2^2 \nonumber \\
    &\ \ \ \ \ \times \cos(\nabla_k f(g_k(x_i)), g_k(x_r))^2 ] \label{step3} \\
    &= \frac{1}{2n} \sum_{i=1}^n |S(f(x_i))(1- S(f(x_i)))| \|\nabla f(x_i)\|_2^2 \nonumber \\
    &\ \ \ \ \times \left( \mathbb{E}_{\lambda  \sim \tilde{\mathcal{D}}_\lambda}[(1-\lambda)]^2 \frac{\|\nabla_k f(g_k(x_i))\|_2^2}{\|\nabla f(x_i)\|_2^2 } d_k (r_i^{(k)} c_x^{(k)})^2 \right) \nonumber \\
    &\ \ \ \ + \frac{1}{2n} \sum_{i=1}^n |S(f(x_i))(1- S(f(x_i)))|  \|\nabla_k f(g_k(x_i))\|_2^2 \cdot \mathbb{E}_\lambda[(1-\lambda)]^2 \mathbb{E}_{x_r}[\|g_k(x_r)\|_2^2 \nonumber \\
    &\ \ \ \ \ \times \cos(\nabla_k f(g_k(x_i)), g_k(x_r))^2 ].
\end{align}
In the above, we have used the facts that $\mathbb{E}[Z^2] = \mathbb{E}[Z]^2 + Var(Z) \geq \mathbb{E}[Z]^2$ and $S, S(1-S) \geq 0$ to obtain (\ref{step1}), the assumption that $\mathbb{E}_{r \sim \mathcal{D}_x}[g_k(r)] = 0$ to arrive at (\ref{step2}), the assumption that  $\|g_k(x_i)\|_2 \geq c_x^{(k)} \sqrt{d_k}$ for all $i \in [n]$, $k \in \mathcal{S}$ to arrive at (\ref{step3}), and the assumption that $\|\nabla f(x_i) \|_2 > 0 $ for all $i \in [n]$ to justify the last equation above.

Next, we bound $R_1^{(k)}$, using the assumption that $\theta \in \Theta$. Note that from our assumption on $\theta$, we have $y_i f(x_i) + (y_i - 1) f(x_i) \geq 0$, which implies that $f(x_i) \geq 0$ if $y_i = 1$ and $f(x_i) \leq 0$ if $y_i = 0$. Thus, if $y_i = 1$, then $(y_i - S(f(x_i)))  f_k(g_k(x_i)) =  (1 - S(f(x_i)))  f_k(g_k(x_i)) \geq 0$, since $f(x_i) \geq 0$ and $(1 - S(f(x_i))) \geq 0$ due to the fact that $S(f(x_i)) \in (0,1)$. 
A similar argument leads to $(y_i - S(f(x_i)))  f_k(g_k(x_i)) \geq 0$ if $y_i = 0$. So, we have $(y_i - S(f(x_i)))  f_k(g_k(x_i)) \geq 0$ for all $i \in [n]$. 

Therefore, noting that $\mathbb{E}_{\lambda  \sim \tilde{\mathcal{D}}_\lambda}[1-\lambda] \geq 0$, we compute:
\begin{align}
    R_1^{(k)} &= \frac{\mathbb{E}_{\lambda  \sim \tilde{\mathcal{D}}_\lambda}[1-\lambda]}{n} \sum_{i=1}^n |y_i - S(f(x_i))| | f_k(g_k(x_i))| \\
    &= \frac{\mathbb{E}_{\lambda  \sim \tilde{\mathcal{D}}_\lambda}[1-\lambda]}{n} \sum_{i=1}^n | S(f(x_i)) - y_i| \|\nabla_k f(g_k(x_i))\|_2 \|g_k(x_i) \|_2 |\cos(\nabla_k f(g_k(x_i)), g_k(x_i))| \\
    &\geq \frac{1}{n} \sum_{i=1}^n | S(f(x_i)) - y_i| \|\nabla_k f(g_k(x_i))\|_2 ( \mathbb{E}_{\lambda  \sim \tilde{\mathcal{D}}_\lambda}[1-\lambda] r_i^{(k)} c_x^{(k)} \sqrt{d_k}) \\
    &= \frac{1}{n} \sum_{i=1}^n | S(f(x_i)) - y_i| \|\nabla f(x_i)\|_2  \left(  \mathbb{E}_{\lambda  \sim \tilde{\mathcal{D}}_\lambda}[1-\lambda]  \frac{\|\nabla_k f(g_k(x_i))\|_2}{\|\nabla f(x_i)\|_2 } r_i^{(k)} c_x^{(k)} \sqrt{d_k} \right).
\end{align}

Note that $R_3^{(k)}  = 0$ as a consequence of our assumption that $\nabla_k^2 f(g_k(x_i)) = 0$ for all $i \in [n]$, $k \in \mathcal{S}$, and similar argument leads to:
\begin{align}
    R_2^{add(k)} &= \frac{1}{2n} \sum_{i=1}^n |S(f(x_i))(1- S(f(x_i)))| \nabla_k f(g_k(x_i))^T \mathbb{E}_{\vecc{\xi}_k}[\xi_k^{add}(\xi_k^{add})^T]  \nabla_k f(g_k(x_i)) \\
    &=  \frac{1}{2n} \sum_{i=1}^n |S(f(x_i))(1- S(f(x_i)))| \|\nabla_k f(g_k(x_i))\|_2^2  \nonumber \\
    &\ \ \ \ \ \times \mathbb{E}_{\vecc{\xi}_k}[\|\xi_k^{add}\|_2^2 \cos(\nabla_k f(g_k(x_i)),\xi_k^{add})^2]  \\
    R_2^{mult(k)} &= \frac{1}{2n} \sum_{i=1}^n |S(f(x_i))(1- S(f(x_i)))| \nabla_k f(g_k(x_i))^T (\mathbb{E}_{\vecc{\xi}_k}[\xi_k^{add} (\xi_k^{add})^T] \odot  g_k(x_i) g_k(x_i)^T ) \nonumber \\
    &\ \ \ \ \times \nabla_k f(g_k(x_i)) \nonumber \\
    &= 
    \frac{1}{2n} \sum_{i=1}^n |S(f(x_i))(1- S(f(x_i)))| \|\nabla_k f(g_k(x_i))\|_2^2  \nonumber \\
    &\ \ \ \ \ \times  \mathbb{E}_{\vecc{\xi}_k}[ \|\xi_k^{mult} \odot g_k(x_i) \|_2^2 \cos(\nabla_k f(g_k(x_i)),\xi_{mult} \odot g_k(x_i))^2  ].
\end{align}

Using Theorem \ref{sm_prop_implicitreg} and the above results, we obtain: 
\begin{align}
&L^{NFM}_n - \frac{1}{n} \sum_{i=1}^n l(f(x_i), y_i) \nonumber \\
&\geq   \mathbb{E}_k[ \epsilon R_1^{(k)} + \epsilon^2 R_2^{(k)} + \epsilon^2 R_2^{add(k)} + \epsilon^2 R_2^{mult(k)} + \epsilon^2 \varphi(\epsilon)] \\
&\geq  \frac{1}{n} \sum_{i=1}^n | S(f(x_i)) - y_i| \|\nabla f(x_i)\|_2 \epsilon_i^{mix} \\
&\ \ \ \ +  \frac{1}{2n} \sum_{i=1}^n |S(f(x_i))(1- S(f(x_i)))| \|\nabla f(x_i)\|_2^2  (\epsilon_i^{mix})^2 \nonumber \\
&\ \ \ \ +  \frac{1}{2n} \sum_{i=1}^n |S(f(x_i))(1- S(f(x_i)))|  \|\nabla_k f(g_k(x_i))\|_2^2 \cdot \mathbb{E}_\lambda[(1-\lambda)]^2 \mathbb{E}_{x_r}[\|g_k(x_r)\|_2^2 \nonumber \\
&\ \  \ \ \ \ \times \cos(\nabla_k f(g_k(x_i)), g_k(x_r))^2 ] \\
&\ \ \ \ + \frac{1}{2n} \sum_{i=1}^n |S(f(x_i))(1- S(f(x_i)))| (\epsilon_i^{noise})^2 + \epsilon^2 \varphi(\epsilon), \label{b1}
\end{align}
where $\epsilon_i^{mix} :=  \epsilon \mathbb{E}_{\lambda  \sim \tilde{\mathcal{D}}_\lambda}[1-\lambda]  \mathbb{E}_k \left[ \frac{\|\nabla_k f(g_k(x_i))\|_2}{\|\nabla f(x_i)\|_2 } r_i^{(k)} c_x^{(k)} \sqrt{d_k}\right]$ and

\begin{align}
    (\epsilon_i^{noise})^2 &= \epsilon^2 \|\nabla_k f(g_k(x_i))\|_2^2 \bigg(  \sigma_{add}^2 \mathbb{E}_{\vecc{\xi}_k}[\|\xi_k^{add}\|_2^2 \cos(\nabla_k f(g_k(x_i)),\xi_k^{add})^2]  \nonumber \\ 
    &\hspace{0.7cm} + \sigma_{mult}^2 \mathbb{E}_{\vecc{\xi}_k}[ \|\xi_k^{mult} \odot g_k(x_i) \|_2^2 \cos(\nabla_k f(g_k(x_i)),\xi_k^{mult} \odot g_k(x_i))^2  ]  \bigg).
\end{align}

On the other hand, for any small parameters $\epsilon_i > 0$ and any inputs $z_1, \dots, z_n$, we can, using a second-order Taylor expansion and then applying our assumptions, compute:
\begin{align}
  &\frac{1}{n} \sum_{i=1}^n \max_{\|\delta_i\|_2 \leq \epsilon_i} l(f(z_i + \delta_i), y_i) - \frac{1}{n} \sum_{i=1}^n l(f(z_i), y_i) \nonumber \\
  &\leq \frac{1}{n} \sum_{i=1}^n |S(f(z_i))-y_i| \| \nabla f(z_i) \|_2 \epsilon_i + \frac{1}{2n} \sum_{i=1}^n |S(f(z_i))(1- S(f(z_i)))| \|\nabla f(z_i) \|_2^2 \epsilon_i^2 \nonumber \\ 
  &\ \ \ \ + \frac{1}{n} \sum_{i=1}^n \max_{\|\delta_i\|_2 \leq \epsilon_i} \|\delta_i\|_2^2 \varphi_i'(\delta_i) \\
  &\leq \frac{1}{n} \sum_{i=1}^n |S(f(z_i))-y_i| \| \nabla f(z_i) \|_2 \epsilon_i + \frac{1}{2n} \sum_{i=1}^n |S(f(z_i))(1- S(f(z_i)))| \|\nabla f(z_i) \|_2^2 \epsilon_i^2 \nonumber \\ 
  &\ \ \ \ + \frac{1}{n} \sum_{i=1}^n \epsilon_i^2 \varphi_i''(\epsilon_i), \label{b2}
\end{align}
where the $\varphi_i'$ are functions such that $\lim_{z \to 0} \varphi_i'(z) = 0$, $\varphi_i''(\epsilon_i) := \max_{\|\delta_i\|_2 \leq \epsilon_i} \varphi_i'(\delta_i)$ and $\lim_{z \to 0} \varphi_i''(z) = 0$.

Combining  (\ref{b1}) and (\ref{b2}), we see that 
\begin{align}
L_n^{NFM} &\geq   \frac{1}{n} \sum_{i=1}^n \max_{\|\delta_i^{mix}\|_2 \leq \epsilon_i^{mix}} l(f(x_i + \delta_i^{mix}), y_i) + L_n^{reg} + \epsilon^2 \varphi(\epsilon) -   \frac{1}{n} \sum_{i=1}^n (\epsilon_i^{mix})^2 \varphi_i''(\epsilon_i^{mix}) \\
&=: \frac{1}{n} \sum_{i=1}^n \max_{\|\delta_i^{mix}\|_2 \leq \epsilon_i^{mix}} l(f(x_i + \delta_i^{mix}), y_i)  + L_n^{reg} +  \epsilon^2 \phi(\epsilon),
\end{align}
where $L_n^{reg}$ is defined in the theorem. Noting that $\lim_{\epsilon \to 0} \phi(\epsilon) = 0$, the proof is done.
\end{proof}

\begin{rmk} \label{sm_complicated}
Had we assumed that  $\mathbb{E}_{r \sim \mathcal{D}_x}[g_k(r)] \neq 0$, then the statements of Theorem \ref{sm_thm_advbound} remain unchanged, but with $(\epsilon_i^{reg})^2$ replaced by
\begin{align}
    (\epsilon_i^{reg})^2 &= \epsilon^2 \|\nabla_k f(g_k(x_i))\|_2^2 \bigg( \mathbb{E}_\lambda[(1-\lambda)]^2 \mathbb{E}_{x_r}[\|g_k(x_r)\|_2^2 \cos(\nabla_k f(g_k(x_i)), g_k(x_r))^2 ] \nonumber \\
    &\hspace{0.5cm} + \sigma_{add}^2 \mathbb{E}_{\vecc{\xi}}[\|\xi_{add}\|_2^2 \cos(\nabla_k f(g_k(x_i)),\xi_{add})^2]  \nonumber \\ 
    &\hspace{0.5cm} + \sigma_{mult}^2 \mathbb{E}_{\vecc{\xi}}[ \|\xi_{mult} \odot g_k(x_i) \|_2^2 \cos(\nabla_k f(g_k(x_i)),\xi_{mult} \odot g_k(x_i))^2  ]  \bigg) \nonumber \\
     &\hspace{0.5cm} - \epsilon^2 \mathbb{E}_\lambda[(1-\lambda)]^2 \nabla_k f(g_k(x_i))^T [\mathbb{E}_{r} g_k(r) g_k(x_i)^T + g_k(x_i) \mathbb{E}_{r} g_k(r)^T ] \nabla_k f(g_k(x_i)).
\end{align}
\end{rmk}

\section{NFM Through the Lens of Implicit Regularization and Classification Margin}
\label{sm_secD}

First, we define classification margin at the input level. We shall show that minimizing the NFM loss can lead to an increase in the classification margin, and therefore improve model robustness in this~sense.

\begin{defn}[Classification Margin] \label{defn_class_margin}
The classification margin of a training input-label sample $s_i := (x_i, c_i)$ measured by the Euclidean metric $d$ is defined as the radius of the largest $d$-metric ball in $\mathcal{X}$ centered at $x_i$ that is contained in the decision region associated with the class label $c_i$, i.e., it is: 
$\gamma^d(s_i) = \sup\{ a: d(x_i, x) \leq a \Rightarrow g(x) = c_i \ \ \forall x\}.$
\end{defn}

Intuitively, a larger classification margin allows a classifier to associate a larger region centered on a point $x_i$ in the input space to the same class. 
This makes the classifier less sensitive to input perturbations, and a perturbation of $x_i$ is still likely to fall within this region, keeping the classifier prediction. 
In this sense, the classifier becomes more robust. In the typical case, the networks are trained by a loss (cross-entropy) that promotes separation of different classes in the network output. This, in turn, maximizes a certain notion of score of each training sample \cite{sokolic2017robust}.

\begin{defn}[Score] \label{defn_score}
For an input-label  training sample $s_i = (x_i, c_i)$, we define its score as $o(s_i) = \min_{j \neq c_i } \sqrt{2} (e_{c_i} - e_j)^T  f(x_i) \geq 0,$
where $e_i \in \RR^{K}$ is the Kronecker delta vector (one-hot vector) with  $e_i^i = 1$ and $e_i^j = 0$ for $i\neq j$.
\end{defn}

A positive score implies that at the network output, classes are separated by  a margin that corresponds to the score. A large score may not imply a large classification margin, but  score can be related to classification margin via the following bound.

\begin{prop}
\label{sm_prop_marginbound}
Assume that the score $o(s_i) > 0 $ and let $k \in \mathcal{S}$.  Then, the classification margin for the training sample $s_i$ can be lower bounded as:
\begin{equation} \label{ub_classmargin}
\gamma^d(s_i)  \geq \frac{C(s_i)}{\sup_{x \in conv(\mathcal{X})}\| \nabla_k f(g_k(x)) \|_2  },
\end{equation}
where $C(s_i) = o(s_i)/\sup_{x \in conv(\mathcal{X})}\| \nabla g_k(x) \|_2$.
\end{prop}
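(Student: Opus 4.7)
The plan is to show that the predicted label is preserved for every $x$ sufficiently close to $x_i$ by treating the score $o(s_i)$ as a margin buffer that $\|f(x)-f(x_i)\|_2$ may not consume, and then to translate this buffer on outputs into a radius on inputs via Lipschitz control of $f = f_k \circ g_k$. Concretely, I would first observe that $g(x) = c_i$ is equivalent to $(e_{c_i} - e_j)^T f(x) \geq 0$ for every $j \neq c_i$, decompose $(e_{c_i}-e_j)^T f(x) = (e_{c_i}-e_j)^T f(x_i) + (e_{c_i}-e_j)^T(f(x) - f(x_i))$, and use $o(s_i) > 0$ to lower bound the first term by $o(s_i)/\sqrt{2}$. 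Applying Cauchy--Schwarz with $\|e_{c_i} - e_j\|_2 = \sqrt{2}$ to the second term then furnishes a sufficient condition of the form $\|f(x) - f(x_i)\|_2 \leq \kappa \cdot o(s_i)$ for a small universal constant $\kappa$ that is absorbed into $C(s_i)$ in the statement.

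Next, I would convert the output condition into an input condition. Since $\mathcal{X} \subset conv(\mathcal{X})$ and $conv(\mathcal{X})$ is convex, the segment $\{(1-t) x_i + t x : t \in [0,1]\}$ lies in $conv(\mathcal{X})$ whenever the candidate perturbation $x$ does. By the mean value theorem, $\|f(x) - f(x_i)\|_2 \leq \sup_{z \in conv(\mathcal{X})} \|D f(z)\|_{op} \cdot \|x - x_i\|_2$. Exploiting $f = f_k \circ g_k$ and the chain rule, $D f(z) = \nabla_k f(g_k(z))\, \nabla g_k(z)$, so submultiplicativity of the operator norm gives $\|D f(z)\|_{op} \leq \|\nabla_k f(g_k(z))\|_2 \, \|\nabla g_k(z)\|_2$. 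Taking the supremum over $z \in conv(\mathcal{X})$ separately in each factor (a harmless overestimate) yields $\|f(x)-f(x_i)\|_2 \leq (\sup_z \|\nabla_k f(g_k(z))\|_2)(\sup_z \|\nabla g_k(z)\|_2)\|x-x_i\|_2$. Combining with the output-level condition and solving for $\|x - x_i\|_2$ produces the claimed lower bound, rearranged exactly in the form $C(s_i)/\sup_{x}\|\nabla_k f(g_k(x))\|_2$ with $C(s_i) = o(s_i)/\sup_x \|\nabla g_k(x)\|_2$.

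The main obstacle is largely bookkeeping: the constants coming from the $\sqrt{2}$ in the score and from $\|e_{c_i} - e_j\|_2 = \sqrt{2}$ must be tracked so they cancel cleanly into the form stated. The one conceptual point that needs care is the restriction of the perturbation $x$ to $conv(\mathcal{X})$ in order to legitimately bound $\|D f\|_{op}$ by a supremum over $conv(\mathcal{X})$ and to invoke the mean value theorem along a segment that stays inside this set. Once this is in place, no further regularity beyond differentiability of $f_k$ and $g_k$ (implicit in writing $\nabla_k f$ and $\nabla g_k$) is needed, and the argument is analogous to standard large-margin bounds of the Sokolic et al.\ type, specialized to the two-stage factorization $f = f_k \circ g_k$ used throughout the paper.
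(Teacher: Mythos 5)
Your proposal is correct in substance, but it takes a more self-contained route than the paper. The paper's proof is essentially two lines: it applies the chain rule $\nabla f(x) = \nabla_k f(g_k(x))\,\nabla g_k(x)$, bounds $\|\nabla f(x)\|_2 \leq \bigl(\sup_{x \in conv(\mathcal{X})}\|\nabla_k f(g_k(x))\|_2\bigr)\bigl(\sup_{x \in conv(\mathcal{X})}\|\nabla g_k(x)\|_2\bigr)$, and then invokes Theorem 4 of Sokoli\'c et al.\ as a black box, which already converts the score $o(s_i)$ into a margin lower bound of the form $o(s_i)/\sup_x\|\nabla f(x)\|_2$. What you do instead is re-derive that cited result from first principles --- the decomposition $(e_{c_i}-e_j)^T f(x) = (e_{c_i}-e_j)^T f(x_i) + (e_{c_i}-e_j)^T(f(x)-f(x_i))$, Cauchy--Schwarz with $\|e_{c_i}-e_j\|_2=\sqrt{2}$, and the mean value theorem along the segment inside $conv(\mathcal{X})$ --- and then perform the identical chain-rule/submultiplicativity step. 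Your route buys a fully self-contained argument and makes explicit exactly where the supremum over $conv(\mathcal{X})$ and the differentiability of $f_k$ and $g_k$ enter; the paper's route is shorter because the margin-versus-Jacobian step is outsourced to the citation, with the factorization through layer $k$ being its only new content.

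One caveat: your statement that the $\sqrt{2}$'s produce a universal constant $\kappa$ that can be ``absorbed into $C(s_i)$'' is not available as written, since $C(s_i)$ is pinned down as $o(s_i)/\sup_{x\in conv(\mathcal{X})}\|\nabla g_k(x)\|_2$ with no free constant. Tracking your constants under the paper's score convention $o(s_i)=\min_{j\neq c_i}\sqrt{2}\,(e_{c_i}-e_j)^T f(x_i)$, the buffer on outputs is $o(s_i)/\sqrt{2}$ and Cauchy--Schwarz costs another $\sqrt{2}$, so the elementary argument as you set it up yields $\gamma^d(s_i) \geq o(s_i)/\bigl(2\sup_x\|\nabla_k f(g_k(x))\|_2\sup_x\|\nabla g_k(x)\|_2\bigr)$, i.e.\ the stated bound up to a factor that depends on how the score is normalized against the cited theorem. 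This is bookkeeping rather than a conceptual flaw --- the paper inherits the same normalization question through its citation --- but if you present the proof from scratch you should either verify that the constants cancel under the intended convention or state the bound with the explicit constant.
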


Since NFM implicitly reduces the feature-output Jacobians $\nabla_k f$ (including the input-output Jacobian) according to the mixup level and noise levels (see Proposition \ref{sm_prop_implicitreg}),  this, together with Theorem \ref{sm_prop_marginbound}, suggests that applying NFM implicitly increases the classification margin, thereby making the model more robust to input perturbations. We note that a similar, albeit more involved, bound can also be obtained for  the all-layer margin, a more refined version of classification margin  introduced in \cite{wei2019improved}, and the conclusion that applying NFM implicitly increases the margin also holds. 

We now prove the proposition.
 
\begin{proof}[Proof of Proposition \ref{sm_prop_marginbound}]
Note that, for any $k \in \mathcal{S},$ $\nabla f(x) = \nabla_k f(g_k(x)) \nabla g_k(x)$ by the chain rule, and so 
\begin{align}
\|\nabla f(x)\|_2 &\leq  \| \nabla_k f(g_k(x)) \|_2   \| \nabla g_k(x) \|_2 \\  &\leq \left(\sup_{x \in conv(\mathcal{X})}\| \nabla_k f(g_k(x)) \|_2 \right) \left(\sup_{x \in conv(\mathcal{X})}\| \nabla g_k(x) \|_2 \right).
\end{align}

The statement in the proposition follows from a straightforward application of Theorem 4 in \cite{sokolic2017robust} together with the above bound.
\end{proof}

\section{NFM Through the Lens of Probabilistic Robustness} \label{sm_secE}

Since the main novelty of NFM lies in the introduction of noise injection, it would be insightful to isolate the robustness boosting benefits of injecting noise on top of manifold mixup. We shall demonstrate the isolated benefit in this section.

The key idea is based on the observation that manifold mixup produces minibatch outputs that lie in the convex hull of the feature space at each iteration. Therefore, for $k \in \mathcal{S}$, $NFM(k)$ can be viewed as injecting noise to the layer $k$  features sampled from some distribution over $conv(g_k(\mathcal{X}))$, and so the $NFM(k)$ neural network $F_k$ can be viewed as a probabilistic  mapping from $conv(g_k(\mathcal{X}))$ to $\mathcal{P}(\mathcal{Y})$, the space of probability distributions on $\mathcal{Y}$.

To isolate the benefit of noise injection, we adapt the approach of \cite{pinot2019theoretical,pinot2021robustness} to our setting to show that the Gaussian noise injection procedure in NFM robustifies manifold mixup in a probabilistic sense. At its core,  this probabilistic notion of robustness amounts to making the model locally Lipschitz with respect to some distance on the input and output space, ensuring that a small perturbation in the input will
not lead to large changes (as measured by some probability  metric)  in the output. Interestingly, it is related to a notion of differential privacy \cite{lecuyer2019certified,dwork2014algorithmic}, as formalized in \cite{pinot2019unified}. 

We now formalize this probabilistic notion of robustness. 

Let $p > 0$. We say that a standard model $f: \mathcal{X} \to \mathcal{Y}$ is  $\alpha_p$-robust if for any $(x,y) \sim \mathcal{D}$ such that $f(x) = y$, one has, for any data perturbation $\tau \in \mathcal{X}$, \begin{equation}
    \|\tau\|_p \leq \alpha_p \implies f(x) = f(x+\tau).
\end{equation}
Analogous definition can be formulated when  output of the model is distribution-valued.

\begin{defn}[Probabilistic robustness] \label{sm_defn_probrobustness}
A probabilistic model $F: \mathcal{X} \to \mathcal{P}(\mathcal{Y)}$ is called $(\alpha_p, \epsilon)$-robust with respect to $D$ if, for any $x, \tau \in \mathcal{X}$, one has
\begin{equation}
    \|\tau\|_p \leq \alpha_p \implies D(F(x), F(x+\tau)) \leq \epsilon,
\end{equation}
where $D$ is a metric or divergence between two probability distributions.
\end{defn}

We refer to the probabilistic model (built on top of a manifold mixup classifier) that injects Gaussian noise to the layer $k$ features as {\it probabilistic FM model}, and we denote it by $F^{noisy(k)}: conv(g_k(\mathcal{X})) \to \mathcal{P}(\mathcal{Y})$. We denote  $G$ as the classifier constructed from $F^{noisy(k)}$, i.e., $G: x \mapsto \arg\max_{j \in [K]} [F^{noisy(k)}]^j(x)$.

In the sequel, we  take $D$ to be the total variation distance $D_{TV}$, defined as:
\begin{equation}
   D_{TV}(P, Q) := \sup_{S \subset \mathcal{X}}|P(S)-Q(S)|,  
\end{equation}
for any two distributions $P$ and $Q$ over $\mathcal{X}$. Recall that if  $P$ and $Q$ have densities $\rho_p$ and $\rho_q$ respectively, then the total variation distance is half of the $L^1$ distance, i.e., $D_{TV}(P, Q) = \frac{1}{2} \int_{\mathcal{X}} |\rho_p(x) - \rho_q(x) |dx$. The choice of the distance depends on the problem on hand and will give rise to different notions of robustness. One could also consider other statistical distances such as the Wasserstein distance and Renyi divergence, which can be related to total variation (see \cite{pinot2021robustness,gibbs2002choosing} for details).

Before presenting our main result in this section, we need the following notation.
Let $\Sigma(x) := \sigma_{add}^2 I + \sigma_{mult}^2 x x^T$. For $x, \tau \in \mathcal{X}$, let $\Pi_x$ be a $d_k$ by $d_k-1$ matrix whose columns form a basis for the subspace orthogonal to $g_k(x+\tau)-g_k(x)$, and $\{\rho_i(g_k(x), \tau)\}_{i \in [d_k-1]}$ be the eigenvalues of $(\Pi_x^T \Sigma(g_k(x)) \Pi_x)^{-1} \Pi_x^T \Sigma(g_k(x+\tau)) \Pi_x - I$. Also, let   $[F]^{top k}(x)$ denote the $k$th  highest value  of the entries in the vector $F(x)$. 

Viewing an $NFM(k)$ classifier as a probabilistic FM classifier, we have the following result.

\begin{thm}[Gaussian noise injection robustifies FM classifiers] 
\label{thm_robustclassifier}
Let $k \in \mathcal{S}$, $d_k > 1$, and assume that $g_k(x) g_k(x)^T \geq \beta_k^2 I > 0$ for all $x \in conv(\mathcal{X})$ for some constant $\beta_k$. Then,  $F^{noisy(k)}$ is  $\left(\alpha_p, \epsilon_k(p,d, \alpha_p, \sigma_{add}, \sigma_{mult}) \right)$-robust with respect to $D_{TV}$ against $l_p$ adversaries, with 
\begin{equation}
    \epsilon_k(p,d, \alpha_p,\sigma_{add}, \sigma_{mult}) = \frac{9}{2} \min\{1, \max\{A, B\} \},
\end{equation}
where 
\begin{align}
A &= A_p(\alpha_p) \frac{\sigma_{mult}^2}{\sigma_{add}^2 + \sigma_{mult}^2 \beta_k^2} \bigg(\left\| \int_0^1 \nabla g_k(x + t \tau) dt \right\|_2^2  + 2 \|g_k(x) \|_2  \left\|  \int_0^1 \nabla g_k(x + t \tau) dt \right\|_2  \bigg), \\ 
B &= B_k(\tau) \frac{ \alpha_p ( \mathbb{1}_{p \in (0,2]} + d^{1/2-1/p} \mathbb{1}_{p \in (2,\infty)}  + \sqrt{d} \mathbb{1}_{p=\infty})  }{\sqrt{\sigma_{add}^2 + \sigma_{mult}^2 \beta_k^2}},
\end{align}
with 
\begin{equation}
    A_p(\alpha_p) =
    \begin{cases}
      \alpha_p \mathbb{1}_{\alpha_p < 1} + \alpha_p^2 \mathbb{1}_{\alpha_p \geq 1}  , & \text{if}\ p \in (0,2], \\
      d^{1/2-1/p} (\alpha_p \mathbb{1}_{\alpha_p < 1} + \alpha_p^2 \mathbb{1}_{\alpha_p \geq 1}), & \text{if}\ p \in (2,\infty), \\
      \sqrt{d} (\alpha_p \mathbb{1}_{\alpha_p < 1} + \alpha_p^2 \mathbb{1}_{\alpha_p \geq 1}), & \text{if}\ p = \infty,
    \end{cases}
  \end{equation}
and 
\begin{equation}
    B_k(\tau) = \sup_{x \in conv(\mathcal{X})} \bigg( \left\| \int_0^1 \nabla g_k(x + t \tau) dt \right\|_2 \cdot \sqrt{\sum_{i=1}^{d_k-1} \rho_i^2(g_k(x),\tau)} \bigg).
\end{equation}

Moreover, if $x \in \mathcal{X}$ is such that $[F^{noisy(k)}]^{top 1}(x) \geq [F^{noisy(k)}]^{top 2}(x) + 2 \epsilon(p,d,\alpha_p, \sigma_{add}, \sigma_{mult})$, then for any $\tau \in \mathcal{X}$, we have
\begin{equation}
    \|\tau \|_p \leq \alpha \implies G(x) =  G(x + \tau),
\end{equation}
for any $p>0$.
\end{thm}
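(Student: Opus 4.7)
The plan is to reduce everything to a total variation estimate between two multivariate Gaussians living in feature space, to which the Devroye--Mehrabian--Reddad bound (cited as \texttt{devroye2018total}) can be applied, and then to propagate that bound through the downstream map $f_k$ via the data processing inequality. Concretely, under Gaussian $\xi_k^{add}, \xi_k^{mult}$, the noisy feature at layer $k$ is distributed as $\mu_x := \mathcal{N}(g_k(x), \Sigma(g_k(x)))$, and $F^{noisy(k)}(x)$ is the pushforward $(f_k)_\ast \mu_x$. Because the TV distance is nonexpansive under deterministic maps,
\begin{equation*}
D_{TV}\bigl(F^{noisy(k)}(x), F^{noisy(k)}(x+\tau)\bigr)\;\le\; D_{TV}\bigl(\mu_x,\mu_{x+\tau}\bigr),
\end{equation*}
so it suffices to bound the right-hand side.

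For that Gaussian-vs-Gaussian bound, I would invoke the decomposition that splits the estimate into a mean-shift contribution and a covariance-mismatch contribution, projected onto the subspace $\mathrm{range}(\Pi_x)$ orthogonal to $g_k(x+\tau)-g_k(x)$. The mean-shift contribution is governed by $\|\Sigma(g_k(x))^{-1/2}(g_k(x+\tau)-g_k(x))\|_2$, while the covariance-mismatch contribution on $\Pi_x$ is precisely $\sqrt{\sum_i \rho_i^2(g_k(x),\tau)}$, which is where the eigenvalues appearing in $B_k(\tau)$ enter. To control the mean-shift factor, the mean value theorem gives $g_k(x+\tau)-g_k(x) = \bigl(\int_0^1 \nabla g_k(x+t\tau)\,dt\bigr)\tau$, so $\|g_k(x+\tau)-g_k(x)\|_2 \le \|\int_0^1 \nabla g_k(x+t\tau)\,dt\|_2\cdot\|\tau\|_2$, and the hypothesis $g_k(x)g_k(x)^T\ge \beta_k^2 I$ yields $\Sigma(g_k(x))\succeq(\sigma_{add}^2+\sigma_{mult}^2\beta_k^2)I$, hence $\|\Sigma(g_k(x))^{-1/2}\|_{op}\le (\sigma_{add}^2+\sigma_{mult}^2\beta_k^2)^{-1/2}$. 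Converting $\|\tau\|_2$ to $\|\tau\|_p$ by the standard norm inequalities produces the indicator-weighted dimension factor $\mathbb{1}_{p\in(0,2]}+d^{1/2-1/p}\mathbb{1}_{p\in(2,\infty)}+\sqrt{d}\,\mathbb{1}_{p=\infty}$, assembling into $B$.

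To obtain $A$, I would expand $\Sigma(g_k(x+\tau))-\Sigma(g_k(x))=\sigma_{mult}^2\bigl(g_k(x+\tau)g_k(x+\tau)^T-g_k(x)g_k(x)^T\bigr)$ as $(g_k(x+\tau)-g_k(x))g_k(x+\tau)^T+g_k(x)(g_k(x+\tau)-g_k(x))^T$, so that after conjugation by $\Sigma(g_k(x))^{-1/2}$ its operator norm is dominated by $\frac{\sigma_{mult}^2}{\sigma_{add}^2+\sigma_{mult}^2\beta_k^2}$ times a product of $\|\int_0^1 \nabla g_k(x+t\tau)\,dt\|_2$ with $\|g_k(x)\|_2+\|g_k(x+\tau)\|_2$. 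Quadratic terms in $\tau$ are retained when $\alpha_p\ge 1$, yielding the $\alpha_p\mathbb{1}_{\alpha_p<1}+\alpha_p^2\mathbb{1}_{\alpha_p\ge 1}$ structure of $A_p(\alpha_p)$. Taking the worse of the two bounds (with the additive-to-max inequality absorbed into the multiplicative constant $9/2$) and clipping at $1$ gives $\epsilon_k(p,d,\alpha_p,\sigma_{add},\sigma_{mult})=\tfrac{9}{2}\min\{1,\max\{A,B\}\}$.

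Finally, the classifier conclusion follows by a margin argument: since $F^{noisy(k)}(x)$ is a probability vector, the definition of TV gives $|[F^{noisy(k)}]^j(x+\tau)-[F^{noisy(k)}]^j(x)|\le \epsilon$ for every class $j$. Letting $j^\star=G(x)$, the hypothesis $[F^{noisy(k)}]^{top1}(x)\ge [F^{noisy(k)}]^{top2}(x)+2\epsilon$ then yields
\begin{equation*}
[F^{noisy(k)}]^{j^\star}(x+\tau)\;\ge\;[F^{noisy(k)}]^{top1}(x)-\epsilon\;\ge\;[F^{noisy(k)}]^{top2}(x)+\epsilon\;\ge\;[F^{noisy(k)}]^{j}(x+\tau)
\end{equation*}
for every $j\ne j^\star$, so $G(x+\tau)=j^\star=G(x)$. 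The main obstacle I anticipate is pinning down the exact constants and keeping track of the eigenvalues $\rho_i$ through the projection by $\Pi_x$: the Devroye--Mehrabian--Reddad estimate for Gaussians with unequal covariances is delicate, and bookkeeping the contributions from the $l_p$-to-$l_2$ conversion together with the covariance expansion is what ultimately produces the precise form of $A$, $B$, $A_p(\alpha_p)$, and the $9/2$ prefactor.
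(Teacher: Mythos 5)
Your proposal follows essentially the same route as the paper's proof: the paper also reduces $D_{TV}(F^{noisy(k)}(x),F^{noisy(k)}(x+\tau))$ to the TV distance between the two feature-space Gaussians via the data-processing inequality, applies the Devroye--Mehrabian--Reddad bound (packaged as Lemma \ref{lem_2}, which is where the $\tfrac{9}{2}\min\{1,\max\{A,B\}\}$ structure and the constant come from, rather than from any additive-to-max absorption), uses the fundamental-theorem-of-calculus representation $g_k(x+\tau)-g_k(x)=\bigl(\int_0^1 \nabla g_k(x+t\tau)\,dt\bigr)\tau$ together with $\Sigma(g_k(x))\succeq(\sigma_{add}^2+\sigma_{mult}^2\beta_k^2)I$ and the standard $l_p$-to-$l_2$ conversions to assemble $A$, $B$ and $A_p(\alpha_p)$, and finishes with the same margin argument you spell out (cited there as Proposition 3 of \cite{pinot2021robustness}). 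Apart from minor bookkeeping differences in how the covariance-difference term is bounded, your plan is the paper's proof.
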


Theorem \ref{thm_robustclassifier} implies that we can inject Gaussian noise into the feature mixup representation  to improve robustness of FM classifiers in the sense of Definition \ref{sm_defn_probrobustness}, while keeping track of maximal loss in accuracy incurred under attack, by tuning the noise levels $\sigma_{add}$ and $\sigma_{mult}$. To illustrate this, suppose that $\sigma_{mult} = 0$ and consider the case of $p=2$, in which case $A = 0$, $B \sim \alpha_2/\sigma_{add}$ and so injecting additive Gaussian noise can help controlling the change in the model output, keeping the classifier's prediction, when the data perturbation is of size $\alpha_2$.

We now prove Theorem \ref{thm_robustclassifier}. Before this, we need the following lemma. 

\begin{lem} \label{lem_2}
Let $x_1 := z \in \RR^{d_k}$ and $x_2 := z + \tau \in \RR^{d_k}$, with $\tau > 0$ and $d_k > 1$, and $\Sigma(x) := \sigma_{add}^2 I + \sigma_{mult}^2 x x^T \geq (\sigma_{add}^2  + \sigma_{mult}^2 \beta^2)I > 0$, for some constant $\beta$, for all $x$.  Let $\Pi$ be a $d_k$ by $d_k-1$ matrix whose columns form a basis for the subspace orthogonal to $\tau$, and let $\rho_1(z, \tau), \dots, \rho_{d_k - 1}(z, \tau)$ denote the
eigenvalues of $(\Pi^T \Sigma(x_1) \Pi)^{-1} \Pi^T \Sigma(x_2) \Pi - I$.

Define the function $C(x_1, x_2, \Sigma) := \max\{A, B\}$, where 
\begin{align}
  A &=   \frac{\sigma_{mult}^2}{\sigma_{add}^2 + \sigma_{mult}^2 \beta^2} (\|\tau\|_2^2 + 2 \tau^T z), \\
  B &= \frac{\|\tau \|_2}{\sqrt{\sigma_{add}^2 + \sigma_{mult}^2 \beta^2}} \sqrt{\sum_{i=1}^{d_k-1} \rho_i^2(z,\tau)}.
\end{align}
  
Then, the total variation distance between
$\mathcal{N}(x_1, \Sigma(x_1))$ and $\mathcal{N}(x_2, \Sigma(x_2))$ admits the following bounds:
\begin{equation}
    \frac{1}{200} \leq \frac{D_{TV}(\mathcal{N}(x_1, \Sigma(x_1)), \mathcal{N}(x_2, \Sigma(x_2)) )}{\min\{1, C(x_1, x_2, \Sigma)\}} \leq \frac{9}{2}.
\end{equation}
\end{lem}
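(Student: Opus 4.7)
The strategy is to reduce the lemma to the general two-sided total variation bounds between multivariate Gaussians proved in \cite{devroye2018total}. Their main theorem gives, for any nondegenerate Gaussians $\mathcal{N}(\mu_1,\Sigma_1)$ and $\mathcal{N}(\mu_2,\Sigma_2)$, bounds of the form
\[
\frac{1}{200} \leq \frac{D_{TV}(\mathcal{N}(\mu_1,\Sigma_1),\mathcal{N}(\mu_2,\Sigma_2))}{\min\{1,\max\{T_\mu,T_\Sigma\}\}} \leq \frac{9}{2},
\]
where $T_\mu$ is a normalized mean-shift term and $T_\Sigma$ is a normalized covariance-mismatch term of Frobenius type. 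The plan is to specialize these universal quantities to $\mu_i=x_i$ and $\Sigma_i=\sigma_{add}^2 I+\sigma_{mult}^2 x_i x_i^T$, then identify the two resulting quantities with $B$ and $A$ respectively, exploiting the rank-one structure of $\Sigma(x_2)-\Sigma(x_1)$.

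The first step is to introduce the orthogonal change of basis $Q=[\tau/\|\tau\|_2,\ \Pi]$. In this basis the mean shift lives entirely in the first coordinate, and the $(d_k-1)$-dimensional restriction of the covariance mismatch to the orthogonal complement is governed precisely by the eigenvalues of $(\Pi^T\Sigma(x_1)\Pi)^{-1}\Pi^T\Sigma(x_2)\Pi - I$, whose squared Frobenius norm equals $\sum_{i=1}^{d_k-1}\rho_i^2(z,\tau)$ by definition of the $\rho_i$. The mean-shift contribution scales like $\|\tau\|_2/\sqrt{\tau^T\Sigma(x_1)\tau/\|\tau\|_2^2}$, which I bound from above using the assumption $\Sigma(x_1)\geq(\sigma_{add}^2+\sigma_{mult}^2\beta^2)I$ by $\|\tau\|_2/\sqrt{\sigma_{add}^2+\sigma_{mult}^2\beta^2}$. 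Combining the mean shift with the orthogonal-covariance Frobenius norm reproduces $B$.

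The second step is to compute the trace-type covariance contribution and show it yields $A$. Since $\Sigma(x_2)-\Sigma(x_1)=\sigma_{mult}^2(z\tau^T+\tau z^T+\tau\tau^T)$, one has $\mathrm{tr}(\Sigma(x_2)-\Sigma(x_1))=\sigma_{mult}^2(\|\tau\|_2^2+2\tau^Tz)$; normalizing by the minimum eigenvalue bound on $\Sigma(x_1)$ delivers exactly $A=\mathrm{tr}(\Sigma(x_2)-\Sigma(x_1))/(\sigma_{add}^2+\sigma_{mult}^2\beta^2)$. Taking the maximum of the two terms, the universal constants $1/200$ and $9/2$ then carry over directly from \cite{devroye2018total}.

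The main obstacle will be handling the coupling between the mean-shift and covariance-mismatch contributions cleanly: since $\Sigma(x_1)$ and $\Sigma(x_2)$ do not commute in general, projecting the covariance mismatch onto the orthogonal complement of $\tau$ (as encoded in the definition of the $\rho_i$) is what keeps the bound sharp and cleanly separates the genuinely multi-dimensional contribution $B$ from the along-$\tau$ trace contribution $A$. Verifying that the lower constant $1/200$ survives this split requires care, but follows from applying the lower bound of \cite{devroye2018total} after the orthogonal change of basis $Q$ and then reducing to the two separated blocks.
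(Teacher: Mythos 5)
Your proposal takes essentially the same route as the paper: the paper's entire proof of this lemma is a one-sentence appeal to Theorem 1.2 of \cite{devroye2018total}, i.e., exactly the two-sided Gaussian total-variation bound you invoke, specialized to means $x_1, x_2$ and covariances $\Sigma(x_i) = \sigma_{add}^2 I + \sigma_{mult}^2 x_i x_i^T$. Your extra bookkeeping (the rank-one structure of $\Sigma(x_2)-\Sigma(x_1)$ producing $A$, and the eigenvalues of the projected covariance ratio producing $B$) only fills in details the paper leaves implicit, with the same level of informality the paper itself accepts regarding whether the constant $\tfrac{1}{200}$ survives after replacing $\tau^T\Sigma(x_1)\tau$ by its minimum-eigenvalue lower bound $(\sigma_{add}^2+\sigma_{mult}^2\beta^2)\|\tau\|_2^2$.
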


\begin{proof}[Proof of Lemma \ref{lem_2}] The result follows from a straightforward application of Theorem 1.2 in \cite{devroye2018total}, which provides bounds on the total variation distance between Gaussians with different means and covariances. 
\end{proof}

With this lemma in hand, we now prove Theorem \ref{thm_robustclassifier}.

\begin{proof}[Proof of Theorem \ref{thm_robustclassifier}]
We denote the noise injection procedure by the map $\mathcal{I}: x \to \mathcal{N}(x, \Sigma(x))$, where $\Sigma(x) = \sigma_{add}^2 I + \sigma_{mult}^2 x x^T$. 

Let $x \in \mathcal{X}$ be a test datapoint and $\tau \in \mathcal{X}$ be a data perturbation such that $\|\tau\|_p \leq \alpha_p$ for $p>0$. 

Note that 
\begin{align}
    D_{TV}(F_k(\mathcal{I}(g_k(x))), F_k(\mathcal{I}(g_k(x+\tau)))) &\leq D_{TV}(\mathcal{I}(g_k(x)), \mathcal{I}(g_k(x+\tau))) \\
    &\leq D_{TV}(\mathcal{I}(g_k(x)), \mathcal{I}(g_k(x) + g_k(x+\tau) - g_k(x))) \\
    &= D_{TV}\left(\mathcal{I}(g_k(x)), \mathcal{I}\left(g_k(x) + \tau_k \right)\right) \\
    &\leq \frac{9}{2} \min\{1, \Phi(g_k(x), \tau_k, \sigma_{add}, \sigma_{mult}, \beta_k) \},
\end{align}
where $\tau_k := g_k(x+\tau) - g_k(x) = \left( \int_0^1 \nabla g_k(x + t \tau) dt \right)  \tau$ by the generalized fundamental theorem of calculus, and 
\begin{align}
    &\Phi(g_k(x), \tau_k, \sigma_{add}, \sigma_{mult}, \beta_k) \nonumber \\
    &:= \max\left\{ \frac{\sigma_{mult}^2}{\sigma_{add}^2 + \sigma_{mult}^2 \beta_k^2} (\|\tau_k\|_2^2 + 2 \langle \tau_k, g_k(x) \rangle), \frac{\|\tau_k \|_2}{\sqrt{\sigma_{add}^2 + \sigma_{mult}^2 \beta_k^2}} \sqrt{\sum_{i=1}^{d_k-1} \rho_i^2(g_k(x),\tau)}\right\},
\end{align}
where the $\rho_i(g_k(x), \tau)$ are the eigenvalues given in the theorem.

In the first line above, we have used the data preprocessing inequality (Theorem 6 in \cite{pinot2021robustness}), and the last line follows from applying Lemma \ref{lem_2} together with the assumption that  $g_k(x) g_k(x)^T \geq \beta_k^2 > 0$ for all $x$.

Using the bounds
\begin{align}
\|\tau_k\|_2 &\leq \left\| \int_0^1 \nabla g_k(x + t \tau) dt \right\|_2 \|\tau\|_2 
\end{align}
and 
\begin{equation}
    |\langle \tau_k, g_k(x) \rangle | \leq \|g_k(x) \|_2  \left\|  \int_0^1 \nabla g_k(x + t \tau) dt \right\|_2 \|\tau\|_2,
\end{equation}
we have
\begin{align}
    \Phi(g_k(x), \tau_k, \sigma_{add}, \sigma_{mult}, \beta_k) 
    &\leq  \max\left\{ A, B \right\},
\end{align}
where 
\begin{equation}
A = \frac{\sigma_{mult}^2}{\sigma_{add}^2 + \sigma_{mult}^2 \beta_k^2} \bigg(\left\| \int_0^1 \nabla g_k(x + t \tau) dt \right\|_2^2 \|\tau\|_2^2  + 2 \|g_k(x) \|_2  \left\|  \int_0^1 \nabla g_k(x + t \tau) dt \right\|_2 \|\tau\|_2 \bigg)    
\end{equation}
and 
\begin{align}
    B &= \frac{\left\| \int_0^1 \nabla g_k(x + t \tau) dt \right\|_2 \|\tau\|_2  }{\sqrt{\sigma_{add}^2 + \sigma_{mult}^2 \beta_k^2}} \sqrt{\sum_{i=1}^{d_k-1} \rho_i^2(g_k(x),\tau)} \\ 
    &\leq \sup_{x \in conv(\mathcal{X})} \bigg( \left\| \int_0^1 \nabla g_k(x + t \tau) dt \right\|_2 \cdot \sqrt{\sum_{i=1}^{d_k-1} \rho_i^2(g_k(x),\tau)} \bigg)   \frac{ \|\tau\|_2  }{\sqrt{\sigma_{add}^2 + \sigma_{mult}^2 \beta_k^2}} \\
    &=: B_k(\tau)  \frac{ \|\tau\|_2  }{\sqrt{\sigma_{add}^2 + \sigma_{mult}^2 \beta_k^2}}.
\end{align}

The first statement of the theorem then follows from the facts that $\|\tau\|_2 \leq \|\tau\|_p \leq \alpha_p$ for $p \in (0,2]$, $\|\tau\|_2 \leq d^{1/2-1/q}\|\tau\|_q \leq d^{1/2-1/q} \alpha_q$ for $q > 2$, and $\|\tau\|_2 \leq \sqrt{d} \|\tau\|_\infty \leq \sqrt{d}  \alpha_\infty$ for any $\tau \in \RR^d$. In particular, these imply that 
$A \leq C A_p$, where 
\begin{equation}
    A_p =
    \begin{cases}
      \alpha_p \mathbb{1}_{\alpha_p < 1} + \alpha_p^2 \mathbb{1}_{\alpha_p \geq 1}  , & \text{if}\ p \in (0,2], \\
      d^{1/2-1/p} (\alpha_p \mathbb{1}_{\alpha_p < 1} + \alpha_p^2 \mathbb{1}_{\alpha_p \geq 1}), & \text{if}\ p \in (2,\infty), \\
      \sqrt{d} (\alpha_p \mathbb{1}_{\alpha_p < 1} + \alpha_p^2 \mathbb{1}_{\alpha_p \geq 1}), & \text{if}\ p = \infty,
    \end{cases}
  \end{equation}
and
\begin{equation}
    C := \frac{\sigma_{mult}^2}{\sigma_{add}^2 + \sigma_{mult}^2 \beta_k^2} \bigg(\left\| \int_0^1 \nabla g_k(x + t \tau) dt \right\|_2^2  + 2 \|g_k(x) \|_2  \left\|  \int_0^1 \nabla g_k(x + t \tau) dt \right\|_2  \bigg).
\end{equation}

The last statement in the theorem essentially follows from Proposition 3 in \cite{pinot2021robustness}.
\end{proof}

\section{On Generalization Bounds for NFM} \label{sm_secF}

Let $\mathcal{F}$ be the family of  mappings $x \mapsto f(x)$ and $Z_n := ((x_i, y_i))_{i \in [n]}$. Given a loss function $l$, the Rademacher complexity of the set $l \circ \mathcal{F} := \{(x,y) \mapsto l(f(x),y): f \in \mathcal{F}\}$ is defined as:
\begin{equation}
    R_n(l \circ \mathcal{F}) := \mathbb{E}_{Z_n, \sigma}\left[\sup_{f \in \mathcal{F}} \frac{1}{n} \sum_{i=1}^n \sigma_i l(f(x_i), y_i) \right],
\end{equation}
where $\sigma := (\sigma_1, \dots, \sigma_n)$, with the $\sigma_i$  independent uniform random variables taking values in $\{-1,1\}$.

Following \cite{lamb2019interpolated}, we can derive the following generalization bound for the NFM loss function, i.e., the upper bound on the difference between the expected error on unseen data and the NFM loss. This bound shows that NFM can reduce overfitting and give rise to improved generalization.

\begin{thm}[Generalization bound for the NFM loss] \label{sm_genbound1}
Assume that the loss function $l$ satisfies $|l(x,y) - l(x',y) | \leq M$ for all $x,x'$ and $y$. Then, for every $\delta > 0$, with probability at least $1-\delta$ over a draw of $n$ i.i.d. samples $\{(x_i, y_i)\}_{i=1}^n$, we have the following generalization bound: for all maps $f \in \mathcal{F}$,
\begin{equation}
    \mathbb{E}_{x,y}[l(f(x),y)] - L_n^{NFM} \leq 2 R_n(l \circ \mathcal{F}) + 2 M \sqrt{\frac{\ln(1/\delta)}{2n}} - Q_\epsilon(f),
\end{equation}
where 
\begin{equation}
    Q_\epsilon(f) = \mathbb{E}[\epsilon R^{(k)}_1 + \epsilon^2 \tilde{R}^{(k)}_2 + \epsilon^2 \tilde{R}^{(k)}_3] + \epsilon^2 \varphi(\epsilon),
\end{equation}
for some function $\varphi$ such that $\lim_{x \to \infty} \varphi(x) = 0$.
\end{thm}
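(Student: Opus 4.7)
The plan is to combine a standard Rademacher-complexity-based generalization bound for the original (ERM) loss with the implicit-regularization expansion of Theorem \ref{sm_prop_implicitreg}, which expresses $L_n^{NFM}$ as $L_n^{std}$ plus the quantity $Q_\epsilon(f)$. The key algebraic identity driving the proof is
\begin{equation}
L_n^{NFM} = L_n^{std} + Q_\epsilon(f), \qquad Q_\epsilon(f) = \mathbb{E}_{k\sim\mathcal{S}}\bigl[\epsilon R_1^{(k)} + \epsilon^2 \tilde R_2^{(k)} + \epsilon^2 \tilde R_3^{(k)}\bigr] + \epsilon^2 \varphi(\epsilon),
\end{equation}
which is a direct consequence of Theorem \ref{sm_prop_implicitreg} after taking the expectation over the layer index $k \sim \mathcal{S}$. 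Rearranging gives $L_n^{std} = L_n^{NFM} - Q_\epsilon(f)$, so once we have a standard uniform bound on $\mathbb{E}_{x,y}[l(f(x),y)] - L_n^{std}$ in terms of Rademacher complexity, the claimed bound follows by substitution.

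The first step is to establish the uniform convergence statement
\begin{equation}
\sup_{f \in \mathcal{F}} \bigl(\mathbb{E}_{x,y}[l(f(x),y)] - L_n^{std}(f)\bigr) \leq 2 R_n(l \circ \mathcal{F}) + 2M\sqrt{\frac{\ln(1/\delta)}{2n}}
\end{equation}
with probability at least $1-\delta$. This is the classical Rademacher bound: apply McDiarmid's inequality to the function $Z_n \mapsto \sup_{f} (\mathbb{E}[l(f(x),y)] - \frac{1}{n}\sum_i l(f(x_i),y_i))$, whose bounded-difference constant is $M/n$ by the hypothesis $|l(x,y)-l(x',y)| \leq M$, to concentrate the supremum around its expectation. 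Then apply the standard symmetrization argument (introducing a ghost sample and Rademacher signs $\sigma_i$) to bound the expected supremum by $2 R_n(l \circ \mathcal{F})$.

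The second step is to insert the identity $L_n^{std}(f) = L_n^{NFM}(f) - Q_\epsilon(f)$ into the uniform bound. For every $f \in \mathcal{F}$ this yields
\begin{equation}
\mathbb{E}_{x,y}[l(f(x),y)] - L_n^{NFM}(f) + Q_\epsilon(f) \leq 2 R_n(l \circ \mathcal{F}) + 2M\sqrt{\frac{\ln(1/\delta)}{2n}},
\end{equation}
and rearranging gives the advertised bound with the $-Q_\epsilon(f)$ term on the right-hand side.

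I do not anticipate a serious obstacle here: the Rademacher piece is entirely standard, and the only nontrivial input is Theorem \ref{sm_prop_implicitreg}, which is already proved in the paper. The mild subtlety is that the expansion in Theorem \ref{sm_prop_implicitreg} carries a residual term $\epsilon^2 \varphi(\epsilon)$ with $\varphi(\epsilon) \to 0$, but since this residual is absorbed into the very definition of $Q_\epsilon(f)$ stated in Theorem \ref{sm_genbound1}, the identity is exact as written and no additional care is needed in the substitution. The boundedness assumption on $l$ is used solely to invoke McDiarmid and plays no role in handling $Q_\epsilon(f)$.
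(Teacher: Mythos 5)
Your proposal is correct, and it uses the same basic ingredients as the paper (McDiarmid, symmetrization, and the expansion of Theorem \ref{sm_prop_implicitreg}), but it orders them differently. The paper applies McDiarmid directly to the NFM functional $\sup_{f}\bigl(\mathbb{E}_{x,y}[l(f(x),y)] - L_n^{NFM}\bigr)$; this is where its bounded-difference computation comes from, namely that $L_n^{NFM}$ involves $n^2$ pair terms of which $2n-1$ change when one sample is replaced, giving the constant $2M/n$ and hence the $2M\sqrt{\ln(1/\delta)/(2n)}$ term. It then invokes Theorem \ref{sm_prop_implicitreg} inside the expectation to peel off $Q_\epsilon(f)$ before symmetrizing the standard loss. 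You instead prove the classical uniform bound for $L_n^{std}$ first (which is exactly the paper's Theorem \ref{sm_genbound2}) and then substitute the identity $L_n^{std} = L_n^{NFM} - Q_\epsilon(f)$ pointwise in $f$ on the high-probability event; in effect you derive Theorem \ref{sm_genbound1} as a corollary of Theorem \ref{sm_genbound2} plus Theorem \ref{sm_prop_implicitreg}. Your route is arguably cleaner: the substitution is made for each fixed $f$ and the realized sample, which sidesteps the paper's somewhat loose step of pulling the $f$- and sample-dependent quantity $Q_\epsilon(f)$ out of a supremum and an expectation over $Z_n$, and it makes the $n^2$-term bounded-difference analysis unnecessary. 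Two minor remarks: with your bounded-difference constant $M/n$, McDiarmid actually yields the sharper deviation $M\sqrt{\ln(1/\delta)/(2n)}$, so the stated $2M$ constant holds a fortiori (the paper's $2M$ is what its own McDiarmid step on the pairwise NFM loss produces); and, like the paper, you read the hypothesis $|l(x,y)-l(x',y)|\le M$ as controlling the change of the empirical loss when a sample (including its label) is replaced, which is the intended interpretation in the paper's own proof, so this is not a gap relative to it.
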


To compare the generalization behavior of NFM with that without using  NFM, we also need the following  generalization bound for the standard loss function.

\begin{thm}[Generalization bound for the standard loss]
\label{sm_genbound2} 
Assume that the loss function $l$ satisfies $|l(x,y) - l(x',y) | \leq M$ for all $x,x'$ and $y$. Then, for every $\delta > 0$, with probability at least $1-\delta$ over a draw of $n$ i.i.d. samples $\{(x_i, y_i)\}_{i=1}^n$, we have the following generalization bound: for all maps $f \in \mathcal{F}$,
\begin{equation}
    \mathbb{E}_{x,y}[l(f(x),y)] - L_n^{std} \leq 2 R_n(l \circ \mathcal{F}) + 2 M \sqrt{\frac{\ln(1/\delta)}{2n}}.
\end{equation}
\end{thm}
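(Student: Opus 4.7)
The plan is to apply the standard symmetrization plus concentration argument for uniform deviation, specialized to the class $l \circ \mathcal{F}$. Define the uniform deviation
\begin{equation}
\Phi(Z_n) := \sup_{f \in \mathcal{F}} \left( \mathbb{E}_{(x,y) \sim \mathcal{D}}[l(f(x),y)] - \frac{1}{n}\sum_{i=1}^n l(f(x_i), y_i) \right).
\end{equation}
The goal is to show that with probability at least $1-\delta$, $\Phi(Z_n) \leq 2 R_n(l \circ \mathcal{F}) + 2M\sqrt{\ln(1/\delta)/(2n)}$, which immediately yields the stated bound since the deviation for any particular $f$ is dominated by $\Phi$.

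First, I would establish concentration of $\Phi$ around its mean via McDiarmid's bounded differences inequality. The key observation is that replacing one sample $z_i = (x_i,y_i)$ with $z_i' = (x_i', y_i')$ changes each empirical average $\tfrac{1}{n}\sum_j l(f(x_j), y_j)$ by at most $\tfrac{1}{n}(l(f(x_i), y_i) - l(f(x_i'), y_i'))$, whose absolute value is at most $M/n$ by the hypothesis $|l(x,y) - l(x',y)| \leq M$ (applied after a telescoping through an intermediate point, or by noting this hypothesis implies the range of $l(\cdot, y)$ is bounded by $M$ uniformly in $y$). Taking suprema preserves this bounded difference, so McDiarmid's inequality yields
\begin{equation}
\Pr\bigl( \Phi(Z_n) - \mathbb{E}[\Phi(Z_n)] \geq t \bigr) \leq \exp\!\left( -\frac{2 n t^2}{M^2} \right),
\end{equation}
which, after inverting at level $\delta$, gives $\Phi(Z_n) \leq \mathbb{E}[\Phi(Z_n)] + M\sqrt{\ln(1/\delta)/(2n)}$ with probability $\geq 1-\delta$.

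Next, I would bound $\mathbb{E}[\Phi(Z_n)]$ by $2 R_n(l \circ \mathcal{F})$ via the classical symmetrization trick: introduce an independent ghost sample $Z_n' = (z_1',\ldots,z_n')$ and write $\mathbb{E}_{(x,y)}[l(f(x),y)] = \mathbb{E}_{Z_n'}[\tfrac{1}{n}\sum_i l(f(x_i'),y_i')]$, so that by Jensen's inequality
\begin{equation}
\mathbb{E}[\Phi(Z_n)] \leq \mathbb{E}_{Z_n, Z_n'} \sup_{f \in \mathcal{F}} \frac{1}{n}\sum_{i=1}^n \bigl(l(f(x_i'),y_i') - l(f(x_i),y_i)\bigr).
\end{equation}
Since the pairs $(z_i, z_i')$ are i.i.d. and the difference $l(f(x_i'),y_i') - l(f(x_i),y_i)$ is symmetric under swapping $z_i \leftrightarrow z_i'$, I can inject Rademacher signs $\sigma_i \in \{-1,+1\}$ without changing the distribution, and then split the supremum of a sum into two identical Rademacher averages, giving $\mathbb{E}[\Phi(Z_n)] \leq 2 R_n(l \circ \mathcal{F})$. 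Combining this with the McDiarmid bound completes the argument.

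The proof is entirely classical and poses no real obstacle; the only point requiring a brief care is reading the hypothesis $|l(x,y) - l(x',y)| \leq M$ as a uniform-range condition that legitimately yields the per-coordinate increment bound $M/n$ needed for McDiarmid's inequality. Everything else is textbook symmetrization.
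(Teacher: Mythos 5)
Your proof is correct and follows essentially the same route as the paper, which proves Theorem \ref{sm_genbound1} by McDiarmid's bounded-differences inequality followed by the classical ghost-sample symmetrization with Rademacher signs, and notes that Theorem \ref{sm_genbound2} follows by the same argument. In fact, your per-coordinate increment $M/n$ yields the slightly sharper deviation term $M\sqrt{\ln(1/\delta)/(2n)}$, which of course implies the stated bound with $2M\sqrt{\ln(1/\delta)/(2n)}$ (the factor $2$ in the paper comes from reusing the $2M/n$ increment of the pairwise NFM loss).
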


By comparing the above two theorems and following the argument of \cite{lamb2019interpolated}, we see that the generalization benefit of NFM comes from two mechanisms. The first mechanism is based on the term $Q_\epsilon(f)$. Assuming that the Rademacher complexity term is the same for both methods,  then  NFM has a better generalization bound than that of standard method if $Q_\epsilon(f) > 0$. The second mechanism is based on the Rademacher complexity term $R_n(l \circ \mathcal{F})$. For certain families of neural networks, this term can be bounded by the  norms of the hidden layers of the network and the norms of the Jacobians of each layer with respect to
all previous layers \cite{wei2019data,wei2019improved}. Therefore, this term differs for the case of training using NFM and the case of standard training. Since NFM implicitly reduces the feature-output Jacobians (see Theorem \ref{sm_prop_implicitreg}), we can argue  that NFM leads to a smaller Rademacher complexity term and hence a better generalization bound.

We now prove Theorem \ref{sm_genbound1}. The proof of Theorem \ref{sm_genbound2} follows the same argument as that of Theorem \ref{sm_genbound1}.

\begin{proof}[Proof of Theorem \ref{sm_genbound1}] 
Let $Z_n := \{(x_i, y_i)\}_{i \in [n]}$ and $Z'_n := \{(x'_i, y'_i)\}_{i \in [n]}$ be two test datasets, where $Z_n'$ differs from $Z_n$ by exactly one point of an arbitrary index $i_0$.  

Denote  $GE(Z_n) := \sup_{f \in \mathcal{F}} \mathbb{E}_{x,y}[l(f(x),y)] - L_n^{NFM}$, where  $L_n^{NFM}$ is computed using the dataset $Z_n$, and likewise for $GE(Z_n')$. Then, 
\begin{equation}
    GE(Z_n') - GE(Z_n) \leq \frac{M (2n-1)}{n^2} \leq \frac{2M}{n},
\end{equation}
where we have used the fact that $L_n^{NFM}$ has $n^2$ terms and there are $2n-1$ different terms for $Z_n$ and $Z_n'.$ Similarly, we have $GE(Z_n) - GE(Z_n') \leq \frac{2M}{n}$. 

Therefore, by McDiarmid's inequality, for any $\delta > 0$, with probability at least $1-\delta$,  
\begin{equation}
    GE(Z_n) \leq \mathbb{E}_{Z_n}[GE(Z_n)] + 2 M \sqrt{\frac{\ln(1/\delta)}{2n}}.
\end{equation}

Applying Theorem \ref{sm_prop_implicitreg}, we have
\begin{align}
    GE(Z_n) &\leq \mathbb{E}_{Z_n}\left[\sup_{f \in \mathcal{F}} \mathbb{E}_{Z_n'}\left[ \frac{1}{n} \sum_{i=1}^n l(f(x_i'),y_i') \right] - L_n^{NFM} \right] + 2 M \sqrt{\frac{\ln(1/\delta)}{2n}} \label{line2} \\
    &=  \mathbb{E}_{Z_n}\left[\sup_{f \in \mathcal{F}} \mathbb{E}_{Z_n'}\left[ \frac{1}{n} \sum_{i=1}^n l(f(x_i'),y_i') \right] - \frac{1}{n} \sum_{i=1}^n   l(f(x_i),y_i)  \right] - Q_\epsilon(f) \nonumber \\
    &\ \ \ \ + 2 M \sqrt{\frac{\ln(1/\delta)}{2n}} \label{line3} \\
    &\leq  \mathbb{E}_{Z_n, Z_n'}\left[\sup_{f \in \mathcal{F}}  \frac{1}{n} \sum_{i=1}^n (l(f(x_i'),y_i') -  l(f(x_i),y_i))  \right]  - Q_\epsilon(f) + 2 M \sqrt{\frac{\ln(1/\delta)}{2n}} \label{line4} \\
    &\leq \mathbb{E}_{Z_n, Z_n', \sigma}\left[\sup_{f \in \mathcal{F}}  \frac{1}{n} \sum_{i=1}^n \sigma_i (l(f(x_i'),y_i') -  l(f(x_i),y_i))  \right]  - Q_\epsilon(f) + 2 M \sqrt{\frac{\ln(1/\delta)}{2n}} \label{line5} \\
    &\leq 2 \mathbb{E}_{Z_n, \sigma}\left[\sup_{f \in \mathcal{F}}  \frac{1}{n} \sum_{i=1}^n \sigma_i  l(f(x_i),y_i) \right]  - Q_\epsilon(f) + 2 M \sqrt{\frac{\ln(1/\delta)}{2n}} \label{line6} \\
    &= 2 R_n(l \circ \mathcal{F}) - Q_\epsilon(f) + 2 M \sqrt{\frac{\ln(1/\delta)}{2n}}, \label{line7}
\end{align}
where (\ref{line2}) uses  the definition of $GE(Z_n)$,  (\ref{line3}) uses  $\pm \frac{1}{n} \sum_{i=1}^n l(f(x_i), y_i)$ inside the expectation and the  linearity of expectation, (\ref{line4}) follows from the Jensen's inequality and the convexity of the supremum,  (\ref{line5}) follows from the fact that $\sigma_i (l(f(x_i'),y_i') -  l(f(x_i),y_i))$ and $l(f(x_i'),y_i') -  l(f(x_i),y_i)$ have the same distribution for each $\sigma_i \in \{-1,1\}$ (since $Z_n, Z_n'$ are drawn i.i.d. with the same distribution), and (\ref{line6}) follows from the subadditivity of supremum.

The bound in the theorem then follows from the above bound.
\end{proof}

\section{Additional Experiments and Details} \label{sm_secG}

\subsection{Input Perturbations} \label{sm_inputperturb}

We consider the following three types of data perturbations during inference time:

\begin{itemize}[leftmargin=1.0em]
	\item \emph{White noise perturbations} are constructed as $\tilde{x} = x + \Delta x$, where the additive noise is sampled from a Gaussian distribution $\Delta x \sim \mathcal{N}(0,\sigma)$. This perturbation strategy emulates measurement errors that can result from data acquisition with poor sensors (where $\sigma$ corresponds to the severity of these errors). 

	\item \emph{Salt and pepper perturbations} emulate defective pixels that result from converting analog signals to digital signals. The noise model takes the form
	$\mathbb{P}(\tilde{X}=X)=1 - \gamma$, and $\mathbb{P}(\tilde{X}=\max) = \mathbb{P}(\tilde{X}=\min)=\gamma / 2,$
	where $\tilde{X}(i,j)$ denotes the corrupted image and $\min$, $\max$ denote the minimum and maximum pixel values, respectively. $\gamma$ parameterizes the proportion of defective pixels.
	\item \emph{Adversarial perturbations} are ``worst-case'' non-random perturbations	that maximize the loss $\ell(g^\delta(X+\Delta X),y)$ 	subject to the constraint $\|\Delta X\| \leq r$ on the norm of the perturbation. We consider the projected gradient decent for constructing these perturbations~\cite{madry2017towards}.
\end{itemize}

\begin{figure}[!b]
	\centering
	\begin{subfigure}[t]{0.38\textwidth}
		\centering
		\begin{overpic}[width=1\textwidth]{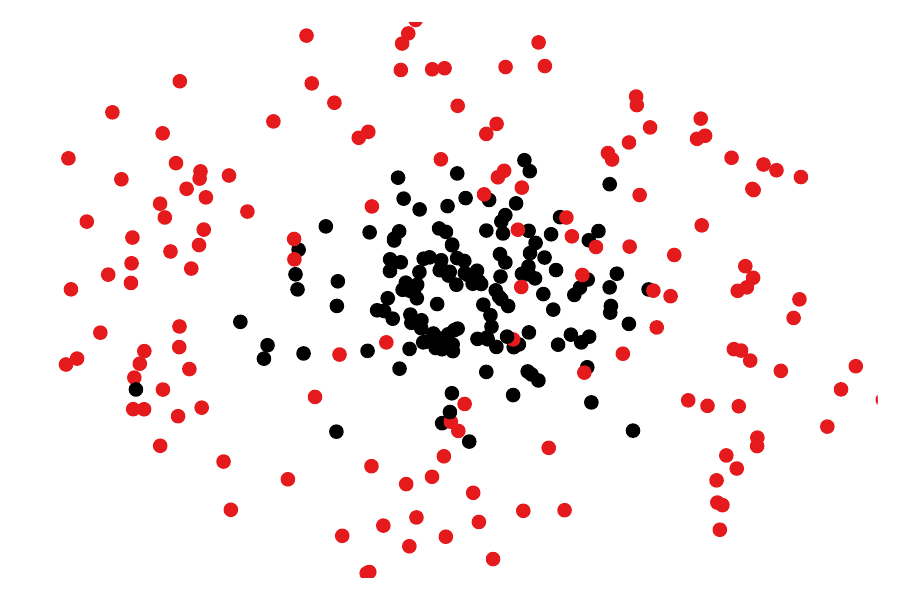}
		\end{overpic}
		\caption{Data points for training.}
	\end{subfigure}
	~
	\begin{subfigure}[t]{0.38\textwidth}
		\centering
		\begin{overpic}[width=1\textwidth]{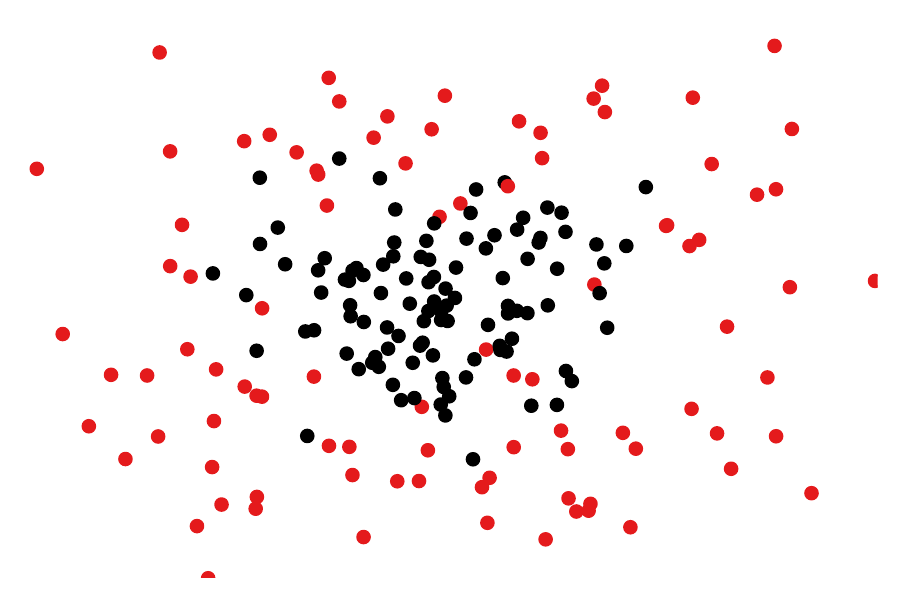} 
		\end{overpic}			
		\caption{Data points for testing.}
	\end{subfigure}	
	
	\vspace{+0.2cm}
	\caption{The toy dataset in $\mathbb{R}^2$ that we use for binary classification.}
	\label{fig:toy_data}
\end{figure}

\subsection{Illustration of the Effects of NFM on Toy Datasets} \label{sm_secA}

We consider a binary classification task for the noise corrupted 2D dataset whose data points form two concentric circles. Points on the same circle corresponds to the same label class. We generate 500 samples, setting the scale factor between inner and outer circle to be 0.05 and adding Gaussian noise with zero mean and standard deviation of 0.3 to the samples.  Fig.~\ref{fig:toy_data} shows the training and test data points. We train a fully connected feedforward neural network that has four layers with the ReLU activation functions on these data, using 300 points for training and 200 for testing. All models are trained with Adam and learning rate $0.1$, and the seed is fixed across all experiments. Note that the learning rate can be considered as a temperature parameter which introduces some amount of regularization itself. Hence,  we choose a learning rate that is large for this problem to better illustrate the regularization effects imposed by the different schemes that we consider.  

Fig.~\ref{fig:toy_example} illustrates how different regularization strategies affect the decision boundaries of the neural network classifier.  The decision boundaries and the test accuracy indicate that white noise injections and dropout (we explore dropout rates in the range $[0.0, 0.9]$ and we finds that $0.2$ yields the best performance) introduce a favorable amount of regularization. Most notably is the effect of weight decay (we use $9e{-3}$), i.e., the decision boundary is nicely smoothed and the test accuracy is improved. In contrast, the simple mixup data augmentation scheme shows no benefits here, whereas manifold mixup is improving the predictive accuracy considerably. Combining mixup (manifold mixup) with noise injections yields the best performance  in terms of both smoothness of the decision boundary and predictive accuracy. Indeed, NFM is outperforming all other methods here. 

The performance could be further improved by combining NFM  with weight decay or dropout. This shows that there are interaction effects between different regularization schemes. In practice, when one trains deep neural networks, different regularization strategies are considered as knobs that are fine-tuned.
From this perspective, NFM provides additional knobs to further improve a model.

\subsection{Additional Results for Vision Transformers}

Here we consider compact vision transformer (ViT-lite) with 7 attention layers and 4 heads~\cite{hassani2021escaping}.
Fig.~\ref{fig:cifar10_transformers} (left) compares vision transformers trained with different data augmentation strategies. Again, NFM improves the robustness of the models while achieving state-of-the-art accuracy when evaluated on clean data. However, mixup and manifold mixup do not boost the robustness. Further, Fig.~\ref{fig:cifar10_transformers} (right) shows that that the vision transformer is less sensitive to salt and pepper perturbations as compared to the ResNet model. These results are consistent with the high robustness properties of transformers recently reported in~\cite{shao2021adversarial,paul2021vision}. Table~\ref{tab:cifar10-vit} provides additional results for different $\alpha$ values.

\begin{figure}[!t]
	\centering
	\begin{subfigure}[t]{0.48\textwidth}
		\centering
		\begin{overpic}[width=1\textwidth]{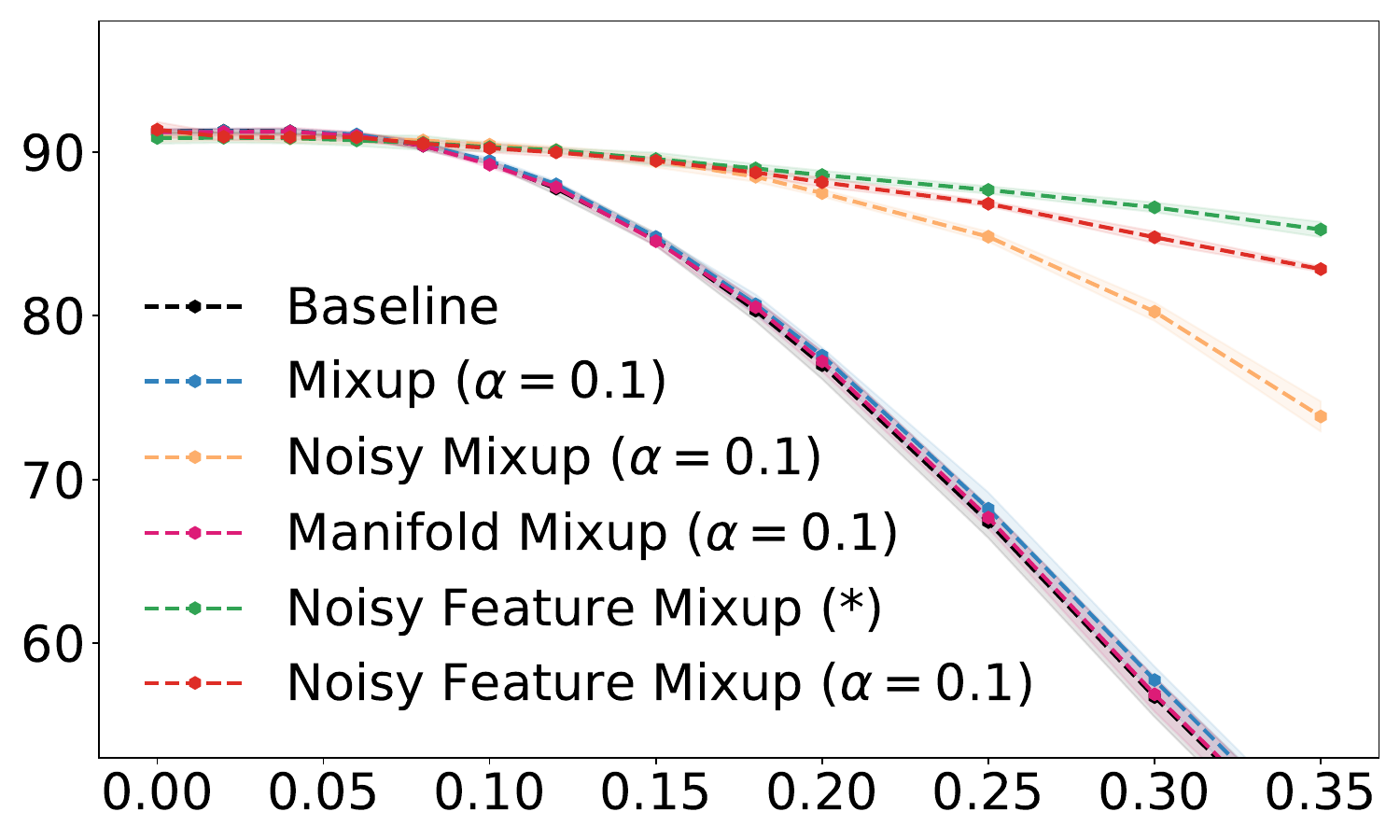}
			\put(-6,16){\rotatebox{90}{ Test Accuracy}}			
			\put(31,-3){ {White Noise  ($\sigma$)}}  	
		\end{overpic}		
	\end{subfigure}
	~
	\begin{subfigure}[t]{0.48\textwidth}
		\centering
		\begin{overpic}[width=1\textwidth]{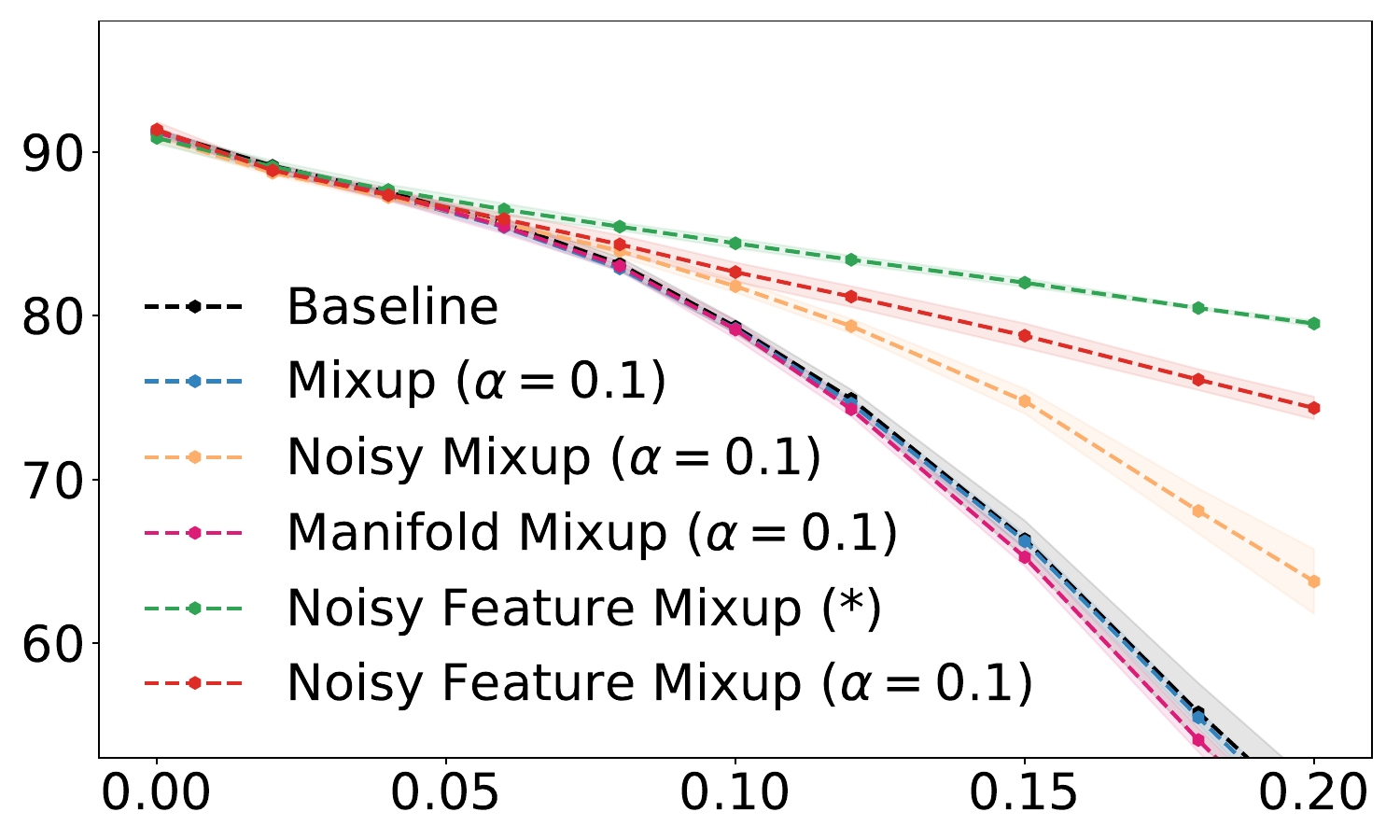} 
			\put(31,-3){ {Salt and Pepper Noise  ($\gamma$)}}  		
		\end{overpic}		
	\end{subfigure}	
	~	
	\vspace{+0.2cm}
	\caption{Vision transformers evaluated on CIFAR-10 with different training schemes.}
	\label{fig:cifar10_transformers}\vspace{-0.2cm}
\end{figure}

Table~\ref{tab:cifar10-vit} shows results for vision transformers trained with different data augmentation schemes and different values of $\alpha$. It can be seen that NFM with $\alpha=0.1$ helps to improve the predictive accuracy on clean data while also improving the robustness of the models. For example, the model trained with NFM shows about  a $25\%$ improvement compared to the baseline model when faced with salt and paper perturbations ($\gamma=0.2$).  Further, our results indicate that larger values of $\alpha$ have a negative effect on the generalization performance of vision transformer. 

\begin{table*}[!t]
	\caption{Robustness of Wide-ResNet-18 w.r.t. white noise ($\sigma$) and salt and pepper ($\gamma$) perturbations evaluated on CIFAR-100. The results are averaged over 5 models trained with different seed values.}
	\label{tab:cifar10-vit}
	\centering
	\scalebox{0.83}{
		\begin{tabular}{l c c c c | c c c c c c}
			\toprule
			Scheme			                  & Clean (\%)  & \multicolumn{3}{c}{$\sigma$ (\%)} & \multicolumn{3}{c}{$\gamma$ (\%)} \\			
			&   & $0.1$ & $0.2$ & $0.3$ & $0.08$ & $0.12$ & $0.2$\\
			\midrule 
			Baseline                                               & 91.3 & 89.4 & 77.0 & 56.7		& 83.2 & 74.6 & 48.6  \\
			Mixup ($\alpha=0.1$)~\cite{zhang2017mixup}             & 91.2 & 89.5 & 77.6 & 57.7		& 82.9 & 74.6 & 48.6  \\
			Mixup ($\alpha=0.2$)~\cite{zhang2017mixup}             & 91.2 & 89.2 & 77.8 & 58.9		& 82.6 & 74.5 & 47.9  \\				
					
			Noisy Mixup ($\alpha=0.1$)~\cite{NEURIPS2020_44e76e99} & 90.9 & 90.4 & 87.5 & 80.2		& 84.0 & 79.4 & 63.8  \\	
			Noisy Mixup ($\alpha=0.2$)~\cite{NEURIPS2020_44e76e99} & 90.9 & 90.4 & 87.4 & 79.8		& 83.8 & 79.3 & 63.4  \\

			Manifold Mixup ($\alpha=0.1$)~\cite{verma2019manifold} & 91.2 & 89.2 & 77.2 & 56.9		& 83.0 & 74.3 & 47.1  \\				
			Manifold Mixup ($\alpha=1.0$)~\cite{verma2019manifold} & 90.2 & 88.4 & 76.0 & 55.1		& 81.3 & 71.4 & 42.7  \\	
			Manifold Mixup ($\alpha=2.0$)~\cite{verma2019manifold} & 89.0 & 87.0 & 74.3 & 53.7		& 79.8 & 70.3 & 41.9  \\	
											
			Noisy Feature Mixup ($\alpha=0.1$) 					  & {\bf 91.4} & {\bf 90.2} & {\bf 88.2} & {\bf 84.8} & {\bf 84.4} & {\bf 81.2} & {\bf 74.4}  \\		
			Noisy Feature Mixup ($\alpha=1.0$) 					  & 89.8 & 89.1 & 86.6 & 82.7		& 82.5 & 79.0 & 71.4  \\
			Noisy Feature Mixup ($\alpha=2.0$) 					  & 88.4 & 87.6 & 84.6 & 80.1		& 80.4 & 76.5 & 68.6  \\  \\																					
			\bottomrule
	\end{tabular}}
\end{table*} 

\subsection{Ablation Study}
\label{ablation}

In Table~\ref{tab:ablation} we provide a detailed ablation study where we vary several knobs. First, we can see that just injecting noise helps to improve robustness, but the test accuracy is only marginally improving. On the other hand, just mixing inputs and hidden features improves the testing performance of the model, but it does not significantly improve the robustness of a model. In contrast, the NFM scheme combines best of both worlds and shows that both accuracy and robustness can be increased. Varying the noise levels indicate that there is a trade-off between test accuracy on clean data and robustness to perturbations. We also vary the mixup parameter $\alpha$ to show that the good performance is consistent across a range of different values.

\begin{table*}[!h]
	\caption{Ablation study using  Wide-ResNet-18 trained and evaluated on CIFAR-100.}
	\label{tab:ablation}
	\centering
	\scalebox{0.78}{
		\begin{tabular}{cccc c c | c | c c  c | c  c  c c c c c c c}
			\toprule
			Mixup & Manifold & Noise Injections & $\alpha$ &  \multicolumn{2}{c|}{Noise Levels} & Clean (\%)  & \multicolumn{3}{c}{$\sigma$ (\%)} & \multicolumn{3}{c}{$\gamma$ (\%)} \\			
			& & &  & $\sigma_{add}$ & $\sigma_{mult}$ &  & $0.1$ & $0.25$ & $0.5$ & $0.06$ & $0.1$ & $0.15$\\
			\midrule 
			\xmark & \xmark & \xmark & - & 0 & 0 & 76.9 & 64.6 & 42.0 & 23.5		& 58.1 & 39.8 & 15.1  \\
			\xmark & \xmark & \cmark & - & 0.4 & 0.2 & 78.1 & 76.2 & 65.7 & 46.6 & 70.0 & 58.8 & 28.4 \\
			\midrule
			\cmark & \xmark & \xmark & $1$ & 0 & 0 & 80.3 & 72.5 & 54.0 & 33.4		& 62.5 & 43.8 & 16.2  \\
			\cmark & \xmark & \cmark & $1$ & 0.4 & 0.2 &78.9 & 78.6 & 66.6 & 46.7		& 66.6 & 53.4 & 25.9  \\	
			\midrule
			\cmark & \cmark & \xmark & $0.2$ & 0 & 0 & 79.7 & 70.6 & 46.6 & 25.3		& 62.1 & 43.0 & 15.2  \\		
			\cmark & \cmark & \xmark & $1$ & 0 & 0 & 79.7 & 70.5 & 45.0 & 23.8		& 62.1 & 42.8 & 14.8  \\		
			\cmark & \cmark & \xmark & $2$ & 0 & 0 & 79.2 & 69.3 & 43.8 & 23.0		& 62.8 & 44.2 & 16.0  \\		
			\midrule
			\cmark & \cmark & \cmark & $1$ & 0.1 & 0.1 & {\bf 81.0} & 76.2 & 56.6 & 36.4		& 66.8 & 49.7 & 21.4  \\
			
			\cmark & \cmark & \cmark & $0.2$ & 0.4 & 0.2 &  80.6 & 79.2 & 70.2 & 51.7		& 71.5 & 60.4 & 30.3  \\		
			\cmark & \cmark & \cmark & $1$ & 0.4 & 0.2 &  80.9 & {\bf 80.1} & { 72.1} & { 55.3}		
			& { 72.8} &  { 62.1} &  { 34.4}  \\
			\cmark & \cmark & \cmark & $2$ & 0.4 & 0.2 & 80.7 & 80.0 & 71.5 & 53.9		& 72.7 & { 62.7} & { 36.6} \\
			\cmark & \cmark & \cmark & $1$ & 0.8 & 0.4 & 80.3 &  {\bf80.1} &  {\bf75.5} &  {\bf 66.4}		&  {\bf 74.3} &  {\bf 66.5} &  {\bf 44.6} \\

			\bottomrule
	\end{tabular}}
\end{table*}

\subsection{Additional Results for ResNets with Higher Levels of Noise Injections}
\label{sec_highnoise}

In the experiments in Section~\ref{sec_experiment}, we considered models trained with NFM that use noise injection levels $\sigma_{add}=0.4$ and $\sigma_{mult}=0.2$, whereas the ablation model uses $\sigma_{add}=1.0$ and $\sigma_{mult}=0.5$. Here, we want to better illustrate the trade-off between accuracy and robustness. We saw that there exists a potential sweet-spot where we are able to improve both the predictive accuracy and the robustness of the model. However, if the primary aim is to push the robustness of the model, then we need to sacrifice some amount of accuracy.

Fig.~\ref{fig:cifar10_high} is illustrating this trade-off for pre-actived ResNet-18s trained on CIFAR-10. We can see that increased levels of noise injections considerably improve the robustness, while the accuracy on clean data points drops. In practice, the amount of noise injection that the user chooses depend on the situation. If robustness is critical, than higher noise levels can be used. If adversarial examples are the main concern, than other training strategies such as adversarial training might be favorable. However, the advantage of NFM over adversarial training is that (a) we have a more favorable trade-off between robustness and accuracy in the small noise regime, and (b) NFM is computationally inexpensive, when compared to most adversarial training schemes. This is further illustrated in the next section.

\begin{figure}[!b]
	\centering
	\begin{subfigure}[t]{0.48\textwidth}
		\centering
		\begin{overpic}[width=1\textwidth]{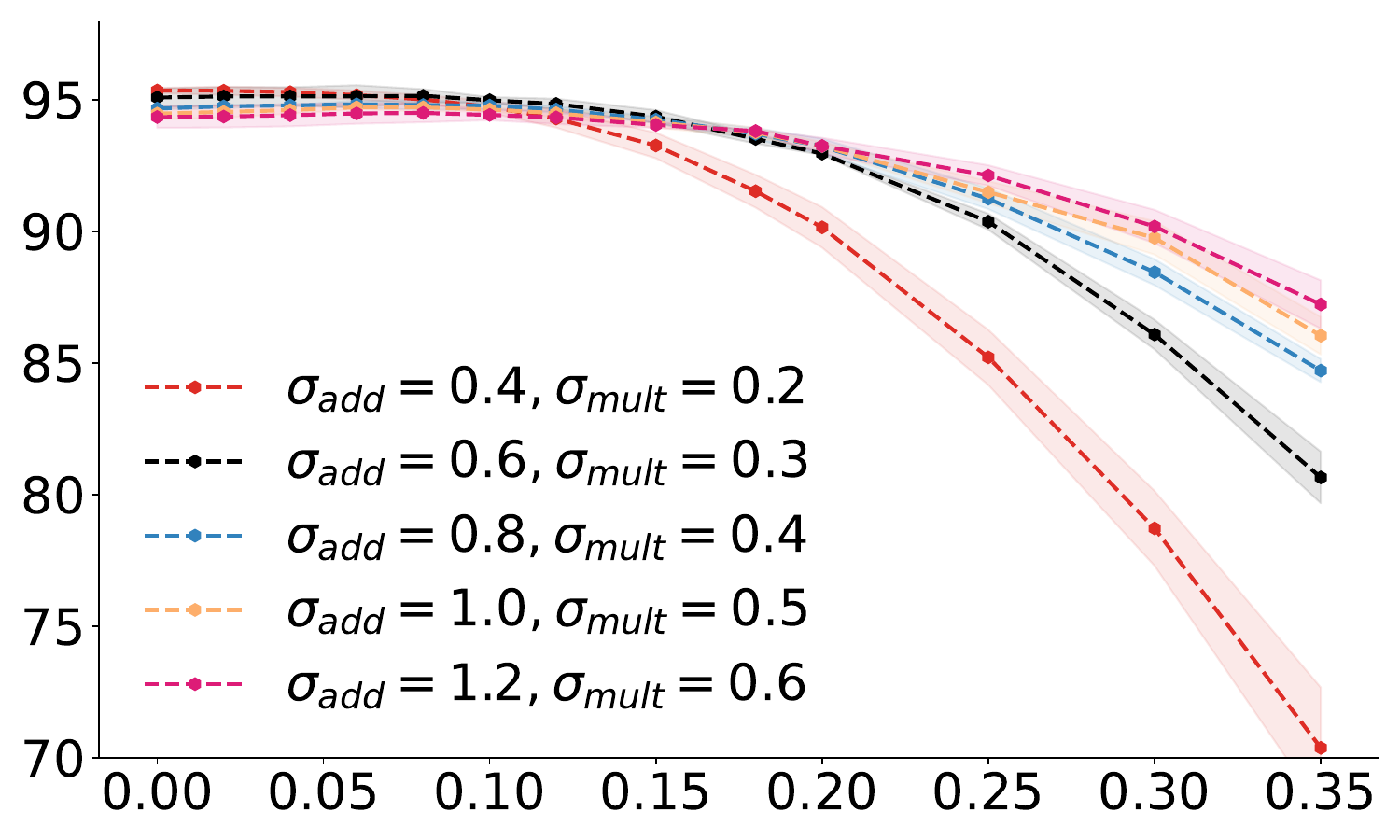}
			\put(-6,16){\rotatebox{90}{ Test Accuracy}}			
			\put(31,-3){ {White Noise ($\sigma$)}}  	
		\end{overpic}
	\end{subfigure}
	~
	\begin{subfigure}[t]{0.48\textwidth}
		\centering
		\begin{overpic}[width=1\textwidth]{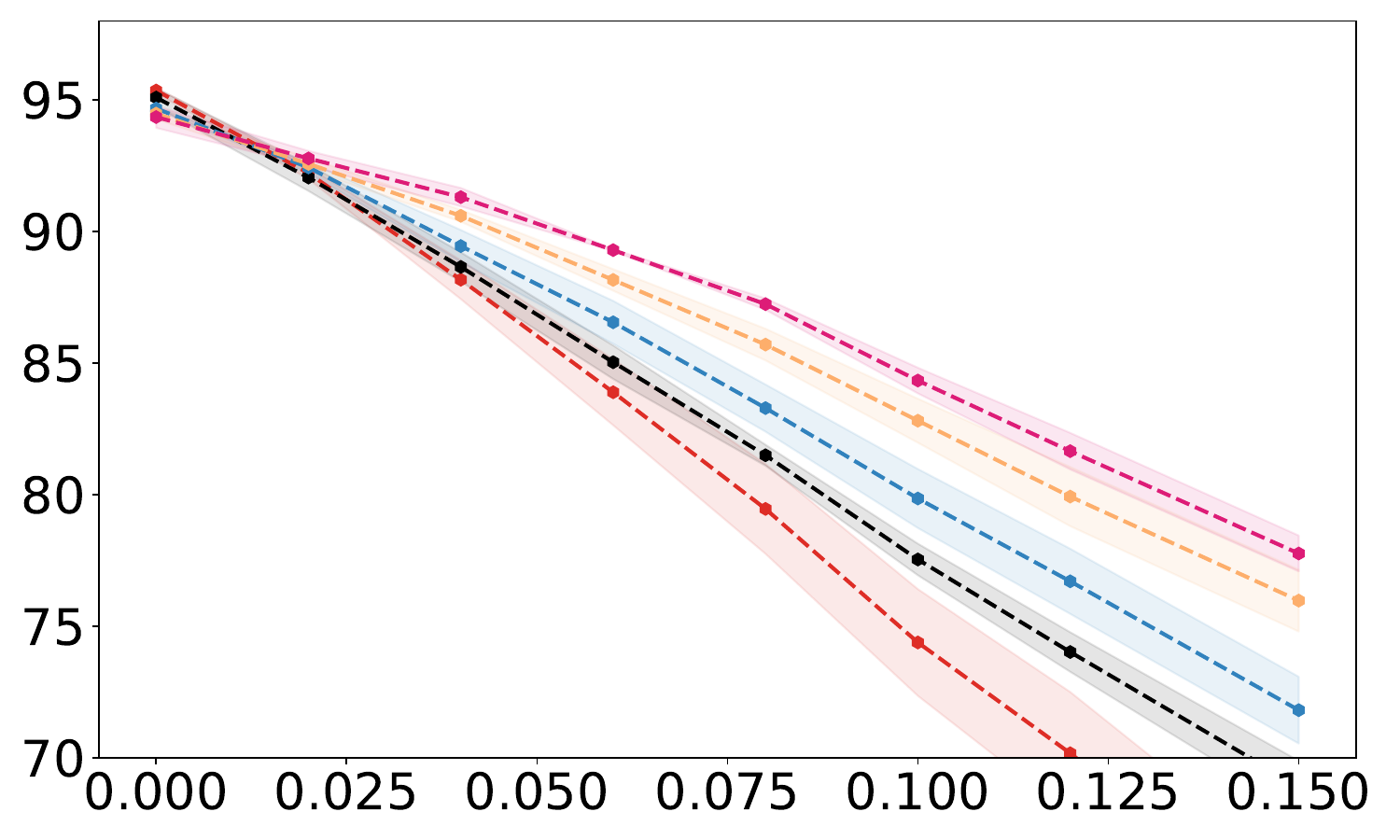} 
			\put(31,-3){ {Salt and Pepper Noise ($\gamma$)}}  		
		\end{overpic}			
	\end{subfigure}	
	
	\vspace{+0.2cm}
	\caption{Pre-actived ResNet-18 evaluated on CIFAR-10 trained with NFM and varying levels of additive ($\sigma_{add}$) and multiplicative ($\sigma_{mult}$) noise injections. Shaded regions indicate one standard deviation about the mean. Averaged across 5 random seeds.}
	\label{fig:cifar10_high}
\end{figure}

\subsection{Comparison with Adversarial Trained Models}
\label{sec_adv}

Here, we compare NFM to adversarial training in the small noise regime, i.e., the situation where models do not show a significant drop on the clean test set. Specifically, we consider the projected gradient decent (PGD) method~\cite{madry2017towards} using $7$ attack iterations and varying $l_2$ perturbation levels $\epsilon$ to train adversarial robust models.
First, we compare how resilient the different models are with respect to adversarial input perturbations during inference time (Fig.~\ref{fig:adv_trained}; left). Again the adversarial examples are constructed using the PGD method with $7$ attack iterations. Not very surprisingly, the adversarial trained model with $\epsilon=0.01$ features the best resilience while sacrificing about $0.5\%$ accuracy as compared to the baseline model (here not shown). In contrast, the models trained with NFM are less robust, while being about $1-1.5\%$ more accurate on clean data. 

Next, we compare in (Fig.~\ref{fig:adv_trained}; right) the robustness with respect to salt and pepper perturbations, i.e., perturbations that both models have not seen before. Interestingly, here we see an advantage of the NFM scheme with high noise injection levels as compared to the adversarial trained models.

\begin{figure}[!t]
	\centering
	\begin{subfigure}[t]{0.48\textwidth}
		\centering
		\begin{overpic}[width=1\textwidth]{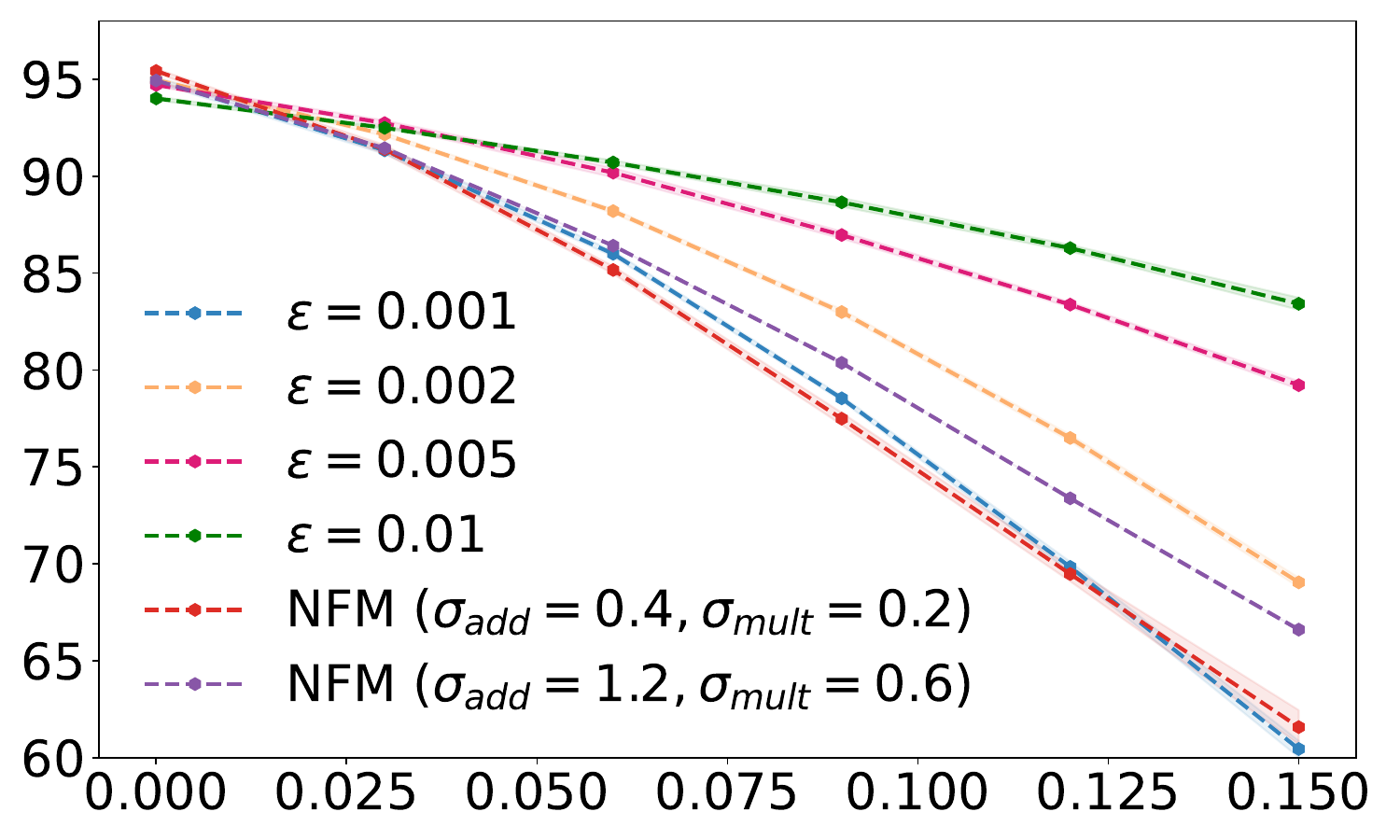}
			\put(-6,16){\rotatebox{90}{ Test Accuracy}}			
			\put(31,-3){ {Adverserial Noise ($\epsilon$)}}  	
		\end{overpic}
	\end{subfigure}
	~
	\begin{subfigure}[t]{0.48\textwidth}
		\centering
		\begin{overpic}[width=1\textwidth]{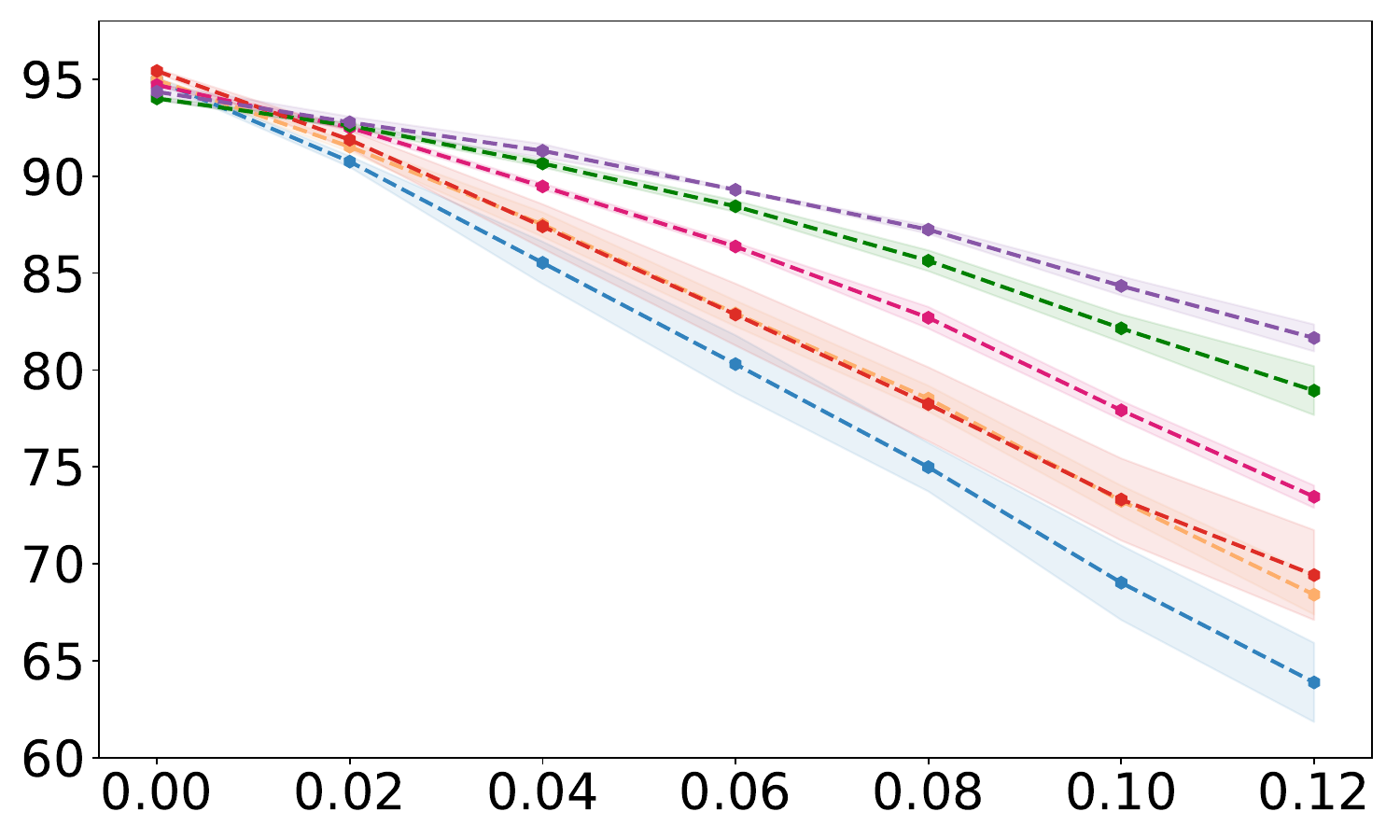} 
			\put(31,-3){ {Adverserial Noise ($\epsilon$)}}  
		\end{overpic}			
	\end{subfigure}	
	
	\vspace{+0.2cm}
	\caption{Pre-actived ResNet-18 evaluated on CIFAR-10 (left) and Wide ResNet-18 evaluated on CIFAR-100 (right) with respect to adversarial perturbed inputs. Shaded regions indicate one standard deviation about the mean. Averaged across 5 random seeds.}
	\label{fig:adv_trained}
\end{figure}

\subsection{Feature Visualization Comparison}
\label{feature_viz}

In this subsection, we concern ourselves with comparing the features learned by three ResNet-50 models trained on Restricted Imagenet \cite{tsipras2018robustness}: without mixup, manifold mixup \cite{verma2019manifold}, and NFM. We can compare features by maximizing randomly chosen pre-logit activations of each model with respect to the input, as described by \cite{engstrom2020adversarial}. We do so for all models with Projected Gradient Ascent over 200 iterations, a step size of 16, and an $\ell_2$ norm constraint of 2,000. Both the models trained with manifold mixup and NFM use an $\alpha=0.2$, and the NFM model uses in addition $\sigma_{add}=2.4$ and $\sigma_{mult}=1.2$. The result, as shown in Fig. \ref{fig:featureviz}, is that the features learned by the model trained with NFM are slightly stronger (i.e., different from random noise) than the clean model.

\begin{figure}[!t]
	\centering
	\begin{subfigure}[t]{1.\textwidth}
		\centering
		\begin{overpic}[width=1\textwidth]{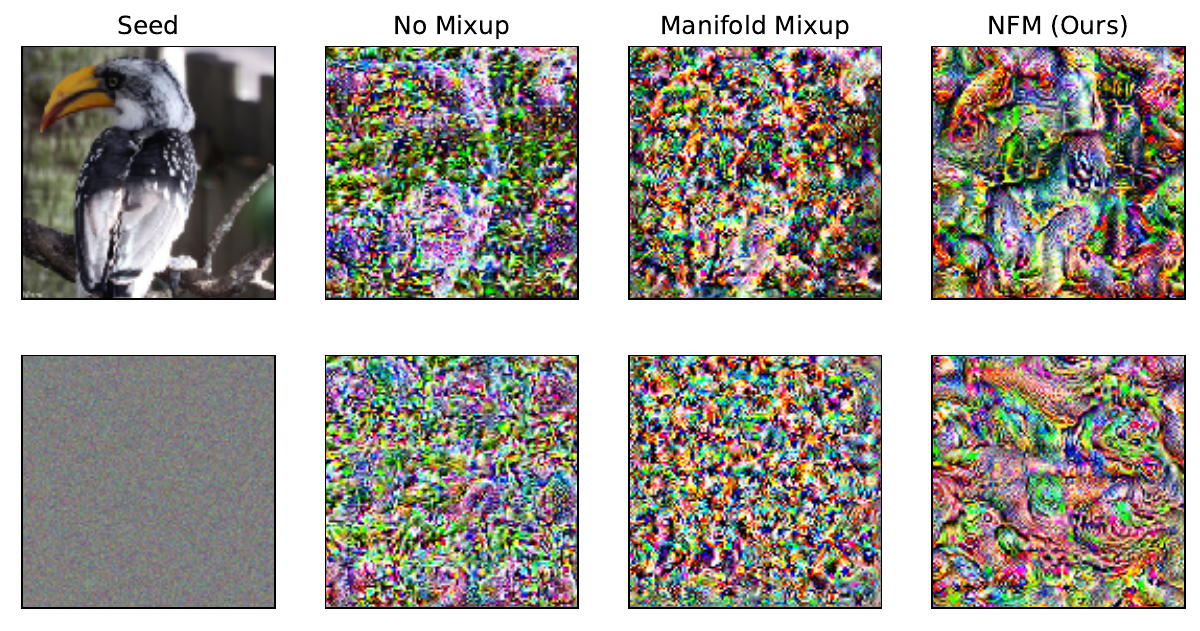} 
		\end{overpic}\vspace{+0.2cm}			
	\end{subfigure}	
	\caption{The features learned by the NFM classifier are  slightly stronger (i.e., different from random noise) than the clean model. See Subsection \ref{feature_viz} for more details.}
	\label{fig:featureviz}
\end{figure}

\subsection{Train and Test Error for CIFAR-100}

Figure~\ref{fig:convergence} shows models trained with different training schemes on CIFAR-100. Compared to the baseline model, the models trained with manifold mixup and NFM have a similar convergence behavior. However, they are able to achieve a smaller test error. This shows that both  manifold mixup and NFM have a favorable implicit regularization effect, where the effect is more pronounced for the NFM scheme.

\begin{figure}[t]
	\centering
	\begin{subfigure}[t]{0.48\textwidth}
		\centering
		\begin{overpic}[width=1\textwidth]{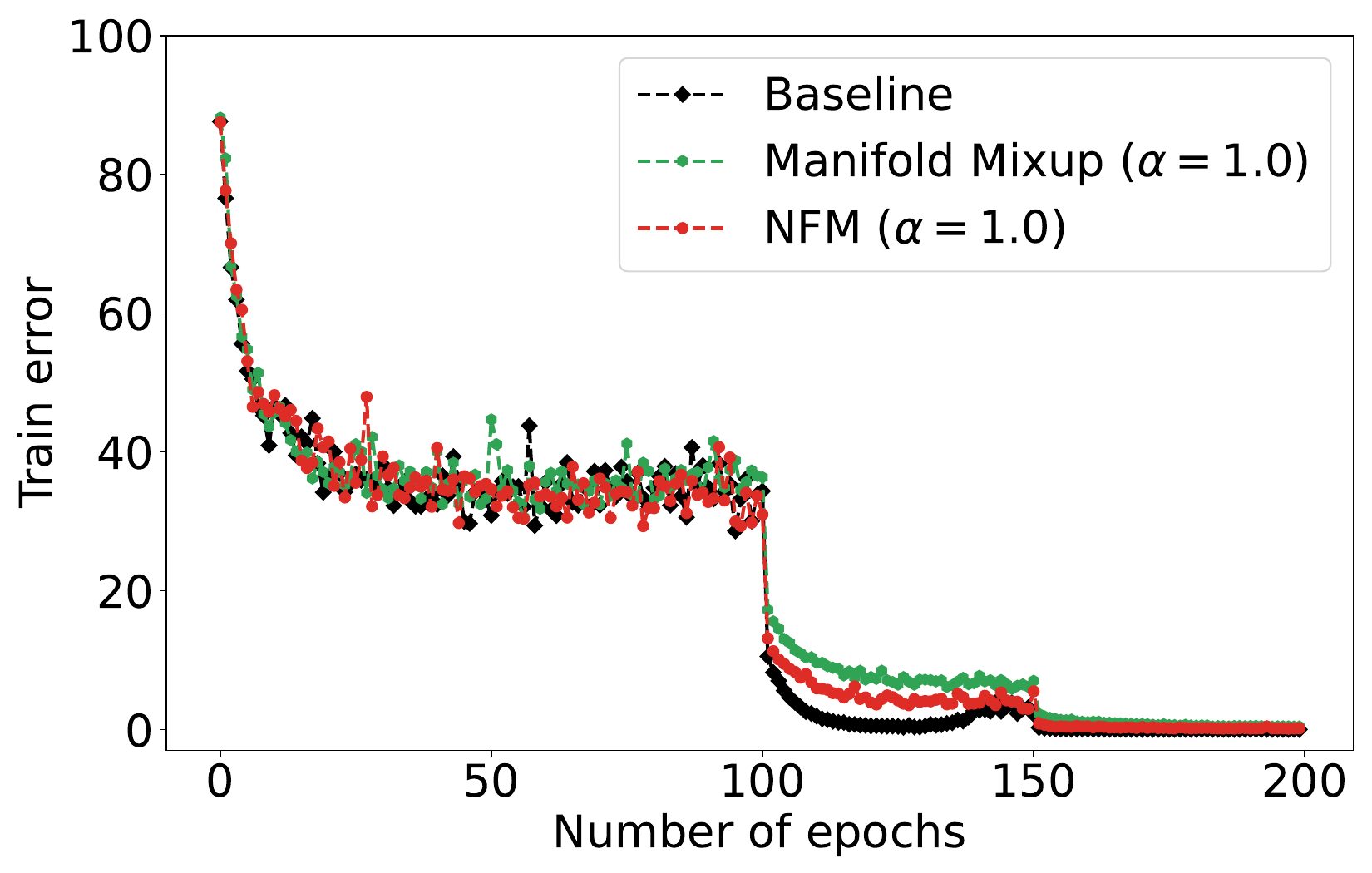}
		\end{overpic}
		 		\caption{Train error.}
	\end{subfigure}
	~
	\begin{subfigure}[t]{0.48\textwidth}
		\centering
		\begin{overpic}[width=1\textwidth]{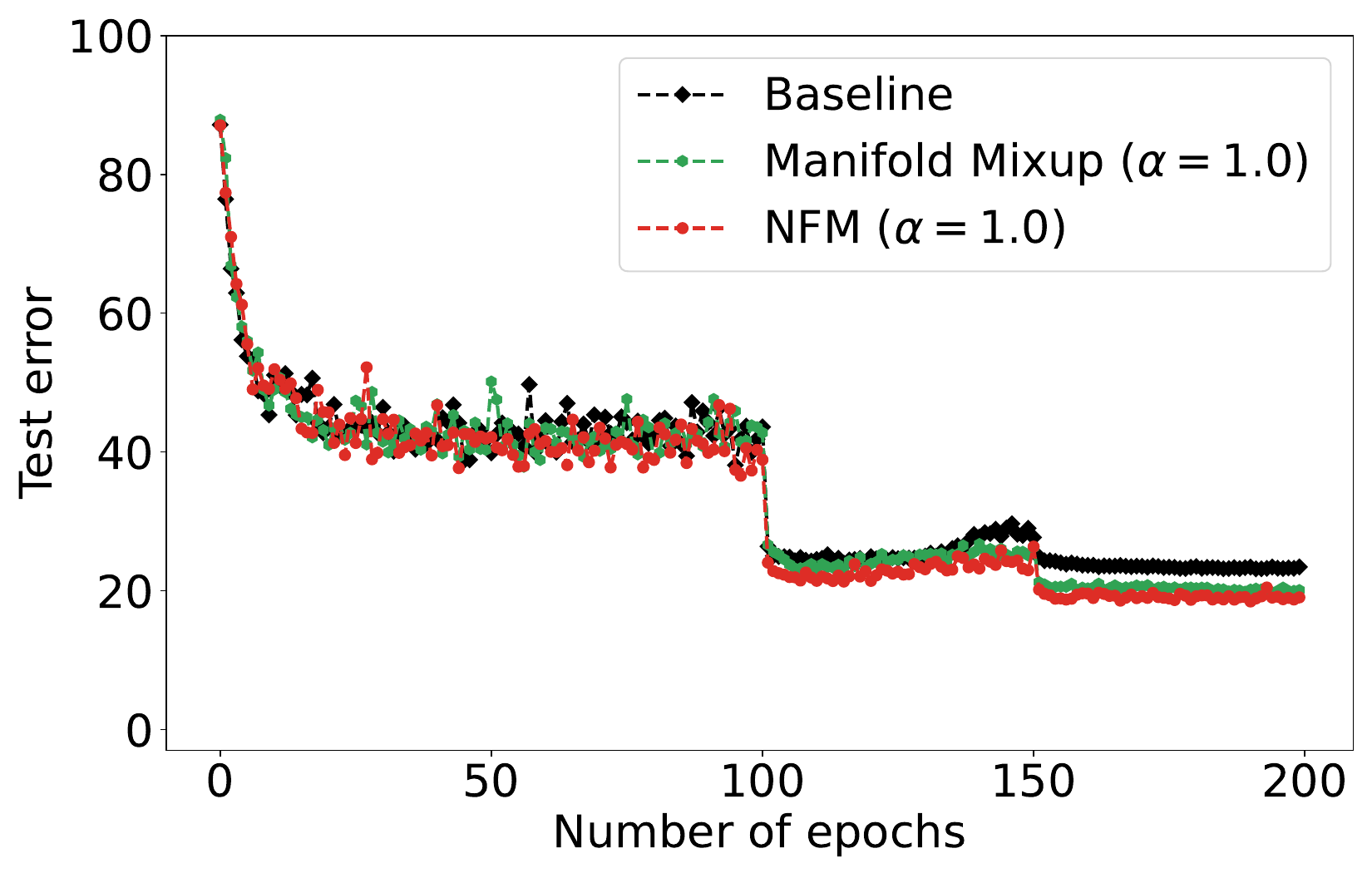} 
		\end{overpic}			
		 		\caption{Test error.}
	\end{subfigure}	
	
	\vspace{+0.2cm}
	\caption{Train (a) and test (b) error for a pre-actived Wide-ResNet-18 trained on CIFAR-100.}
	\label{fig:convergence}
\end{figure}

\end{document}